\title{Optimal inference schedules for masked diffusion models}
\author{
    Sitan Chen\thanks{Email: \texttt{sitan@seas.harvard.edu}. Work supported in part by NSF CAREER Award CCF-2441635.} \\
    Harvard
        \and 
    Kevin Cong \thanks{Email: \texttt{kcong@college.harvard.edu}} \\
    Harvard
        \and
    Jerry Li \thanks{Email: \texttt{jerryzli@cs.washington.edu}} \\
    UW}
\newcommand{\DTC}{\mathrm{DTC}}
\newcommand{\TC}{\mathrm{TC}}
\newcommand{\TV}{\mathsf{TV}}
\newcommand{\costTV}{\mathsf{cost}^{\sf TV}}
\newcommand{\costKL}{\mathsf{cost}^{\sf KL}}
\newcommand{\leaves}{\mathsf{leaf}}
\newcommand{\subspace}{\mathcal{V}}
\newcommand{\node}{N}
\renewcommand{\vec}{\mathbf}
\begin{document}
\maketitle
\thispagestyle{empty}
\pagenumbering{gobble}

\begin{abstract}
    A major bottleneck of standard auto-regressive large language models is that their inference process is inherently sequential, resulting in very long and costly inference times.
    To circumvent this, practitioners proposed a class of language models called \emph{diffusion language models}, of which the \emph{masked diffusion model} (MDM) is the most successful. The MDM is able to sample tokens out-of-order and, ostensibly, many tokens at once and in parallel.
    However, there is very limited rigorous understanding of how much parallel sampling these models can perform without noticeable degradation in their sampling performance.
    Prior work of Li and Cai~\cite{li2025convergence} obtained some preliminary bounds, but these are not tight for many natural classes of distributions.
    In this work, we give a new, \emph{exact} characterization of the expected divergence between the true distribution and the sampled distribution, for any distribution and any unmasking schedule for the sampler, showing an elegant connection to the theory of \emph{univariate function approximation}. 
    
    By leveraging this connection, we then attain a number of novel lower and upper bounds for this problem. 
    While the connection to function approximation in principle gives the optimal unmasking schedule for any distribution, we show that it is in general impossible to compete with it without strong \emph{a priori} knowledge of the distribution, even in seemingly benign settings. However, we also demonstrate new upper bounds and new sampling schedules in terms of well-studied information-theoretic properties of the base distribution, namely, its \emph{total correlation} and \emph{dual total correlation}, which show that in some natural settings, one can sample in $O(\log n)$ steps without any visible loss in performance, where $n$ is the total sequence length.
\end{abstract}
\newpage

\tableofcontents
\thispagestyle{empty}

\newpage
\pagenumbering{arabic}

\section{Introduction}\label{intro}

Diffusion models are the state-of-the-art approach to generative modeling over domains like video and molecules, and in recent years have also emerged as a powerful alternative~\cite{sahoo2024simple,shi2024simplified,nie2025large,khanna2025mercury} to autoregressive large language models (LLMs). Abstractly, these models perform distribution learning by learning to reverse a corruption process transforming data into noise. Starting from fresh noise, they apply the learned reverse process to map it into a fresh sample from the data distribution.

In state-of-the-art diffusion language models, the most common choice of corruption process is the \emph{erasure process}. This is the basis for the popular paradigm of \emph{masked diffusion models (MDMs)}, which now form the backbone of leading approaches to non-autoregressive language modeling. The erasure process proceeds as follows: starting from a sample $X^0 = x^0 \in \Sigma^n$ at time $t = 0$, draw times $T_1,\ldots,T_n$ from some measure on $[0,1]$, and define $X^t_i = x^0_i$ if $t \le T_i$ and $X^t_i = *$ otherwise, where ``$*$'' is a special character corresponding to erasure.
Given samples from data distribution $\mu$, one then trains a neural network to learn \emph{conditional marginals}: for every time $t$ and conditioning $X^t = x^t$, estimate $\mathrm{law}(X^0_i \mid X^t = x^t)$ for all $i \in [n]$. It is readily seen that this is equivalent to learning the conditional marginals 
\begin{equation}
    \mathrm{law}(X_i \mid X_S = z)\,, \qquad S\subseteq[n]\,, i\not\in S\,,
\end{equation}
where $X \sim \mu$, $z\in \Sigma^{|S|}$, and $X_S = z$ denotes the partial assignment to the indices given by $S$. Given these marginals, it is straightforward to sample from $\mu$ by sampling one token at a time. Unlike LLMs which sample one token at a time from left to right, MDMs can sample \emph{out of order}.

\vspace{0.3em}

\noindent \textbf{Sampling multiple tokens at a time.} Crucially in practice, the neural network that is trained to learn these conditional marginals can, given any such partial assignment $X_S = z$, simultaneously compute the conditional marginals for all $i\not\in S$ \emph{in one network evaluation}. One of the key selling points of MDMs is thus that these models have the freedom to sample \emph{multiple tokens} at a time in parallel, whereas LLMs are inherently limited to sequential sampling. Empirically, a standard heuristic is the following: fix an \emph{unmasking schedule} given by \emph{step sizes} $s_1, s_2, \ldots, s_k$ summing to $n$, and iterate the following for $t = 1,\ldots,k$:
\begin{itemize}[itemsep=0pt,topsep=0.5em]
    \item Sample a random subset $S_t$ of size $s_t$ from among the indices $[n]\backslash (S_1\cup\cdots\cup S_{t-1})$
    \item For every $i\in S_t$, sample $x_i$ \emph{independently} from $\mathrm{law}(X_i \mid X_{S_1\cup\cdots\cup S_{t-1}} = x_{S_1\cup\cdots\cup S_{t-1}})$\footnote{Of course, in reality we never have exact access to the conditional marginal, but in our theoretical analysis it is straightforward to decouple this error from the overall sampling error; see Appendix~\ref{app:decoupling}.} -- note that this ignores correlations across $S_t$ and thus introduces statistical error.
\end{itemize}
The goal is to make $k$ as small as possible while keeping the statistical error small.  If $k = n$ and $s_1 = \cdots = s_n = 1$, then this will perfectly sample from the data distribution $\mu$, but it is no more efficient than sampling with an autoregressive model. On the other hand, if $k = 1$ and $s_1 = n$, this will sample in one step but output the product distribution whose 1-wise marginals agree with $\mu$, which in general will not be a good approximation to $\mu$. 

In real-world deployments of MDMs, there is an art to picking the unmasking schedule to trade off between these extremes, giving rise to popular heuristics like the \emph{cosine schedule}~\cite{chang2022maskgit,shi2024simplified} and the \emph{log-linear schedule}~\cite{lou2024discrete,sahoo2024simple} in which $s_1,s_2,\ldots$ start out small and progressively increase. However, our understanding of how to pick these schedules, and how to rigorously quantify the statistical errors that arise from sampling multiple tokens in parallel, remains limited. In this work we therefore ask:
\begin{center}
    \emph{What is the optimal unmasking schedule for a given data distribution $\mu$ and target level of error?}
\end{center}

\noindent This is a challenge not just for theory, but for practice. Indeed, a large-scale ML benchmark~\cite{kang2025parallelbench} was released just weeks ago in an effort to systematically evaluate unmasking schedules for diffusion language models. But as we will see, this is a question that is particularly amenable to the lens of theory.

\subsection{Result 1: Optimal unmasking schedule}

Our first result is a tight and surprisingly simple theoretical characterization of the optimal unmasking schedule for any $\mu$. The result exposes an elegant connection to \emph{univariate function approximation}. To state the result, we first require some terminology.

\begin{definition}[Expected KL error]\label{def:expectedKL}
    Conditioned on a sequence of subsets $S_1,\ldots,S_k$ of sizes $s_1,\ldots,s_k$, let $\nu^{S_1,\ldots,S_K}$ denote the distribution over outputs $x$ generated by the sampling algorithm above. The notion of error we will consider in this work is the \emph{expected KL error}
    \begin{equation}
        \E[S_1,\ldots,S_k]{\KL{\mu}{\nu^{S_1,\ldots,S_k}}}\,,
    \end{equation}
    where the expectation is over subsets $S_i$ of size $s_i$ sampled according to the algorithm above.
\end{definition}

\begin{definition}[Left Riemann approximation]
    Given $\bZ = (Z_1,\ldots,Z_n)\in\R_{\ge 0}^n$ and nodes $1 = N_1 < \cdots < N_k < n$, define the \emph{left Riemann approximation} of $\bZ$ to be the \emph{$k$-step sequence $Z^{\vec{\node}}_1,\ldots, Z^{\vec{\node}}_n$} given by:
    \begin{equation}
        Z^{\vec{\node}}_j = \begin{cases}
            Z_{\node_a} & \text{if} \ \node_a \le j < \node_{a+1} \\
            Z_{\node_k} & \text{if} \ j \ge \node_k
        \end{cases}
    \end{equation}
    Given any sequences $\bZ = (Z_1,\ldots,Z_n)$ and $\bZ' = (Z'_1,\ldots,Z'_n)$, we can define the \emph{integration error} $\norm{\bZ - \bZ'}_{L^1} \triangleq \sum^n_{j=1} |Z_j - Z'_j|$. The $k$-step left Riemann approximation to $\bZ$ minimizing this integration error is:
    \begin{equation}
        \vec{\node}^{*,k} \triangleq \mathop{\mathrm{argmin}}_{1 = \node_1 < \cdots < \node_k < n} \norm{\bZ - \bZ^{\vec{\node}}}_{L^1}\,. \label{eq:bestkstep}
    \end{equation}
    Note that given $\bZ$, one can efficiently find the minimizing $\vec{\node}^{*,k}$ in polynomial time via dynamic programming.
\end{definition}

\noindent The central object in this work is the following sequence quantifying correlations within $\mu$:

\begin{definition}[Average mutual information curve]
    Given a random variable $X\sim \mu$ over $\Sigma^n$, define its \emph{information curve}, denoted $\bZ = \bZ(\mu)$ by
     \begin{equation}
        Z_j = Z_j (\mu) \triangleq \E[|S| = j-1, i\not\in S]{I(X_i; X_S)}\,, \qquad j\in[n]\,,
    \end{equation}
    i.e. the average mutual information between $X_i$ and $X_S$ for random $S\subseteq[n]$ of size $j-1$ and random $i\not\in S$. 
    
    By Han's inequality~\cite[Theorem 1.7]{wupolyanskiy}, we have that $0 = Z_1 \le Z_2 \le \cdots Z_n$.
\end{definition}

\noindent Our first main result is an exact characterization of the optimal expected KL error achievable by any $k$-step sampler, in terms of the \emph{piecewise approximability of the distribution's information curve}: 

\begin{theorem}[Optimal schedule given by best step approximation]\label{thm:main}
    Let $\mu$ be any distribution over $\Sigma^n$, and let $1 \le k \le n$. Let $\vec{\node}^{*,k}$ be the solution to Eq.~\eqref{eq:bestkstep} for $\mu$'s information curve $\bZ = \bZ(\mu)$.
    
    Then for any unmasking schedule $s_1,\ldots,s_k$, the expected KL error is given by
    \begin{equation}
        \E[S_1,\ldots,S_k]{\KL{\mu}{\nu^{S_1,\ldots,S_k}}} = \norm{\bZ - \bZ^{\vec{\node}}}_{L^1}\,,\qquad \text{for} \ \ \ \node_a \triangleq 1 + \sum^{a-1}_{t=1} s_t\ \ \  \forall \ a \in [k]\,.
    \end{equation}
    In particular, the schedule that minimizes the expected KL error is 
    \begin{equation}
        s_t = \node^{*,k}_{t+1} - \node^{*,k}_t\,, \qquad t\in [k]\,.
    \end{equation}
\end{theorem}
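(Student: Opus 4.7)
\textbf{Proof plan for Theorem~\ref{thm:main}.}

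The plan is to decompose the expected KL into per-step contributions via the chain rule, expand each contribution as a telescoping sum of mutual informations via a random autoregressive ordering inside each block $S_t$, and finally invoke the exchangeability of uniformly random subsets to rewrite everything in terms of the sequence $\bZ$.

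First, I would fix any schedule $s_1, \dots, s_k$ and any realization of $S_1, \dots, S_k$, and set $T_t = S_1 \cup \cdots \cup S_t$ with $T_0 = \emptyset$, so that $|T_{t-1}| = \node_t - 1$. Both $\mu$ and $\nu^{S_1,\ldots,S_k}$ factorize blockwise along $S_1, \dots, S_k$: $\mu$ as a product of true conditionals $\mu(x_{S_t} \mid x_{T_{t-1}})$, and $\nu$ as a product of the corresponding mean-field approximations $\prod_{i \in S_t} \mu(x_i \mid x_{T_{t-1}})$. The chain rule for KL therefore yields
\begin{equation}
    \KL{\mu}{\nu^{S_1,\ldots,S_k}} = \sum_{t=1}^{k} \E_{X\sim \mu}\bigl[\TC(X_{S_t} \mid X_{T_{t-1}})\bigr],
\end{equation}
where $\TC(\cdot\mid\cdot)$ denotes the conditional total correlation.

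Second, I would expand the conditional total correlation via an arbitrary ordering of $S_t$. For any permutation $\pi$ of $S_t$, writing out individual and joint entropies and telescoping yields
\begin{equation}
    \TC(X_{S_t} \mid X_{T_{t-1}}) = \sum_{j=1}^{s_t} I\bigl(X_{\pi(j)}; X_{\pi(<j)} \,\big|\, X_{T_{t-1}}\bigr),
\end{equation}
since the terms $H(X_{\pi(j)} \mid X_{T_{t-1}}, X_{\pi(<j)})$ telescope to $H(X_{S_t} \mid X_{T_{t-1}})$. Using $I(A;B\mid C) = I(A;B,C) - I(A;C)$, each summand equals $I(X_{\pi(j)}; X_{T_{t-1} \cup \pi(<j)}) - I(X_{\pi(j)}; X_{T_{t-1}})$.

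Third, I would take expectations. Conditional on $|T_{t-1}| = \node_t - 1$ and on $S_t$ being a uniform subset of size $s_t$ disjoint from $T_{t-1}$, averaging over a uniformly random $\pi$ makes the pair $(T_{t-1} \cup \pi(<j), \pi(j))$ distributed exactly as a uniformly random subset of size $\node_t + j - 2$ together with a uniformly random index outside it. By the definition of the information curve this gives
\begin{equation}
    \E_{S_1,\dots,S_t}\E_{\pi}\bigl[I(X_{\pi(j)}; X_{\pi(<j)}\mid X_{T_{t-1}})\bigr] = Z_{\node_t + j - 1} - Z_{\node_t}.
\end{equation}
Summing over $j = 1, \dots, s_t$ and then over $t = 1, \dots, k$, with $\node_{k+1} := n+1$, yields
\begin{equation}
    \E_{S_1,\dots,S_k}[\KL{\mu}{\nu^{S_1,\dots,S_k}}] = \sum_{t=1}^{k} \sum_{j=\node_t}^{\node_{t+1}-1} (Z_j - Z_{\node_t}) = \sum_{j=1}^{n} (Z_j - Z_j^{\vec{\node}}),
\end{equation}
which is $\norm{\bZ - \bZ^{\vec{\node}}}_{L^1}$ once we note $Z_j \ge Z_j^{\vec{\node}}$ by Han's inequality. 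The second claim of the theorem is then immediate from the definition of $\vec{\node}^{*,k}$.

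The main obstacle I anticipate is the exchangeability computation in the third step: making precise that the marginal distribution of $(T_{t-1}\cup \pi(<j), \pi(j))$ under the sampler's randomness coincides with the uniform distribution defining $Z_{\node_t + j - 1}$. Everything else is bookkeeping and telescoping; the correct use of the identity $I(A;B\mid C) = I(A;B,C) - I(A;C)$ to collapse two sizes of subsets into one is the substantive step that unlocks the connection to $\bZ$.
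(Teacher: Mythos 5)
Your plan is correct and, at the top level, follows the same strategy as the paper: both decompose the KL into per-stage conditional total correlations via the chain rule and then exploit the exchangeability of uniformly random subsets to express everything in terms of the information curve. The genuine difference is in how the conditional $\TC(X_{S_t}\mid X_{T_{t-1}})$ is unpacked. The paper writes it directly as $\sum_{j\in S_t} H(X_j\mid X_{T_{t-1}}) - H(X_{S_t}\mid X_{T_{t-1}})$, expands these into joint entropies of the sets $T_{t-1}$, $T_{t-1}\cup\{j\}$, and $S_t\sqcup T_{t-1}$, averages to obtain the average entropy curve $H_m$, and only at the very end converts to $\bZ$ via $Z_m = H_1 + H_{m-1} - H_m$. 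You instead telescope the conditional $\TC$ over a uniformly random ordering $\pi$ of the block as $\sum_j I(X_{\pi(j)};X_{\pi(<j)}\mid X_{T_{t-1}})$, apply $I(A;B\mid C) = I(A;B,C) - I(A;C)$, and match each resulting unconditional mutual information directly to an entry of $\bZ$. Your route bypasses the auxiliary $H_m$ curve entirely and speaks the language of the theorem statement from the start; the paper's route avoids the random-permutation device and works only with entropies of unordered sets, which some readers may find more elementary. The exchangeability step you flagged as the main potential obstacle does go through: given $T_{t-1}$, sampling $S_t$ uniformly of size $s_t$ from $[n]\setminus T_{t-1}$ and then a uniform $\pi$ on $S_t$ is jointly equivalent to drawing $(\pi(1),\dots,\pi(s_t))$ as a uniformly random ordered tuple of distinct elements of $[n]\setminus T_{t-1}$, so the pair $\bigl(T_{t-1}\cup\{\pi(1),\dots,\pi(j-1)\},\ \pi(j)\bigr)$ is exactly a uniform subset of size $\node_t + j - 2$ together with a uniform element outside it, which is precisely the distribution defining $Z_{\node_t+j-1}$; this one-line verification should appear in a full write-up. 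Both routes arrive at $\sum_{t}\sum_{j=\node_t}^{\node_{t+1}-1}(Z_j - Z_{\node_t}) = \|\bZ - \bZ^{\vec{\node}}\|_{L^1}$, with the absolute value dropped because $\bZ$ is nondecreasing.
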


\noindent In  Figure~\ref{pictorial_rep}, we give a pictorial depiction of the expected error in Theorem~\ref{thm:main}.
\begin{figure}[h]
\centering
\begin{tikzpicture}[scale=0.85, >=stealth]

\draw[->] (1,0) -- (15,0) node[right] {$j$};
\draw[->] (1,0) -- (1,6.5);
\node[left] at (1,0) {$0$};
\node[above] at (1,6.4) {\large $Z_j$}; 

\def\Z{{0.0,0.9,1.9,2.8,3.4,3.8,4.3,4.7,5.2,5.7,6.1,6.4,6.7}}

\begin{scope}
  \foreach \j in {2,3,4,5}{
    \filldraw[fill=blue!8, draw=gray!30, line width=0.2pt]
      (\j-1,{\Z[0]}) rectangle (\j,{\Z[\j-2]});
  }
  \foreach \j in {7,8,9,10}{
    \filldraw[fill=blue!8, draw=gray!30, line width=0.2pt]
      (\j-1,{\Z[4]}) rectangle (\j,{\Z[\j-2]});
  }
  \filldraw[fill=blue!8, draw=gray!30, line width=0.2pt]
    (11,{\Z[9]}) rectangle (12,{\Z[10]});
  \filldraw[fill=blue!8, draw=gray!30, line width=0.2pt]
    (13,{\Z[11]}) rectangle (14,{\Z[12]});
\end{scope}

\foreach \i [evaluate=\i as \ip1 using \i+1] in {1,...,12}{
  \draw[blue, thick] (\i,{\Z[\i-1]}) -- (\ip1,{\Z[\ip1-1]});
}
\foreach \i in {1,...,13}{
  \filldraw[blue] (\i,{\Z[\i-1]}) circle (1.8pt);
  \draw[gray!25, dashed] (\i,0) -- (\i,{\Z[\i-1]});
}

\draw[ultra thick, red!80!black]
  (1,{\Z[0]}) -- (5,{\Z[0]})
  (5,{\Z[4]}) -- (10,{\Z[4]})
  (10,{\Z[9]}) -- (12,{\Z[9]})
  (12,{\Z[11]}) -- (14,{\Z[11]});

\foreach \p/\z in {5/{\Z[0]},10/{\Z[4]},12/{\Z[9]}}{
  \draw[red!80!black, line width=0.8pt, fill=white] (\p,{\z}) circle (3pt);
}

\foreach \x in {1,5,10,12,14}{
  \draw[gray, dashed] (\x,0) -- (\x,6.6);
}

\node[red!70!black, below left] at (3.5,3.0) {$s_1 = 4$}; 
\node[red!70!black, below] at (7.5,3.0) {$s_2 = 5$};
\node[red!70!black, below] at (11.0,3.0) {$s_3 = 2$};
\node[red!70!black, below] at (13.0,3.0) {$s_4 = 2$};

\node[blue, above left] at (5,{\Z[4]}) {$Z_5$};
\node[blue, above left] at (10,{\Z[9]}) {$Z_{10}$};
\node[blue, above left] at (12,{\Z[11]}) {$Z_{12}$};
\node[blue, above left] at (13,{\Z[12]}) {$Z_{13}$};

\node[blue, right] at (14,{\Z[12]}) {$Z$};
\node[red!80!black, right] at (14,{\Z[11]-0.08}) {$Z^N$};

\foreach \i in {1,...,12}{
  \draw[gray!60] (\i,0) -- (\i,-0.1);
  \node[below, scale=0.8] at (\i,-0.1) {\scriptsize $\i$};
}
\node[below, scale=0.8] at (13,-0.1) {\scriptsize $n = 13$};

\end{tikzpicture}
\caption{Discrete curve $\mathbf Z$ (blue) and left Riemann approximation $\mathbf Z^{\mathbf N}$ (red) for a sample $Z_i$ curve. The latter extends beyond the $Z_j$ curve to $n+1$ to show the final rectangle $Z_n - Z_{N_{k-1} + 1}$; note that this term is not present in a standard left Riemann approximation. Light blue background rectangles represent the Riemann approximation terms. The total area is $\|\bZ - \bZ^N\|_{L^1}$.}

\label{pictorial_rep}
\end{figure}
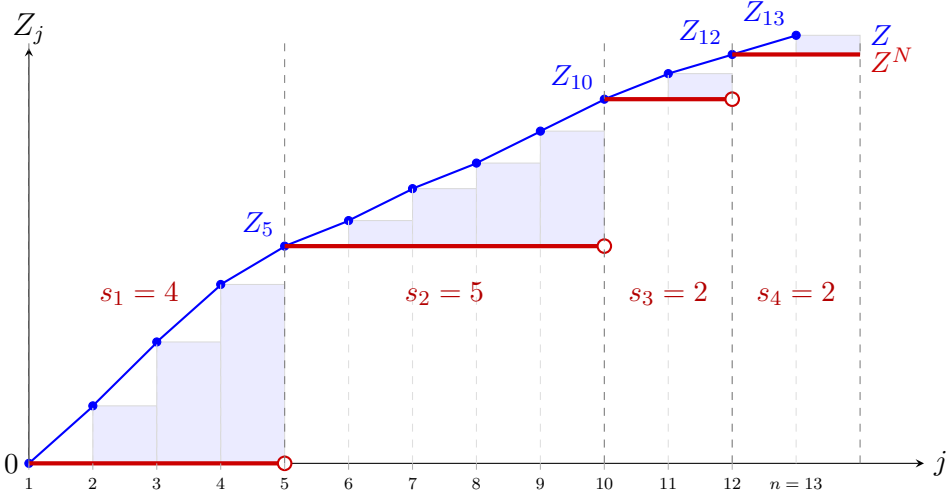

\noindent The proof of Theorem~\ref{thm:main} is remarkably simple once one realizes that the key object driving the statistical error of MDM sampling is the information curve of $\mu$, and we therefore regard the main technical contributions of this result as identifying the correct information-theoretic object to study as well as drawing the surprising connection to univariate function approximation.

\subsection{Result 2: Impossibility of competing with the optimal schedule}
\label{sec:impossibility}

Although Theorem~\ref{thm:main} gives an exact characterization of the optimal schedule, and this schedule can be found in polynomial time given \emph{a priori} knowledge of the information curve, pragmatically it is unclear how to use it as this \emph{a priori} knowledge is not readily available.\footnote{If one has $\mathrm{poly}(n,\epsilon)$ many held-out samples from $\mu$, one can always estimate each of the $Z_i$'s to sufficient precision, but in practice it can be prohibitively expensive to generate this many samples using the diffusion model.} One might hope that by making use of conditional marginal queries to the neural network, one can estimate $\bZ(\mu)$ to sufficient accuracy and then deduce the optimal schedule from this.

In the next part of this work, we prove a collection of impossibility results demonstrating that this is not possible in general, even under seemingly benign conditions. Our lower bounds in this part apply to an even more general setting where the sampling algorithm can adaptively make any $k$ conditional marginal queries it chooses (see Definition~\ref{def:oracle}), possibly in a randomized fashion, and then must output a sample such that marginally over its internal randomness, the algorithm's output distribution should be close to $\mu$.

We begin by considering a simple scenario where the curve is promised to either be the constant zero curve ($Z_j = 0$) or a single step function ($Z_j = \mathds{I}[j > j^*]$ for an unknown $j^*$), in which case the optimal schedule is simply determined by the location of the step, if it exists. There exist distributions realizing both kinds of information curve, namely the uniform distribution over $\Sigma^n$ and the uniform distribution over a minimum distance separable (MDS) code (see Definition~\ref{def:mds}). Our first result shows that even in this situation, finding $j^*$ if it exists requires a prohibitive number of conditional marginal queries.

\begin{theorem}[Uniform versus code is hard, see Theorem~\ref{thm:warmup} for formal statement]\label{thm:worstcase}
    There does not exist a single sampling algorithm which simultaneously achieves iteration complexity $o(n)$ for sampling to expected KL error $O(1)$ for all distributions, in fact even for sampling to expected TV error $1/2$. In fact, this holds even if the algorithm knew \emph{a priori} that the distribution were either the uniform distribution over $\Sigma^n$ or a uniform distribution over an unknown MDS code.
\end{theorem}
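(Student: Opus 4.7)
The plan is to exhibit a hard family of distributions on which any $o(n)$-iteration sampler must incur substantial error. Take the promise class to be $\{\mu_0\} \cup \{\mu_{k^*} : k^* \in I\}$, where $\mu_0 = \mathrm{Unif}(\Sigma^n)$, $I := \{\lceil n/4\rceil,\ldots,\lfloor 3n/4\rfloor\}$, and $\mu_{k^*} = \mathrm{Unif}(C_{k^*})$ for an $[n,k^*,n-k^*+1]$ MDS code $C_{k^*}$ (e.g.\ a Reed--Solomon code over $|\Sigma|\geq n$). The information curve of $\mu_0$ is identically zero while that of $\mu_{k^*}$ is the pure step $Z_j = \log|\Sigma| \cdot \mathds{I}[j > k^*]$, so by Theorem~\ref{thm:main} accurate $k$-step sampling of $\mu_{k^*}$ requires placing a node essentially at $k^*+1$.

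The first technical step is a structural lemma about MDS codes: any $|S|+1 \leq k^*$ coordinates form an information set, so $\mathrm{law}(X_i\mid X_S=x_S) = \mathrm{Unif}(\Sigma)$ under $\mu_{k^*}$ for every $i\notin S$ and every $x_S$. Hence the conditional-marginal oracle returns identical responses on $\mu_0$ and $\mu_{k^*}$ whenever $|S| < k^*$; for $|S|\geq k^*$ with $x_S$ outside the code (a zero-probability conditioning event), we assume adversarially that the oracle still returns the uniform marginal, so the only informative queries under $\mu_{k^*}$ are those with $|S|\geq k^*$ and $x_S$ in the code.

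The main argument then runs as follows. Applying any $K$-step sampler to $\mu_{k^*}$, let $N_a = 1 + |S_{\leq a-1}|$ denote its cumulative nodes and define $a^* = \min\{a : N_a > k^*\}$ (setting $a^*=K+1$ otherwise). By the structural lemma, every response through step $a^*-1$ is uniform and every coordinate sampled before step $a^*$ is iid uniform, so the joint law of $(N_1,\ldots,N_{a^*})$ and $x_{S_{\leq a^*-1}}$ under $\mu_{k^*}$ coincides with that under $\mu_0$---in particular, the effective schedule is non-adaptive in $k^*$. The partial assignment $x_{S_{\leq a^*-1}}$ is thus uniform in $\Sigma^{N_{a^*}-1}$, and the MDS property gives that $C_{k^*}$ projects onto these coordinates as a $k^*$-dimensional affine subspace of size $|\Sigma|^{k^*}$, with the projection $1$-to-$1$ to the code. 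Therefore $x_{S_{\leq a^*-1}}$ is consistent with a (unique) codeword with probability exactly $|\Sigma|^{k^*-N_{a^*}+1}$; conditioned on consistency, all subsequent marginals are deterministic and the sampler outputs a uniform element of $C_{k^*}$, while conditioned on inconsistency the output lives on non-codewords (up to $o(1)$ slack from the $|\Sigma|^{k^*-n}$ chance of a random string accidentally being a codeword). Writing $\pi_0$ for the resulting (non-adaptive-in-$k^*$) schedule distribution, this gives $\TV(\mu_{k^*},\nu_{k^*}) = 1 - \E_{\pi_0}[|\Sigma|^{k^*-N_{a^*}+1}] - o(1)$.

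The final step is an averaging argument. For any fixed schedule, $\sum_{k^* \in [N_{a-1}, N_a-1]} |\Sigma|^{k^*-N_a+1} \leq \sum_{\ell \geq 0} |\Sigma|^{-\ell} \leq 2$ for $|\Sigma|\geq 2$; summing across the $K$ buckets and dividing by $|I| = n/2$ gives $\E_{k^*\sim I}[|\Sigma|^{k^*-N_{a^*}+1}] \leq 4K/n$, hence $\E_{k^*}[\TV(\mu_{k^*},\nu_{k^*})] \geq 1 - 4K/n - o(1)$, and some $k^*\in I$ must exhibit TV error strictly greater than $1/2$ whenever $K = o(n)$. The main obstacle, to be addressed last, is extending this from the MDM step-sampling framework to the fully adaptive oracle model of Definition~\ref{def:oracle}, where the algorithm can freely issue any $(S,x_S)$: since inconsistent $x_S$ may be answered adversarially by $\mathrm{Unif}(\Sigma)$, the only way to provably elicit a non-uniform response is to first grow a consistent $x_S$ of size $\geq k^*$ one coordinate at a time via queries of strictly smaller $|S|$, which reduces the general setting back to the same sequential process and yields the same $\Omega(n)$ lower bound via the same counting.
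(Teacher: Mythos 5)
Your core observation — that under $\mu_{k^*}$ the oracle's answers are exactly uniform until a query of size at least $k^*$ is made, that the partial assignment accumulated by then is itself uniform, and that one can therefore bound the ``hit'' probability $|\Sigma|^{k^*-N_{a^*}+1}$ and average it over a linear range of $k^*$ to get a bound $O(K/n)$ — is the same counting idea underlying the paper's proof (compare the paper's bound of $2\calT(\calU)/(n-1)$ on the average hitting probability). However there are two genuine gaps.

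\textbf{Fixed versus random codes.} You fix a single MDS code $C_{k^*}$ for each dimension, so your family $\calF$ is a finite, explicitly known collection. But the sampling algorithm in Definition~\ref{def:oracle} is $\calF$-aware: it knows every code in $\calF$ exactly. In the fully adaptive oracle model (which is what Theorem~\ref{thm:warmup} is about), this is fatal. For example, if the codes are nested Reed--Solomon codes at known evaluation points, a single query $(S, x_S)$ with $|S|=n-1$ and $x_S$ the restriction of a degree-$0$ (constant) codeword is consistent with \emph{every} $C_{k^*}$, $k^*\ge 1$, so the oracle returns a deterministic (non-uniform) marginal for every $\mu_{k^*}$ but a uniform one for $\calU$. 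One query distinguishes $\calU$ from the entire code family, and a further $O(\log n)$ queries on restrictions of polynomials of carefully chosen degree binary-search for $k^*$; from there one can sample exactly within the generous budget allowed for $\mu_{k^*}$. Even for non-nested fixed MDS codes, the algorithm can plant $x_S$ inside the projection of a target code, which your analysis (where $x_S$ is sampled uniformly) does not rule out. This is precisely why the paper works with \emph{random} Reed--Solomon codes and the ``balanced'' property (Definition~\ref{def:balanced}, Lemma~\ref{lem:RS_balanced}): for a random code of dimension $k$, any \emph{fixed} partial assignment of size $m\ge k$ is consistent with probability only $q^{k-m}$, so the algorithm cannot profitably target codewords.

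\textbf{The adaptive-model reduction is not a proof.} Your closing paragraph asserts that in the adaptive oracle model ``the only way to provably elicit a non-uniform response is to first grow a consistent $x_S$ of size $\ge k^*$ one coordinate at a time.'' This is false: the algorithm may query $(S,x_S)$ with $|S|=k^*$ directly, which is \emph{always} consistent with a dimension-$k^*$ MDS code, and more generally may issue queries with arbitrary sizes and assignments in any order; it does not need to guarantee a hit, only to hit with reasonable probability. The correct way to handle this is the union-bound-and-average argument carried out over the algorithm's decision tree and over the randomness of the code, which is what the paper does — noting that along any root-to-leaf path there are at most $\calT(\calU)$ query sizes that can exactly match $k^*$, and that all other dimensions hit with probability $\le \calT(\calU)/q$. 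Your MDM-schedule calculation produces the right quantity (a hitting probability of $O(K/n)$ after averaging) but only in the restricted schedule-based model, and the purported reduction from the adaptive model to that setting is neither valid as stated nor substituted by an actual argument. Once you switch to random codes and run your averaging over the (up to) $K$ query sizes that an adaptive algorithm can issue, the argument closes and coincides with the paper's.
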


\noindent One might wonder if this worst-case result is too pessimistic, and in practice the relevant data distributions are very far from uniform and have useful correlational structure that one might hope to exploit. Unfortunately, the following strengthening of Theorem~\ref{thm:worstcase} shows that this is still not the case:

\begin{theorem}[Hardness for general information curves, see Theorem~\ref{thm:elevate} for formal statement]\label{thm:generalZ_informal}
    Let $\bZ = \bZ(\mu)$ be \emph{any} information curve, where $\mu$ is a distribution over $\mathbb{F}^n_q$, such that there exists an unmasking schedule under which one can sample from $\mu$ to expected KL error $O(1)$, in fact even just to expected TV error $1/2$. For every $1\le k < n$, let $\bZ^{\uparrow k}$ denote the information curve given by shifting every $Z_j$ for $j > k$ up by $\log_2(q)$.

    There does not exist a single sampling algorithm which simultaneously achieves iteration complexity $o(n)$, even if the algorithm knew \emph{a priori} that the information curve of the distribution was one of $\bZ, \bZ^{\uparrow 1}, \ldots, \bZ^{\uparrow n-1}$.
\end{theorem}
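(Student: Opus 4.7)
The plan is to reduce Theorem~\ref{thm:generalZ_informal} to the uniform-versus-MDS hard instance of Theorem~\ref{thm:worstcase} by tensoring $\mu$ against that hard instance. For each $k \in \{0, 1, \ldots, n-1\}$, let $\nu^{(k)}$ be the distribution from Theorem~\ref{thm:worstcase}: uniform over $\mathbb{F}_q^n$ when $k = 0$ and uniform over an MDS code of dimension $k$ otherwise, so that its information curve is $\bZ(\nu^{(k)})_j = \log_2 q \cdot \mathds{I}[j > k]$. Let $\mu^{(k)}$ be the distribution obtained by drawing $X \sim \mu$ and $Y \sim \nu^{(k)}$ independently and setting the $i$-th coordinate of the sample to $(X_i, Y_i)$. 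Because mutual information adds over independent factors,
\begin{equation*}
    \bZ(\mu^{(k)}) = \bZ(\mu) + \bZ(\nu^{(k)}) = \bZ^{\uparrow k}\,,
\end{equation*}
with the convention $\bZ^{\uparrow 0} \triangleq \bZ$. Thus $\{\mu^{(k)}\}_{k=0}^{n-1}$ realizes the family of information curves in the statement.

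The second step is a simulation-with-projection reduction. Suppose some $T$-iteration algorithm $A$ achieves expected TV error at most $1/2$ simultaneously for every $\mu^{(k)}$. I would build from $A$ a $T$-iteration algorithm for the uniform-versus-MDS family, given black-box access to the conditional-marginal oracle of the unknown $\nu^{(k)}$. To answer $A$'s $t$-th query, which requests conditional marginals given $X_{S_t} = z_t^\mu$ and $Y_{S_t} = z_t^\nu$, return, for each $i \notin S_t$, the product
\begin{equation*}
    \mathrm{law}(X_i \mid X_{S_t} = z_t^\mu) \otimes \mathrm{law}(Y_i \mid Y_{S_t} = z_t^\nu)\,,
\end{equation*}
where the first factor is computed directly from our explicit knowledge of $\mu$ and the second is obtained with a single call to the $\nu^{(k)}$-oracle. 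After $A$ produces its joint sample, output just its $Y$-coordinate. By the data processing inequality, this projection lies within expected TV error $1/2$ of $\nu^{(k)}$. Since each step of $A$ is simulated using exactly one oracle call, applying Theorem~\ref{thm:worstcase} forces $T = \Omega(n)$.

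I expect the main subtlety to be alphabet bookkeeping: the construction lives on the product alphabet $(\Sigma \times \mathbb{F}_q)^n$, and one must identify this with $\mathbb{F}_{q'}^n$ for a suitable $q'$ so that the $\log_2 q$ shift from the elevation matches the MDS contribution exactly (this can be arranged by choosing $\mu$'s alphabet and the MDS field compatibly). As a sanity check on the non-triviality hypothesis, each $\bZ^{\uparrow k}$ individually continues to admit a good schedule: by Theorem~\ref{thm:main} and additivity, adding a single node at position $k$ to any good step approximation of $\bZ$ produces a good step approximation of $\bZ^{\uparrow k}$, so the hardness truly lies in the absence of a \emph{unified} algorithm across the family.
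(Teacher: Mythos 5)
Your construction is exactly the one the paper uses: tensoring the base distribution $\mu$ with the uniform/MDS hard family so that, by additivity of mutual information over independent factors, the information curves become $\bZ, \bZ^{\uparrow 1}, \ldots, \bZ^{\uparrow n-1}$. Where you diverge is in how the lower bound is then extracted. The paper (Appendix~\ref{app:elevate}) re-runs the full stochastic decision-tree argument of Theorem~\ref{thm:warmup} with $\calU$ replaced throughout by $\mu^*[\calU]$ and $\calU_\subspace$ by $\mu^*[\calU_\subspace]$. You instead give a black-box reduction: simulate the $\mu\times\nu^{(k)}$-oracle using one $\nu^{(k)}$-oracle call per query (the $X$-marginal is computed offline since $\mu$ is known and the conditionals factor by independence), run the putative algorithm, project its output to the $Y$-coordinate, and invoke contractivity of TV under marginalization before appealing to Theorem~\ref{thm:worstcase}. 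This is a genuinely more modular argument and a nice shortcut.

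Two small points worth flagging if you wanted to push this to match the formal Theorem~\ref{thm:elevate}. First, that theorem phrases the hypothesis in terms of a budget function $\calT(\mu^{(k)}) \lesssim \min(\TC(\mu^{(k)}),\DTC(\mu^{(k)}))\log n$, whereas Theorem~\ref{thm:warmup} requires the reduced algorithm to satisfy $\calT'(\nu^{(k)}) \lesssim \min(\TC(\nu^{(k)}),\DTC(\nu^{(k)}))\log n$; since $\TC(\mu^{(k)}) \ge \TC(\nu^{(k)})$ and likewise for $\DTC$, the former does not formally imply the latter for every $k$. The reduction is rescued because what the warm-up proof actually needs is only that $\calT'(\calU) = \calT(\mu^{(0)}) = o(n)$, which follows from the hypothesis $\min(\TC(\mu),\DTC(\mu))\log n \ll n$ (Eq.~\eqref{eq:assumesmall}); under the informal ``iteration complexity $o(n)$'' reading you adopted this is automatic, but it deserves a sentence. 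Second, the alphabet bookkeeping you worry about is not actually an obstacle and the paper does not attempt to collapse $\Sigma\times\mathbb{F}_q$ to a single field either; nothing in the oracle model or the decision-tree argument requires the per-coordinate alphabet to be a field, only that the $\mathbb{F}_q$ factor supports the MDS structure.
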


\noindent Intuitively, this result and the previous one follow from the fact that one can engineer sharp discrete jumps in the information curve which are not detectable unless if one conditions on exactly the right number of indices. We remark that the two results are technically incomparable as the hard distributions for Theorem~\ref{thm:generalZ_informal} when $\bZ$ is uniformly zero are slightly more intricate than simply uniform vs. MDS.

\subsection{Result 3: (Dual) total correlation and reduction to a single hyperparameter sweep}

In the final part of this paper, we redeem the situation by showing that for any distribution $\mu$, there exist unmasking schedules depending only on a \emph{single scalar parameter} quantifying correlations in the distribution which achieve small expected KL error.

For this, we need to first define two relevant information-theoretic quantities:

\begin{definition}[Total Correlation and Dual Total Correlation]\label{tc_dtc_defn}
    For any random variable $X \sim \mu$ over $\Sigma^n$, define the \emph{total correlation (TC)} as 
    \begin{equation}
        \TC = \TC(\mu) \coloneqq \Bigl(\sum_{i=1}^n H(X_i)\Bigr) - H(X_1, \ldots, X_n)
    \end{equation}
    and the \emph{dual total correlation (DTC)} as 
    \begin{equation}
        \DTC = \DTC(\mu):= H(X_1, \ldots, X_n) - \sum_{i=1}^n H(X_i \mid X_1, \ldots, X_{i-1}, X_{i+1}, \ldots, X_n)\,.
    \end{equation}
\end{definition}

\noindent From its definition, we see that $\TC$ is equivalently the KL divergence between $\mu$ and the product distribution whose marginals agree with those of $\mu$, and thus it characterizes how ``product'' $\mu$ is. On the other hand, $\DTC$ has been shown to quantify the extent to which $\mu$ can be expressed as a sparse \emph{mixture} of product distributions~\cite{austin2020multi}.  These quantities admit nice characterizations in terms of the information curve of $\mu$:
\begin{lemma}\label{tc_dtc_facts}
    For any distribution $\mu$ with information curve $\bZ$, \begin{enumerate}
        \item $\TC(\mu) = \sum_{i=1}^n Z_i$, and 
        \item $\DTC(\mu) = nZ_n - \sum_{i=1}^n Z_i = nZ_n - \TC(\mu)$. 
    \end{enumerate}
\end{lemma}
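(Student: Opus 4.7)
The plan is to express the information curve values $Z_k$ directly in terms of averaged conditional entropies, then apply the chain rule for entropy after averaging over a uniformly random permutation of $[n]$.

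First I would rewrite $Z_k$ by expanding the mutual information: for random $S$ of size $k-1$ and random $i \notin S$,
\[
Z_k = \E_{S,i}\bigl[H(X_i) - H(X_i \mid X_S)\bigr] = h_1 - h_k,
\]
where I set $h_k \triangleq \E_{|S|=k-1,\, i \notin S}[H(X_i \mid X_S)]$ and observe by symmetry that $h_1 = \tfrac{1}{n}\sum_i H(X_i)$. Here the pair $(S,i)$ is uniform over all length-$k$ ordered-unordered configurations, so $i$ is marginally uniform on $[n]$.

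Next, the key identity is that averaging the entropy chain rule over a uniformly random permutation $\pi$ of $[n]$ gives
\[
H(X_1,\ldots,X_n) = \E_\pi \sum_{k=1}^n H\!\bigl(X_{\pi(k)} \mid X_{\pi(1)},\ldots,X_{\pi(k-1)}\bigr) = \sum_{k=1}^n h_k,
\]
since sampling $\pi$ and taking the prefix of length $k-1$ is exactly the same as sampling a uniform $(S,i)$ with $|S| = k-1$ and $i \notin S$. Combining with $Z_k = h_1 - h_k$ immediately yields
\[
\sum_{k=1}^n Z_k = n h_1 - \sum_{k=1}^n h_k = \sum_{i=1}^n H(X_i) - H(X_1,\ldots,X_n) = \TC(\mu),
\]
which is claim~(1).

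For claim~(2), I would just compute $nZ_n$ separately: since $h_n = \tfrac{1}{n}\sum_i H(X_i \mid X_{[n]\setminus\{i\}})$, we get $nZ_n = \sum_i H(X_i) - \sum_i H(X_i \mid X_{[n]\setminus\{i\}})$, and subtracting the expression for $\TC$ above collapses the $\sum_i H(X_i)$ terms and leaves exactly $H(X_1,\ldots,X_n) - \sum_i H(X_i \mid X_{[n]\setminus\{i\}}) = \DTC(\mu)$. I do not expect any real obstacle here; the only mildly subtle point is the symmetry argument that averaging over permutations of $[n]$ reproduces the averaged $h_k$'s, but this is immediate from the fact that uniform permutations induce uniform prefixes.
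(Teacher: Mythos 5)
Your proof is correct and is essentially the paper's argument in slightly different clothing: you work with the averaged conditional entropies $h_k = \E_{|S|=k-1,\,i\notin S}[H(X_i\mid X_S)]$ whereas the paper uses the averaged joint entropies $H_i$ (with $h_k = H_k - H_{k-1}$), and your permutation-averaged chain rule $H(X_1,\ldots,X_n)=\sum_k h_k$ is exactly the telescoping identity $\sum_k (H_k - H_{k-1}) = H_n$ used there. Both proofs then obtain item (2) by the same computation of $nZ_n$ via the chain rule and subtraction of $\TC$.
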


\noindent Under the pictorial representation in Figure~\ref{pictorial_rep}, $\TC$ is therefore the area \emph{under} the information curve, and $\DTC$ is the area \emph{above} the information curve (capped at $Z_n$).

We show that while it is not in general possible to compete with a sampler that can choose the unmasking schedule dependent on \emph{a priori} knowledge of the information curve, there are unmasking schedules that only depend on having access to a constant-factor approximations to $\TC(\mu)$ and $\DTC(\mu)$ which only require a number of iterations scaling in $\min(\TC(\mu),\DTC(\mu))$, up to log factors. In situations where these quantities are sublinear in $n$, this gives us a way to sample asymptotically faster than the naive $n$-step sampler even without full knowledge of the information curve. As a simple example, if $\mu$ is a distribution over a linear subspace of dimension or codimension $O(1)$ (e.g., if $\mu$ corresponds to an unknown parity), then this yields an \emph{exponential} speedup over naive schedules. We discuss other such examples in Section~\ref{sec:connections} below.

\begin{theorem}[Iteration complexity depending on $\TC, \DTC$]\label{thm:tcdtc}
    For any $\epsilon > 0$, there exists an unmasking schedule which depends only on $\epsilon$ and a parameter $\widehat{\TC}$ (resp. $\widehat{\DTC}$) such that for any distribution $\mu$ for which $\TC(\mu) \leq \widehat{\TC}$ (resp. $\DTC(\mu) \leq \widehat{\DTC}$), the expected KL error satisfies
    \begin{equation}
        \E[S_1,\ldots,S_k]{\KL{\mu}{\nu^{S_1,\ldots,S_k}}} \le \epsilon\,,
    \end{equation}
    and furthermore the number of steps satisfies
    \begin{equation}
        k \le 2 + (1 + \log n) \cdot (1 + \lceil \widehat{\TC} / \epsilon\rceil) \qquad (\text{resp.} \ k\le 2 + (1 + \log n) \cdot (1 + \lceil \widehat{\DTC} / \epsilon\rceil)\,.
    \end{equation} 
\end{theorem}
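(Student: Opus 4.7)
The plan is to invoke Theorem~\ref{thm:main} to translate the bound on expected KL error into an $L^1$ function-approximation bound, and then to construct explicit schedules by combining a logarithmic-scale anchor grid with a uniform refinement.

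Let $\phi_j \triangleq Z_j - Z_{j-1} \ge 0$ denote the increments of the information curve, so that $\phi_1 = Z_1 = 0$. For any candidate schedule $1 = N_1 < \cdots < N_k$, writing $Z_j - Z_{f(j)} = \sum_{i=f(j)+1}^{j} \phi_i$ (where $f(j)$ is the largest node $\le j$) and swapping the order of summation yields the telescoping identity
\[
\|\mathbf{Z} - \mathbf{Z}^{\mathbf{N}}\|_{L^1} = \sum_{j=2}^n \phi_j \cdot d_j,
\]
where $d_j$ is the distance from $j$ to the smallest node strictly greater than $j$ (with $d_j = 0$ when $j$ is itself a node, and $d_j = n+1-j$ when no node is strictly greater). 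Combined with the identities $\DTC = \sum_j (j-1)\phi_j$ and $\TC = \sum_j (n-j+1)\phi_j$ that follow from Lemma~\ref{tc_dtc_facts}, the task reduces to exhibiting a schedule with $d_j \le (j-1)/M$ (resp.\ $d_j \le (n-j+1)/M$) for every non-node $j$, where $M \triangleq \lceil \widehat{\DTC}/\epsilon \rceil$ (resp.\ $\lceil \widehat{\TC}/\epsilon\rceil$). Such a schedule would give $\|\mathbf{Z}-\mathbf{Z}^{\mathbf{N}}\|_{L^1} \le \widehat{\DTC}/M \le \epsilon$ (resp.\ $\widehat{\TC}/M \le \epsilon$), which by Theorem~\ref{thm:main} is exactly the KL bound sought.

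For the $\DTC$ case, my schedule would place anchor nodes at the dyadic positions $2^0, 2^1, \ldots, 2^{\lceil \log n\rceil}$ (truncating the last to $n$). Within each anchor interval $[2^{a-1}, 2^a)$ with $2^{a-1} \ge M$, insert $M - 1$ equispaced sub-nodes so that every sub-interval has length at most $\lceil 2^{a-1}/M \rceil$; within smaller anchor intervals (those with $2^{a-1} < M$), promote every integer position to a node. For any non-node $j$ in a large anchor interval, its sub-interval has length at most $\lceil 2^{a-1}/M\rceil$ and $j$ is not its left endpoint, so $d_j \le \lceil 2^{a-1}/M\rceil - 1 \le 2^{a-1}/M$; monotonicity gives $j - 1 \ge 2^{a-1}$, yielding $d_j \le (j-1)/M$ as required. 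A direct count (each anchor interval contributes at most $M$ nodes and there are $O(\log n)$ of them) shows $k \le M(1 + \log n) + O(1)$, which after substituting $M = \lceil \widehat{\DTC}/\epsilon\rceil$ fits inside the bound $2 + (1 + \log n)(1 + \lceil \widehat{\DTC}/\epsilon\rceil)$. The $\TC$ case is entirely symmetric via reverse-dyadic anchors at positions $n + 1 - 2^0, n + 1 - 2^1, \ldots$ capped at $1$, with the same uniform refinement within each anchor interval.

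I do not expect a deep conceptual obstacle; the main care needed is the bookkeeping around integer rounding in the subdivision and the clean separation into small versus large anchor intervals when counting nodes, in order to verify the exact constants promised by the theorem.
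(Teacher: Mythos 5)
Your reduction is correct and, in fact, cleaner than the paper's direct computation. The telescoping identity $\|\mathbf Z - \mathbf Z^{\mathbf N}\|_{L^1} = \sum_j \phi_j d_j$ (with $\phi_j = Z_j - Z_{j-1}$ and $d_j$ the forward distance to the next node) is a nice reformulation that the paper does not use, and the reformulations $\DTC = \sum_j (j-1)\phi_j$, $\TC = \sum_j (n-j+1)\phi_j$ correctly reduce the problem to exhibiting a schedule with $d_j \le (j-1)/M$ (resp.\ $(n-j+1)/M$). The paper instead works with the $Z_j$'s directly via Abel-type rearrangements, so you are taking a genuinely different (and arguably more transparent) route to the same place.

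The issue is in the final construction and count. With dyadic anchors at $2^0,2^1,\ldots$ and $M$-fold uniform refinement, you get about $M\lceil\log_2 n\rceil$ nodes, i.e.\ $\approx M\ln n/\ln 2 \approx 1.44\,M\ln n$. The theorem's bound $2 + (1+\log n)(1 + \lceil \widehat\DTC/\epsilon\rceil)$ is stated with natural log (you can see this from the paper's use of $\log\frac{1}{1-z}\ge z$ in its query-complexity step), so for large $M$ and $n$ your count exceeds the stated budget by roughly a factor of $1/\ln 2$. This is not a bookkeeping detail that washes out: the gap $0.44\,M\ln n$ grows. To close it within your own framework, replace the two-level dyadic-plus-refinement grid with a single geometric progression of nodes with ratio $1 + 1/M$ (or $1 - 1/\zeta$ from $n$ downward for the $\TC$ side, with $\zeta = M+1$) and a short linear tail of $\le M$ unit steps near the small end; this gives $d_j \le (j-1)/M$ directly with $\le (M+1)\ln n + M + O(1)$ nodes, which is exactly what the paper's $N_i$ recursion achieves. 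Everything else in your argument, including the $\TC/\DTC$ symmetry via the reverse grid, then goes through as you describe.
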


\noindent While realizing either (and in particular, the minimum) of these iteration complexities still requires knowing an upper-bound approximation of $\TC(\mu), \DTC(\mu)$, in practice this is not really an issue: one can simply treat these as hyperparameters and either estimate them with held-out data or guess their values via doubling. While the no-go results of Section~\ref{sec:impossibility} tell us it is impossible in theory to know when to stop doubling, in practice we can simply generate samples according to the different schedules and inspect at what point the output is sufficiently coherent. We emphasize that the reason this is feasible compared to the scheme suggested in Theorem~\ref{thm:main} is that we have reduced from designing a schedule that depends on $n$ different hyperparameters $Z_1,\ldots,Z_n$ (more than the number of hyperparameters describing the unmasking schedule itself) to designing one that only depends on $2$ hyperparameters, namely $\TC(\mu)$ and $\DTC(\mu)$.

Finally, the reader may wonder whether the $\log(n)$ factor in Theorem~\ref{thm:tcdtc} is a technical artifact or fundamental. In Appendix~\ref{sec:logn} we show that it is unavoidable, since there exist information curves which can only be approximated to $L^1$ error $\epsilon$ using step functions with at least $\Omega(\min(\TC, \DTC)\cdot \log(n)/\epsilon)$ steps.

\subsection{Related work}
\label{sec:connections}

We contrast our results with some existing bounds from the literature. 

\vspace{0.3em}\noindent\textbf{The bound of Li and Cai~\cite{li2025convergence}.} The most closely related prior work is that of Li and Cai~\cite{li2025convergence}. They considered the same setting as the present work and showed that under any unmasking schedule $s_1,\ldots,s_k$ with $s_{\max} \coloneqq \max_i s_i$, the expected KL error can be bounded by
\begin{equation}
    \frac{2^{\lceil \log_2 s_{\max}\rceil} - 1}{n} \sum^n_{i=1} I(X_i; X_1,\ldots,X_{i-1},X_{i+1},\ldots,X_n)\,. \label{eq:licai}
\end{equation}
This was proven using a delicate inductive argument based on recursively relating the expected KL error for a given unmasking schedule to the expected KL error with a schedule whose steps are twice as fine.

We make two observations about this bound. First, armed with Theorem~\ref{thm:main}, which gives an \emph{exact} characterization of the expected KL error for \emph{any} unmasking schedule, we can give a proof of Li and Cai's bound in just four lines --- see Appendix~\ref{sec:licai}. Second, we note that up to $\log n$ factors, the bound in Theorem~\ref{thm:tcdtc} is strictly better. The reason is that
\begin{equation}
    \sum^n_{i=1} I(X_i;X_1,\ldots,X_{i-1},X_{i+1},\ldots,X_n) = nZ_n = \TC(\mu) + \DTC(\mu) \asymp \max(\TC(\mu), \DTC(\mu))\,.
\end{equation}
For instance, in the aforementioned simple example where $\mu$ is distributed over a generic linear subspace, $\TC(\mu) + \DTC(\mu) = \Theta(n)$, whereas $\min(\TC(\mu),\DTC(\mu))$ scales with the minimum of the dimension and codimension, which can be much smaller (see Example~\ref{example:subspace}).

\vspace{0.3em}\noindent\textbf{DTC and the work of Tim Austin~\cite{austin2020multi,austin2019structure}.} The elegant work of Austin~\cite{austin2020multi} gave a powerful operational characterization of DTC. First, it is easily seen that any distribution $\mu$ which is a mixture of $2^{o(n)}$ product distributions has $\DTC(\mu) = o(n)$. Austin showed an approximate converse: if $\DTC(\mu) = o(n)$, then $\mu$ is well-approximated by a mixture of $2^{o(n)}$ product distributions. In fact, his proof is algorithmic and has an interesting interpretation under the perspective of the present work: it shows that if one first samples $O(\sqrt{\DTC(\mu) \cdot n/\epsilon})$ indices in sequence and then samples the remaining indices in $\calO\left(\sqrt{\DTC(\mu) \cdot n / \epsilon }\right)$ iterations, one can achieve expected KL error $\epsilon$:

\begin{theorem}[Austin's iteration complexity bound~\cite{austin2020multi}]\label{thm:austinquerycomplexity}
    For any $\epsilon > 0$, there exists an unmasking schedule which depends only on $\epsilon$ and a parameter $\widehat{\DTC}$ such that for any distribution $\mu$ for which $\DTC(\mu) \lesssim \widehat{\DTC}$, the expected KL error satisfies
    \begin{equation}
        \E[S_1,\ldots,S_k]{\KL{\mu}{\nu^{S_1,\ldots,S_k}}} \le \epsilon\,,
    \end{equation}
    and furthermore the number of steps satisfies $k \lesssim \left\lceil\sqrt{\DTC \cdot n/\epsilon}\right\rceil$.
\end{theorem}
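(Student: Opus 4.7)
The plan is to invoke Theorem~\ref{thm:main} to recast Austin's iteration complexity bound as a univariate approximation problem, and then exhibit a specific two-phase schedule whose left Riemann approximation of the information curve $\bZ$ has small $L^1$ error, using the area-above-the-curve identity $\DTC(\mu) = \sum_j (Z_n - Z_j)$ from Lemma~\ref{tc_dtc_facts}.

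Given $\widehat{\DTC}$, I would set $m \triangleq \lceil \sqrt{n\widehat{\DTC}/\epsilon}\,\rceil$ and use the two-phase schedule whose first $m$ step sizes are $s_1 = \cdots = s_m = 1$ (unmasking the first $m$ tokens sequentially) and whose remaining $m$ step sizes partition positions $m+1,\ldots,n$ into roughly equal blocks of size $\lceil(n-m)/m\rceil$. The sequential phase yields $Z^{\vec{\node}}_j = Z_j$ for all $j \le m$ and hence contributes zero to $\|\bZ - \bZ^{\vec{\node}}\|_{L^1}$. For each block $[\node_a, \node_{a+1})$ in the second phase, monotonicity of $\bZ$ bounds the block's $L^1$ contribution by $(\node_{a+1}-\node_a)(Z_{\node_{a+1}} - Z_{\node_a})$, and summing over the $m$ second-phase blocks telescopes to
\begin{equation*}
    \|\bZ - \bZ^{\vec{\node}}\|_{L^1} \,\le\, \frac{n}{m}\bigl(Z_n - Z_m\bigr).
\end{equation*}

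The key final estimate is that $\DTC(\mu) = \sum_{j=1}^n (Z_n - Z_j) \ge \sum_{j=1}^m (Z_n - Z_j) \ge m(Z_n - Z_m)$ by monotonicity of $\bZ$, so $Z_n - Z_m \le \widehat{\DTC}/m$ whenever $\DTC(\mu) \le \widehat{\DTC}$. Substituting back yields $\|\bZ - \bZ^{\vec{\node}}\|_{L^1} \le n\widehat{\DTC}/m^2 \le \epsilon$ by the choice of $m$, while the total iteration count is $k = 2m = O\bigl(\sqrt{n\widehat{\DTC}/\epsilon}\,\bigr)$.

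I do not expect a substantial technical obstacle: the only fiddly points are the integer constraints on block sizes (handled by ceilings together with the convention $\node_{k+1} \triangleq n+1$ and $Z_{n+1} \triangleq Z_n$ for accounting the trailing block's Riemann contribution) and the degenerate regime $m \ge n$, in which case the naive $n$-step schedule already samples exactly. The real content is entirely conceptual: Theorem~\ref{thm:main} reduces everything to a piecewise-constant approximation problem, and the DTC-as-area identity forces the residual height $Z_n - Z_m$ after $m$ sequential samples to decay as $O(\widehat{\DTC}/m)$, which balances optimally against the $n/m$ width of each second-phase block.
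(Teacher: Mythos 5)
Your proof is correct and takes a genuinely cleaner route than the paper's own. The paper proves Theorem~\ref{thm:austinquerycomplexity} by first invoking Austin's Lemma~8.3 (Lemma~\ref{lem:austin_orig}), which asserts a subset size $s \le \DTC/\delta^2$ whose conditioning drives the conditional $\TC + \DTC$ below $\delta^2(n - s)$, and then applying the Li--Cai bound (Theorem~\ref{gen_li_result}) to the conditional distribution $X \mid X_S$; optimizing the sequential budget $k$ against the block count $\ell$ recovers $k + \ell \lesssim \sqrt{\DTC \cdot n / \epsilon}$. You instead work entirely inside the Riemann-approximation picture: the same ``sequential, then equal-block'' schedule, a telescoping bound $\sum_a s_a (Z_{\node_{a+1}} - Z_{\node_a}) \le \tfrac{n}{m}(Z_n - Z_m)$ on the second phase, and the identity $\DTC = \sum_j(Z_n - Z_j) \ge m(Z_n - Z_m)$ to force the residual height to decay as $O(\DTC/m)$. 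Both proofs arrive at the same schedule and the same $O(\sqrt{\DTC\cdot n/\epsilon})$ iteration count, but yours avoids re-importing Austin's measure-decomposition lemma and the Li--Cai inductive bound altogether, making the derivation self-contained from Theorem~\ref{thm:main} and Lemma~\ref{tc_dtc_facts}; the paper's route, by contrast, has the pedagogical virtue of showing explicitly how Austin's and Li--Cai's results slot into the new framework as special cases. Your use of the crude telescoping inequality is also essentially the same $Z_n - Z_m \le \DTC/m$ estimate that the paper already isolates in the proof of Corollary~\ref{austin_for_sampling}, so you are, in effect, showing that the two-phase improvement the paper describes informally (``replacing the one-shot sample of $S_k$ with an $\ell$-step, constant mask size sampler'') can be carried through directly in the information-curve language without detouring through conditional total correlations.
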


\noindent We provide a proof for completeness in Appendix~\ref{sec:austin}. Although this bound is already sublinear in $n$ when $\DTC(\mu) = o(n)$, it is bottlenecked at $\sqrt{n}$. Indeed, note that Austin's bound is the geometric mean of our stronger bound in Theorem~\ref{thm:tcdtc} and the trivial iteration complexity bound of $n$, up to a logarithmic factor.

The connection between parallel sampling and decomposition of measure has been quite fruitful within probability theory. For instance, Austin~\cite{austin2019structure} showed the remarkably general result that for any Gibbs measure $\propto e^{-\beta H}$ of ``low-complexity'' in the sense that the discrete derivatives of the Hamiltonian lie within a set of bounded metric entropy, the $\DTC$ is $o(n)$. The decomposition of such distributions into mixtures of product measures arises naturally in the theory of nonlinear large deviations, see e.g., ~\cite{chatterjee2016nonlinear,eldan2018decomposition,eldan2018gaussian,eldan2018exponential}. The theory of parallel sampling can thus also be fruitfully interpreted as providing richer and more accurate hierarchical measure decompositions, where the levels of the hierarchy correspond to iterations of the sampler.

\vspace{0.3em}\noindent\textbf{Pinning lemma and stochastic localization.} Finally, we note a closely related notion from statistical physics and theoretical computer science, namely the \emph{pinning lemma}~\cite{raghavendra2012approximating,andrea2008estimating,el2022information}. The premise behind this result is that if one conditions on a random subset of coordinates of size $s$ according to their true $s$-wise marginal in $\mu$, then the remaining coordinates have pairwise correlation which is bounded by $O(1/s)$. In some sense this is the fundamental premise behind MDMs: pinning random tokens reduces the correlation among the remaining tokens, which intuitively enables more aggressive parallel sampling of later tokens. This has been used to great effect in the context of SDP rounding algorithms for solving dense CSPs~\cite{raghavendra2012approximating,manurangsi2017birthday,yoshida2014approximation,jain2019mean}

That being said, the pinning lemma holds for \emph{all} distributions, whereas our impossibility results show that without additional prior information about the distribution, one cannot simultaneously achieve $o(n)$ complexity for all $\mu$. It is worth contrasting this state of affairs with the work of~\cite{anari2024parallel}. By leveraging the pinning lemma, they showed that in a much \emph{stronger} parallel model where in each round one can simultaneously make multiple conditional marginal queries, each corresponding to a possibly \emph{different} partial assignment, it is possible to sample in $\tilde{O}(n^{2/3})$ parallel rounds, for general distributions.

\vspace{0.3em}\noindent\textbf{Other theoretical works on discrete diffusion.} We briefly mention some other works in the discrete diffusion literature that derive theoretical bounds. In~\cite{chen2024convergence}, the authors study discretization bounds for a different paradigm of discrete diffusions, where the corruption process being reversed is a bit flip channel rather than an erasure channel. Here, it is nontrivial even to derive bounds which scale linearly in $n$. In~\cite{ren2025fast}, the authors consider finding better discretizations of the continuous-time Markov chain associated to the discrete diffusion model; under some smoothness assumptions on the underlying distribution, which are primarily relevant to the bit flip setting, their higher-order solvers achieve nontrivial sampling guarantees relative to naive (Euler) discretization.

We also remark that the conditional marginal oracle we consider is very similar in spirit to the \emph{conditional query} model in distribution testing, pioneered by~\cite{chakraborty2013power,canonne2015testing}. Our model is different in two ways: (1) we restrict to \emph{subcube conditionings} in the sense of~\cite{canonne2021random}, and (2) a single oracle query gives an entire vector of 1-wise conditional marginals, rather than just a single sample from the posterior. The literature here is extensive and orthogonal to our work; we refer the interested reader to~\cite[Chapter 11]{canonne2020survey}. Lastly, we note that masked diffusion models -- and autoregressive models -- can be thought of as modern instantiations of the classical Jerrum-Valiant-Vazirani counting-to-sampling reduction~\cite{jerrum1986random}.

\vspace{0.3em}

\noindent \textbf{Continuous diffusion.} In recent years there has been significant progress on understanding discretization bounds for diffusion models over continuous spaces, e.g., the works of~\cite{Chen+23SGM,lee2023convergence,chen2023improved,Ben+24Diffusion, conforti2025kl,li2024sharp}. The techniques in this area are largely distinct from the ones in this work, with the exception of the recent work of~\cite{reeves2025information} which derived an analogous expression to our Theorem~\ref{thm:main} for the discretization error incurred by \emph{continuous} diffusions. In that context, the analogue of our information curve is the \emph{MMSE curve} $\E{X - \E{X \mid \alpha_t X + \beta_t \gamma}}$ for Gaussian $\gamma$, and they show (see Lemma 2 therein) that the discretization error is exactly given by the left Riemann integration error to this curve. Interestingly, whereas in the continuous diffusion context this integration error exactly characterizes the KL error in \emph{path space}, which is only an upper bound to the KL error at the endpoint of the sampler, in the masked diffusion setting the integration error exactly characterizes the sampler's KL error. In light of the connection to~\cite{reeves2025information}, it would be interesting to extend our impossibility results and TC/DTC-based bounds to the continuous setting.

\vspace{0.3em}\noindent \textbf{Concurrent work.} Independent concurrent work of~\cite{lavenant2025error} also identified the connection to Riemann approximation of the information curve (our ``Result 1''). Unlike our work, they did not explore the \emph{query complexity} of learning an optimal schedule (our ``Result 2'') and did not devise explicit schedules that scale better than the bound in~\cite{li2025convergence} (our ``Result 3''). Instead, they additionally provided worst-case bounds for sampling error under arbitrary, non-random orderings, and studied a natural $n\to \infty$ scaling limit of the step function approximation problem.

\section{Technical preliminaries}\label{prelims}

In this section, we will first provide a brief overview of our notation and oracle model. We will then discuss some important information theoretic quantities and results. 
\subsection{Notation}

Throughout the remainder of this paper, we will use the following notation. 


\vspace{0.3em}\noindent \textbf{Vocabulary, data distribution, and product Distributions.} We will let $\Sigma$ be a vocabulary and $\mu$ be the data distribution over $\Sigma^n$. We use $\Delta(\Sigma)$ to denote the probability simplex over $\Sigma$. Let $\mathbf X = (X_1, \ldots, X_{|S|}) \sim \mu$. For any set $S \subseteq [n]$, let $X_S = \{X_i\}_{i \in S}$. Define $\mu(\cdot \mid S)$ to be the conditional distribution of $\mu$ given $S$, and $\mu^\otimes (\cdot \mid S)$ to be the product distribution which has the same marginals as $\mu(\cdot \mid S)$. Lastly, define $f_\mu(\cdot \mid S)$ and $f_\mu^\otimes (\cdot \mid S)$ be the corresponding probability mass functions. Lastly, we set $X_S$ to denote the set $\{X_i\}_{i \in S}$.

\subsection{Oracle model}
Throughout this work, our main oracle object will be the \textit{conditional marginal oracle}, which outputs the marginals of $\mu$ conditioned on any subset. To define it, first recall that $(X_1, \ldots, X_n) \sim \mu$ is the data distribution over a vocabulary $\Sigma$. Let $\mathcal D$ be the collection of all multivariate distributions $\Sigma$. Moreover, let $\mathbf p_{i \mid S}(x_{S}) = \{p(X_i = j \mid X_S = x_S), j \in \Sigma\}$ be the marginal probability vector on coordinate $i$. We then have the following.

\begin{definition}[Conditional marginal oracle]\label{def:oracle}
   The \emph{conditional marginal oracle} $\mathsf{CO}$ takes as input a partial assignment $X_S = s$ and outputs the conditional marginal distributions of $\mu$ given $X_S = x_S$. Formally, $$\mathsf{CO}(X_i \mid X_S = x_S) = \{\mathbf \mu(X_i \mid X_S = x_S)\}_{i \not\in S}.$$ If the pinning $X_S = x_S$ is impossible in $\mathrm{supp}(\mu)$, output an arbitrary element of $\Delta(\Sigma)^{n-|S|}$.
\end{definition}

\noindent In our upper bounds, we will only ever use the oracle to obtain conditional marginals to sample from in parallel, as this is the standard way in practice to use this oracle. Our lower bounds however apply to the most general setting of arbitrary randomized algorithms with adaptive query access to $\mathsf{CO}$ (see Definition~\ref{def:sampling}).

Note that $\mathsf{CO}$ is an exact oracle, whereas in practice, an approximate oracle $\widehat{\mathsf{CO}}$ is learned from the training data. However, in Appendix \ref{app:decoupling}, we show following \cite{li2025convergence} that the error of our sampling algorithms can be decoupled into learning and sampling error. Since this work focuses on the sampling procedure and error, we will assume that the learned oracle is perfect. 
\subsection{Information-theoretic quantities}

In this section, we will recall some information theoretic quantities and prove a few preliminary lemmas which will be useful in the subsequent proofs of our main results. First, recall that $H(X)$ refers to the entropy of a random variable, and $H(Y \mid X)$ refers to the conditional entropy of a pair of random variables. Moreover, recall from Definition \ref{tc_dtc_defn} that $$\TC = \TC(\mu) := \left(\sum_{i=1}^n H(X_i)\right) - H(X_1, \ldots, X_n)$$ and $$\DTC = \DTC(\mu):= H(X_1, \ldots, X_n) - \sum_{i=1}^n H(X_i \mid X_1, \ldots, X_{i-1}, X_{i+1}, \ldots, X_n).$$
To provide some intuition for these quantities, we first provide some examples of the $\TC$ and $\DTC$ of linear subspace and product mixture distributions. 

\begin{example}[Linear Subspaces] \label{example:subspace}
Suppose $\Sigma = \mathbb F_q$ and $\subspace \subseteq \Sigma^n$  denote a linear subspace of dimension $d$. Let $\mu_{\subspace}$ denote the uniform distribution over $\subspace$. Then there is a matrix $M$ such that $MU\sim \mu_{\subspace}$ for a uniform $U \in \mathbb F_q^d$. Then  $$\TC(\mu) = (n-k)\log q - d \log q = (n-d - k) \log q$$ where $k = \#\{i: M_i = 0\}$ is the number of rows of $M$ which are identically 0. Moreover,  $$\DTC(\mu) = d\log q - \ell \log q = (d-\ell) \log q,$$ where $\ell = \#\left\{i: M_i \not\in \mathrm{span}\left(\{M_j\}_{j \neq i}\right)\right\}$ is the number of rows of $M$ which are not in the span of the remaining rows. In general, we will often have $k = \ell = 0$, in which case $\TC(\mu)$ and $\DTC(\mu)$ are the codimension and dimension of $\subspace$, respectively.
\end{example}

\begin{example}[Mixtures of Products]
    The DTC of product mixtures has been studied in-depth before in \cite{austin2020multi}. In particular, by Proposition 8.1 of \cite{austin2020multi}, the DTC of a mixture of $m$ product distributions is at most $\log m$. Thus, any mixture $\mu$ of $2^{o(n)}$ products satisfies $\DTC(\mu) = o(n)$. 
    
    In the converse direction, by Theorem A of \cite{austin2020multi}, any $\mu$ with $\DTC(\mu) = o(n)$ is close in transport distance to a relatively ``simple'' mixture of products.
\end{example}

\noindent Next, we define a sequence of values based on the average entropy of fixed-cardinality subsets. They will be useful to help analyze the information curve.

\begin{definition}[Average entropy curve]
    The \emph{average entropy curve} of the distribution $(X_1, \ldots, X_n) \sim \mu$ is given by\footnote{For $i = 0$, the entropy of the empty set is defined to be 0.} $$H_i(\mu) = \frac{1}{\binom ni} \sum_{S \subseteq [n], |S| = i} H\left(\{X_i\}_{i \in S}\right) = \mathbb E_{ |S| = i} H\left(\{X_j\}_{j \in S}\right)\,.$$
    When $\mu$ is clear from context, we denote this by $H_i$. 
\end{definition}

\noindent We can then express the information curve in terms of the average entropy curve as follows.
\begin{lemma}
    We have $Z_i = H_1 + H_{i-1} - H_i$. 
\end{lemma}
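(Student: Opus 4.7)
The plan is to unpack the definition of mutual information and take expectation term by term, using symmetry of the uniform distribution over subsets of a given size.

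First I would write $I(X_i; X_S) = H(X_i) + H(X_S) - H(X_{S \cup \{i\}})$, so that
\begin{equation}
    Z_j = \E[|S|=j-1,\ i\notin S]{H(X_i)} + \E[|S|=j-1,\ i\notin S]{H(X_S)} - \E[|S|=j-1,\ i\notin S]{H(X_{S\cup\{i\}})}\,.
\end{equation}
The three expectations are taken over a uniformly random pair $(i, S)$ with $|S| = j-1$ and $i \notin S$, equivalently, a uniformly random ordered pair (singleton, disjoint $(j-1)$-set).

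Next I would compute each term by identifying the marginal law of the set whose entropy is being averaged. By the obvious symmetry of the sampling procedure, the marginal of $i$ is uniform over $[n]$, giving the first expectation equal to $\frac{1}{n}\sum_{i=1}^n H(X_i) = H_1$. Likewise the marginal of $S$ is uniform over $(j-1)$-subsets of $[n]$, giving the second term equal to $H_{j-1}$. For the third term I need to check that $T := S \cup \{i\}$ is uniform over $j$-subsets of $[n]$: each $j$-subset $T$ arises from exactly $j$ ordered pairs $(i, S)$ with $i \in T$ and $S = T\setminus\{i\}$, so the pushforward is uniform on $j$-subsets, making the third expectation $H_j$.

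Combining the three pieces yields $Z_j = H_1 + H_{j-1} - H_j$, which is the claimed identity. There is no real obstacle here; the only small thing to be careful about is the combinatorial argument showing that $S \cup \{i\}$ is uniform on $j$-subsets, but this is immediate from the double-counting $n \binom{n-1}{j-1} = j \binom{n}{j}$.
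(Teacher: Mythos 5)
Your proof is correct and takes essentially the same route as the paper: expand $I(X_i;X_S) = H(X_i) + H(X_S) - H(X_{S\cup\{i\}})$ and take expectations, observing that the marginal laws of $i$, $S$, and $S\cup\{i\}$ are uniform over singletons, $(j-1)$-subsets, and $j$-subsets respectively. The paper phrases the last step by reparametrizing to a uniform pair $(T,j)$ with $|T|=j$ and $j\in T$, but this is the same double-counting you spell out.
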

\begin{proof}
    Recall that $I(X_i; X_{S \backslash \{i\}}) = H(X_i) + H(X_{S\backslash \{i\}}) - H(X_S) $; taking expectations, we find that \begin{align*}Z_i &= \mathbb E_{|S| = i-1, j \not\in S}\left[I(X_j; X_{S})\right] \\&= \mathbb E_{|T| = i, j \in T} \left[H(X_j) + H(X_{T \backslash \{j\}}) - H(X_S)\right] \\ &= H_1 + H_{i-1 } - H_i,\end{align*} as desired. 
\end{proof}

\noindent We now provide several useful statements involving the information curve, namely that it provides a clean way to express the TC and DTC of a distribution. 

\begin{lemma}\label{tc_dtc_facts}
    We have the expressions \begin{enumerate}
        \item $\TC = \sum_{i=1}^n Z_i$ and 
        \item $\DTC = nZ_n - \sum_{i=1}^n Z_i = nZ_n - \TC$. 
    \end{enumerate}
\end{lemma}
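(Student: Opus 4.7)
The plan is to prove both identities by substituting the expression $Z_i = H_1 + H_{i-1} - H_i$ (with the convention $H_0 = 0$) into the sum and exploiting a telescoping cancellation, then recognizing the resulting expressions as $\TC$ and $\DTC$ respectively.

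\textbf{Part 1.} First I would compute
\begin{equation*}
\sum_{i=1}^n Z_i = \sum_{i=1}^n \bigl(H_1 + H_{i-1} - H_i\bigr) = n H_1 + \sum_{i=1}^n (H_{i-1} - H_i) = n H_1 + H_0 - H_n = n H_1 - H_n.
\end{equation*}
By definition $H_1 = \tfrac{1}{n}\sum_{i=1}^n H(X_i)$ and $H_n = H(X_1,\ldots,X_n)$, so the right-hand side is exactly $\sum_{i=1}^n H(X_i) - H(X_1,\ldots,X_n) = \TC(\mu)$.

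\textbf{Part 2.} For the DTC identity, I would first rewrite $\DTC$ purely in terms of the average entropy curve. Using the chain rule, $H(X_i \mid X_{-i}) = H_n - H(X_{-i})$, where $X_{-i}$ denotes the $(n-1)$-subset omitting index $i$. Averaging over $i$ gives $\tfrac{1}{n}\sum_i H(X_{-i}) = H_{n-1}$, so
\begin{equation*}
\DTC(\mu) = H_n - \sum_{i=1}^n \bigl(H_n - H(X_{-i})\bigr) = H_n - n H_n + n H_{n-1} = n H_{n-1} - (n-1) H_n.
\end{equation*}
On the other side, using Part 1 and $Z_n = H_1 + H_{n-1} - H_n$,
\begin{equation*}
n Z_n - \sum_{i=1}^n Z_i = \bigl(n H_1 + n H_{n-1} - n H_n\bigr) - \bigl(n H_1 - H_n\bigr) = n H_{n-1} - (n-1) H_n,
\end{equation*}
which matches the expression for $\DTC(\mu)$.

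There is essentially no obstacle here; both identities reduce to a one-line telescoping sum once the substitution $Z_i = H_1 + H_{i-1} - H_i$ is made. The only mild subtlety is verifying that the sum $\sum_i H(X_{-i})/n$ really equals $H_{n-1}$ (i.e., that the uniform average of entropies over all $(n-1)$-subsets is the same as averaging over the specific subsets $[n]\setminus\{i\}$), but this is immediate because the $n$ such subsets are exactly all $(n-1)$-subsets of $[n]$.
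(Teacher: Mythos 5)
Your proof is correct and takes essentially the same approach as the paper: both use the identity $Z_i = H_1 + H_{i-1} - H_i$ and a telescoping sum for Part 1, and both use the chain rule to express the conditional entropies in terms of the average entropy curve for Part 2. The only cosmetic difference is that the paper computes $\DTC + \TC = nZ_n$ and then subtracts Part 1, while you express both $\DTC$ and $nZ_n - \sum_i Z_i$ as $nH_{n-1} - (n-1)H_n$ and compare directly; these are the same calculation in a slightly different order.
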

\begin{proof}
First, observe that $$\sum_{i=1}^n Z_i = nH_1 - H_n = \sum_{i=1}^n H(X_i) - H(X_1, \ldots, X_n) = \TC,$$ which proves item 1. Lastly, using the chain rule of conditional entropy, we observe that \begin{align*}\DTC + \TC &= \sum_{i=1}^n (H(X_i) - H(X_i \mid X_1, \ldots, X_{i-1}, X_{i+1}, \ldots, X_n)) \\ &= \sum_{i=1}^n (H(X_i) + H(X_1, \ldots, X_{i-1}, X_{i+1}, \ldots, X_n) - H(X_1, \ldots, X_n)) \\ &= n(H_1 + H_{n-1} - H_n) \\ &= nZ_n.  \end{align*} Combining this equation with item 1, it follows that $$\DTC = nZ_n - TC= nZ_n - \sum_{i=1}^n Z_i,$$ which proves item 2.
\end{proof}

\section{Sampling error in terms of unmasking schedule: proof of Theorem \ref{thm:main}}\label{main}
In this section we establish an upper bound for the expected KL error of a fixed and then random unmasking algorithm, effectively proving Theorem \ref{thm:main}. We first formalize the definition of fixed unmasking algorithms as follows.

\begin{definition}[Fixed Unmasking Algorithm]
    The \textit{fixed unmasking algorithm} with subset schedule $(S_1, \ldots, S_k)$, given by $\calA_{\mathrm{fixed}}(k, \{S_i\}_{i = 1}^k)$, proceeds as follows. First define $N_i = \sum_{j=1}^i s_j$, where $N_0 = 0$ and $s_j = |S_j|$. Then at each stage $i \in [k]$, beginning at $i = 1$, independently and in parallel sample $$x_{j} \sim \mu\left(X_{j} \bigm| X_{\bigsqcup_{t = 1}^{i - 1} S_t} \right)$$ for all $j \in S_i$. The algorithm then outputs the sample $(x_1, \ldots, x_n) \sim \nu^{S_1, \ldots, S_k}$. 
\end{definition}

We next define the random unmasking algorithm as follows, based on the fixed unmasking algorithm. This is the formal definition for the algorithm that was outlined in Section~\ref{intro}.

\begin{definition}[Random Unmasking Algorithm]
    The \emph{random unmasking algorithm with unmasking schedule $(k, \{s_i\}_1^k)$}, given by $\calA(k, \{s_i\}_1^k)$, proceeds as follows. First, sample a uniformly random partition of coordinates $S = \sqcup_{i = 1}^k S_i, |S_i| = s_i$. Output a sample $(x_1, \ldots, x_n) \sim v^{S_1, \ldots, S_k}$ given by $\calA_{\mathrm{fixed}}(k, \{S_i\}_{i = 1}^k)$. The algorithm then outputs the sample $(x_1, \ldots, x_n) \sim \nu$. 
\end{definition}

\noindent Note that the fixed unmasking algorithm $\calA_{\mathrm{fixed}}$ essentially chooses $s_i$ tokens at fixed positions at each stage $i$ and samples all tokens independently and in parallel, and the random unmasking algorithm selects the token positions at each stage uniformly at random amongst all masked tokens. We can now formally state and prove (a slightly stronger form of) Theorem \ref{thm:main}.

\begin{theorem}\label{main_result}
    Let $\mu$ denote the underlying distribution of data. Let $(k, \{s_i\}_1^k)$ be an unmasking schedule and $\{S_i\}_1^k, |S_i| = s_i$ be a fixed subset schedule. Let $N_i = \sum_{j=1}^i s_j$ denote the partial sums of the $s_i$ sequence, where $N_0 = 0$. Suppose the fixed unmasking algorithm $\calA_{\mathrm{fixed}}(\{S_i\}_1^k)$ samples a distribution $\nu^{S_1, \ldots, S_k}$ and the random unmasking algorithm $\calA(k, \{s_i\}_1^k)$ samples a distribution $\nu$. Then both algorithms have query complexity $k$, and $\nu$ achieves KL error relative to $\mu$ of $$\KL \mu \nu \leq \mathbb E_{S_1, \ldots, S_k}\left[\KL \mu {\nu^{S_1, \ldots, S_k}} \right] = \sum_{i=1}^k \left(\sum_{j=1}^{s_i} \left(Z_{N_{i-1} + j} - Z_{N_{i-1} + 1}\right)\right),$$ where the expectation is taken over all partitions $S = \bigsqcup_{i=1}^k S_i$ for which  $|S_i|= s_i$. 
\end{theorem}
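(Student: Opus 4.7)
\textbf{Proof plan for Theorem \ref{main_result}.}

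The plan is to decompose the KL divergence along the $k$ steps of the sampler via the chain rule, recognize each per-step summand as a total correlation, and then use symmetry of the random partition together with the relation $Z_m = H_1 + H_{m-1} - H_m$ to obtain the claimed closed form.

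First I will fix a subset schedule $S_1,\ldots,S_k$ and condition on the prefix $X_{S_{<i}} := X_{S_1\cup\cdots\cup S_{i-1}}$. The sampler $\calA_{\mathrm{fixed}}$ draws the block $X_{S_i}$ independently coordinate-by-coordinate from $\mu(\cdot\mid X_{S_{<i}})$, so the conditional distribution of $X_{S_i}$ under $\nu^{S_1,\ldots,S_k}$ is exactly $\mu^{\otimes}(\cdot\mid X_{S_{<i}})$, the product of one-dimensional marginals of $\mu(\cdot\mid X_{S_{<i}})$. By the chain rule for KL along the ordered partition $S_1,\ldots,S_k$,
\begin{equation}
\KL{\mu}{\nu^{S_1,\ldots,S_k}} \;=\; \sum_{i=1}^k \mathbb E_{x_{S_{<i}}\sim \mu}\!\left[\KL{\mu(X_{S_i}\mid X_{S_{<i}}=x_{S_{<i}})}{\mu^{\otimes}(X_{S_i}\mid X_{S_{<i}}=x_{S_{<i}})}\right].
\end{equation}
Each summand is the (conditional) total correlation of $X_{S_i}$ given $X_{S_{<i}}$, which unfolds into entropies as $\sum_{j\in S_i} H(X_j\mid X_{S_{<i}}) - H(X_{S_i}\mid X_{S_{<i}})$.

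Next I take expectation over a uniformly random partition with block sizes $s_1,\ldots,s_k$. By symmetry, the marginal law of $S_{<i}$ is uniform over subsets of $[n]$ of size $N_{i-1}$, and the marginal law of $S_{<i}\cup S_i$ is uniform over subsets of size $N_i$; similarly, conditional on $S_{<i}$, each $j\in S_i$ is a uniformly random element of $[n]\setminus S_{<i}$. This gives, for each step $i$,
\begin{align}
\mathbb E\!\left[\sum_{j\in S_i}H(X_j\mid X_{S_{<i}})\right] &= s_i\cdot \mathbb E_{|S|=N_{i-1},\,j\notin S}[H(X_j\mid X_S)] \;=\; s_i\bigl(H_1 - Z_{N_{i-1}+1}\bigr),\\
\mathbb E\!\left[H(X_{S_i}\mid X_{S_{<i}})\right] &= \mathbb E[H(X_{S_{<i}\cup S_i})] - \mathbb E[H(X_{S_{<i}})] \;=\; H_{N_i} - H_{N_{i-1}},
\end{align}
where the first identity uses the definition $Z_{N_{i-1}+1}=\mathbb E[I(X_j;X_S)]$ for $|S|=N_{i-1},\ j\notin S$.

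To conclude, I will verify that the target expression $\sum_{j=1}^{s_i}(Z_{N_{i-1}+j}-Z_{N_{i-1}+1})$ equals $s_i(H_1-Z_{N_{i-1}+1}) - (H_{N_i}-H_{N_{i-1}})$. Using $Z_m = H_1 + H_{m-1} - H_m$, the sum $\sum_{j=1}^{s_i} Z_{N_{i-1}+j}$ telescopes to $s_i H_1 + H_{N_{i-1}} - H_{N_i}$, and subtracting $s_i Z_{N_{i-1}+1}$ gives exactly the per-step contribution computed above. Summing over $i$ yields the claimed equality. Finally, the bound $\KL{\mu}{\nu}\le \mathbb E_{S_1,\ldots,S_k}[\KL{\mu}{\nu^{S_1,\ldots,S_k}}]$ follows from convexity of KL in its second argument, since $\nu$ is the mixture $\mathbb E[\nu^{S_1,\ldots,S_k}]$; the query complexity claim is immediate as each of the $k$ stages makes one batched call to $\mathsf{CO}$.

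The computation is essentially routine once the chain-rule decomposition is in place; the only step requiring care is the symmetry argument that ensures $S_{<i}$ and $S_{<i}\cup S_i$ have uniform marginals of the appropriate sizes under the random partition, which is what lets the conditional entropies average into the average entropy curve $H_m$ and hence into the information curve $\bZ$.
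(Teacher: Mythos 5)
Your proof is correct and follows essentially the same route as the paper's: chain-rule decomposition of the KL divergence into per-step conditional total correlations, averaging over the random partition using the symmetry that makes $S_{<i}$ and $S_{<i}\cup S_i$ uniformly random subsets of sizes $N_{i-1}$ and $N_i$, rewriting through the identity $Z_m = H_1 + H_{m-1} - H_m$, and convexity of KL for the mixture bound. The only cosmetic difference is that the paper writes the chain rule via a single log-ratio expectation before splitting, while you split directly; the key symmetry observation and the telescoping algebra are identical.
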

\begin{proof}
Let $T_i = \bigcup_{j < i} S_i$ with $T_1 = \emptyset$ be the coordinates which have already been sampled at stage $i$. We first work with the distribution $\nu^{S_1, \ldots, S_k}$. Observe that \begin{align*}\KL \mu {\nu^{S_1, \ldots, S_k}} &= \mathbb E_{\mathbf x \sim \mu} \left[\log \frac{f_\mu(\mathbf x)}{f_\mu^\otimes(\mathbf x)}\right] \\ &= \mathbb E_{\mathbf x \sim \mu} \left[\sum_{i=1}^k \log \frac{f_{\mu(X_{S_i} \mid X_{T_i} = x_{T_i})}}{f_{\mu^\otimes(X_{S_i} \mid X_{T_i} = x_{T_i})}}\right] \\ &= \sum_{i = 1}^k \mathbb E_{\mathbf x \sim \mu} \left[\KL{\mu(X_{S_i} \mid X_{T_i} = x_{T_i})}{\mu^\otimes(X_{S_i} \mid X_{T_i} = x_{T_i})}\right].\end{align*} To simplify this, observe that the inner KL term is essentially a total correlation of the conditional distribution of $X_{S_i}$ given $X_{T_i} = x_{T_i}$. Therefore, it follows that \begin{align*}&\mathbb E_{\mathbf x \sim \mu} \left[\KL{\mu(X_{S_i} \mid X_{T_i} = x_{T_i})}{\mu^\otimes(X_{S_i} \mid X_{T_i} = x_{T_i})}\right] \\ &\hspace{1.5cm}= \mathbb E_{\mathbf x \sim \mu} \left[\left(\sum_{j \in S_i} H(X_j \mid X_{T_i} = x_{T_i})\right) - H(X_{S_i} \mid X_{T_i} = x_{T_i})\right] \\ &\hspace{1.5cm}= \left(\sum_{j \in S_i} H(X_j \mid X_{T_i})\right) - H(X_{S_i} \mid X_{T_i}) \\ &\hspace{1.5cm}= \left(\sum_{j \in S_i} H(X_{T_i \cup \{j\}}) - H(X_{T_i})\right) - \left(H(X_{S_i \sqcup T_i}) - H(X_{T_i})\right) ,\end{align*} where in the second equality $H(X_j \mid X_{T_i} = x_{T_i})$ denotes the entropy of the conditional distribution of $X_j$ given $X_{T_i} = x_{T_i}$, while $H(X_j \mid X_{T_i})$ denotes the conditional entropy of $X_j$ given $X_{T_i}$.

Combining this with the previous equation, we find that \begin{align*}
    \KL \mu {\nu^{S_1, \ldots, S_k}} &= \sum_{i=1}^k \left[\left(\sum_{j \in S_i} H(X_{T_i \cup \{j\}}) - H(X_{T_i})\right) - \left(H(X_{S_i \sqcup T_i}) - H(X_{T_i})\right)\right].
\end{align*} Recall now that $\nu$ is given by the mixture $$\nu = \frac{1}{\binom{n}{s_1 \ldots s_k }} \sum_{\{S_i\}_1^k, |S_i| = s_i} \nu^{S_1, \ldots ,S_k}.$$ We therefore find that \allowdisplaybreaks
\begin{align*}
    \KL \mu \nu &= \KL \mu {\frac{1}{\binom{n}{s_1 \ldots s_k }} \sum_{\{S_i\}_1^k, |S_i| = s_i} \nu^{S_1, \ldots ,S_k}} \\ &\leq \frac{1}{\binom{n}{s_1 \ldots s_k }} \sum_{\{S_i\}_1^k, |S_i| = s_i} \KL \mu {\nu^{S_1, \ldots ,S_k}} \\ &= \mathbb E_{\{S_i\}_1^k, |S_i| = s_i}\left[\KL \mu {\nu^{S_1, \ldots, S_k}}\right] \\ &= \mathbb E_{\{S_i\}_1^k, |S_i| = s_i}\left[\sum_{i=1}^k \left[\left(\sum_{j \in S_i} H(X_{T_i \cup \{j\}}) - H(X_{T_i})\right) - \left(H(X_{S_i \sqcup T_i}) - H(X_{T_i})\right)\right]\right]  \\ &= \sum_{i=1}^k \Bigg[s_i\left(\mathbb E_{|S| = N_{i-1}+1}\left[H(X_S)\right] - \mathbb E_{|S| = N_{i-1}}\left[H(X_S)\right]\right) - \left(\mathbb E_{|S| = N_i}\left[H(X_S)\right] - \mathbb E_{|S| = N_{i-1}}\left[H(X_S)\right]\right)\Bigg] \\ &= \sum_{i=1}^k \Bigg[ s_i\left(H_{N_{i-1}+1} - H_{N_{i-1}}\right) - \left(H_{N_i} - H_{N_{i-1}}\right)\Bigg] \\ &= \sum_{i=1}^k \Bigg[s_i H_1 - s_i Z_{N_{i-1} + 1} - \sum_{j=1}^{s_i} (H_{N_{i-1} + j} - H_{N_{i-1} + j-1})\Bigg] \\ &= \sum_{i=1}^k \left[\left( \sum_{j=1}^{s_i} Z_{N_{i-1}+j}\right) - s_i Z_{N_{i-1}+1}\right],  
\end{align*} where the first line is an equality, the second by convexity of KL, the third and fourth lines are direct simplification, the fifth line follows from the fact that $T_i \cup \{j\}$, $T_i$, $S_i$, and $S_i \sqcup T_i$ are individually uniformly random subsets of $[n]$ of sizes $N_{i-1} + 1$, $N_{i-1}$, $s_i$, and $N_i$, respectively, and the final three lines are via directly applying the definitions of $H_i$ and $\bZ$. The theorem follows from the first, third, and final lines.
\end{proof}

\noindent We make two brief comments about Theorem \ref{main_result}.
\medskip

\noindent \textbf{Theorem \ref{main_result} and Theorem \ref{thm:main}.} First, we note that Theorem 1.4 is an immediate corollary. 

\begin{proof}[Proof of Theorem 1.4]
    Let $\mathbf N_a = 1 + \sum_{t = 1}^{a-1} s_t \hspace{0.2cm} \forall a \in [k]$ and $\mathbf N_0 = 1$. By Theorem \ref{main_result}, and the definition of $\mathbf Z^{\mathbf N}$, we have that $$\mathbb E_{S_1, \ldots, S_k} \left[\KL \mu {\nu^{S_1, \ldots, S^k}}\right] = \sum_{i = 1}^k \left(\sum_{j = 1}^{s_i} (Z_{N_{i-1} + j} - Z_{N_{i-1} + 1})\right) = \|\mathbf Z - \mathbf Z^{\mathbf N}\|_{L^1},$$ yielding the formula for KL error. The remainder of the theorem statement is obvious. 
\end{proof}

\noindent \textbf{Comparison between fixed and random unmasking algorithm.} There are two methods of approaching sampling: first, by fixing the schedule $S_i$ ahead of time, and second, by resampling $S_i$ from $|S_i| = s_i$ for each sample. These correspond to the fixed and random unmasking algorithms, respectively. We observe that the inequality in Theorem \ref{main_result} shows that the distribution outputted by the \textit{random} unmasking algorithm is, on average, superior in respect to KL-error from $\mu$ to the distribution outputted by the \textit{fixed} unmasking algorithm. This is an additional guarantee not given in Theorem \ref{thm:main}, and suggests that the random unmasking algorithm is superior, albeit requiring an additional step in the sampling process.

\section{Lower bounds on competing with the oracle rate}\label{lower_bounds}

\begin{definition}[MDS codes]\label{def:mds}
    A $k$-dimensional linear subspace $\subspace$ of $\mathbb{F}^n_q$ is an \emph{maximum distance separable (MDS) code} if for any $k\times n$ matrix $M$ whose rows constitute a basis for $\subspace$, every $k$ columns of $M$ are linearly independent. We denote by $\mathrm{Unif}(\subspace)$ the uniform distribution over points in $\subspace$.

    In this work, we will consider \emph{affine shifts} of MDS codes. That is, we will consider distributions over \emph{affine} subspaces which are given by taking some MDS code and translating it by a fixed vector in $\mathbb{F}^n_q$. We will abuse terminology and refer to such affine subspaces as MDS codes.
\end{definition}

\noindent We will consider ``random'' MDS codes:

\begin{definition}[Balanced random MDS codes]\label{def:balanced}
    A distribution $\calD$ over $k$-dimensional MDS codes is \emph{balanced} if for every subset $S\subseteq[n]$ for which $|S|\ge k$ and every partial assignment $x \in \mathbb{F}^{|S|}_q$,
    \begin{equation}
        \Pr[\subspace\sim \calD]{\exists \ x^*\in \subspace: x^*_S = x} = (1/q)^{|S| - k} 
    \end{equation}
\end{definition}

\noindent \emph{Reed-Solomon codes} provide an example of MDS codes. Below, we recall their definition.

\begin{definition}[Reed-Solomon codes]
     Let $q$ be any prime power exceeding $n$, and let $k$ be any value between $1$ and $n - 1$. A \emph{$k$-dimensional Reed-Solomon (RS) code in $\mathbb{F}^n_q$} is a linear subspace specified as follows. It is specified by a collection of distinct \emph{evaluation points} $a_1,\ldots,a_n\in\mathbb{F}_q$, and is given by the set of all evaluations $(p(a_1),\ldots,p(a_n))$ where $p$ is a polynomial over $\mathbb{F}_q$ of degree less than $k$.

     As in Definition~\ref{def:mds}, we will abuse terminology and also refer to affine shifts of RS codes as RS codes.
\end{definition}

\noindent We will leverage the following basic property of MDS codes:

\begin{proposition}\label{prop:independent}
    Let $\mu = \mathrm{Unif}(\subspace)$ for any $k$-dimensional MDS code $\subspace\subseteq\mathbb{F}^n_q$. Then for any $S\subseteq[n]$ satisfying $|S| < k$ and any partial assignment $x\in\mathbb{F}_q^{|S|}$, $\mu(X_i \mid X_S = x) = \mathrm{Unif}(\mathbb{F}_q)$ for all $i\not \in S$.

    In particular, this implies that $Z_j(\mu) = \log_2(q) \cdot \mathbb{I}[j> k]$.
\end{proposition}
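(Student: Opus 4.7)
The plan is to leverage the defining property of MDS codes: if $M$ is a $k\times n$ matrix whose rows form a basis for the underlying linear code, then every $k\times k$ submatrix formed by a choice of $k$ columns is invertible. Equivalently, the projection of the codeword to any $k$ coordinates is a bijection onto $\mathbb{F}_q^k$. Since an affine shift of a code does not alter linear independence of columns of $M$, the same bijection property holds for $\subspace$ in Definition~\ref{def:mds}.

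For the first claim, fix $S\subseteq[n]$ with $|S|<k$ and $i\not\in S$. I would enlarge $S\cup\{i\}$ to a set $T\supseteq S\cup\{i\}$ with $|T|=k$. By the MDS bijection property applied to $T$, the projection of $\mu=\mathrm{Unif}(\subspace)$ onto the coordinates in $T$ is the uniform distribution on $\mathbb{F}_q^k$. Since $X_T$ is uniform on $\mathbb{F}_q^k$, conditioning on any partial assignment $X_S = x$ (with $S\subseteq T$) yields the uniform distribution on $X_{T\setminus S}$, and in particular $X_i \mid X_S = x$ is uniform on $\mathbb{F}_q$, as desired.

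For the information-curve claim, I would split on whether $j\le k$ or $j>k$. When $j\le k$, we have $|S|=j-1<k$, so the previous paragraph gives that $X_i\mid X_S = x$ is uniform on $\mathbb{F}_q$ for every $x$, and the same paragraph applied with $S=\emptyset$ shows $X_i$ is marginally uniform. Hence $X_i$ is independent of $X_S$ and $I(X_i;X_S)=0$. When $j>k$, we have $|S|=j-1\ge k$, so I can select any subset $T\subseteq S$ with $|T|=k$; by the MDS bijection property, $X_T$ determines the entire codeword, and thus determines $X_i$. Therefore $H(X_i\mid X_S)\le H(X_i\mid X_T)=0$, giving $I(X_i;X_S)=H(X_i)=\log_2(q)$. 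Averaging over random $S$ of the appropriate size and random $i\notin S$ then yields $Z_j(\mu)=\log_2(q)\cdot\mathbb{I}[j>k]$.

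The argument is essentially routine once one exploits the MDS bijection property, so I do not anticipate any real technical obstacle; the only thing to be careful about is that both the ``uniform projection'' statement for small subsets and the ``deterministic reconstruction'' statement for large subsets follow from the \emph{same} bijection, which is what makes the two cases of the information curve dovetail so cleanly at the threshold $j=k$.
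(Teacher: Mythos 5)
Your proof is correct. The paper states Proposition~\ref{prop:independent} without proof, treating it as a standard property of MDS codes, and your argument via the MDS bijection property (every $k\times k$ submatrix of the generator matrix is invertible, so the projection of $\mathrm{Unif}(\subspace)$ onto any $k$ coordinates is uniform on $\mathbb{F}_q^k$, and onto $\geq k$ coordinates is injective) is precisely the standard argument one would give, handling both the $|S|<k$ uniformity and the $j>k$ determinism cases cleanly.
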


\noindent In addition, recall from the definition of the oracle that if $|S| > k$ and the partial assignment $X_S = x$ is incompatible with any element of $\subspace$, then the output of the conditional marginal oracle can be arbitrary. Throughout this section, we will take the oracle's output in this case to be $\mathrm{Unif}(\mathbb{F}_q)^{\otimes(n-|S|)}$.

We will also use the following property of \emph{random} Reed-Solomon codes. Given prime power $q \ge n$ and dimension $0 < k < n$, let $\calD_{n,k,q}$ denote the following distribution over $k$-dimensional RS codes over alphabet $q$. When $n,q$ are clear from context, we denote this by $\calD_k$.

\begin{lemma}\label{lem:RS_balanced}
     $\calD_{k}$ is balanced in the sense of Definition~\ref{def:balanced}.    
\end{lemma}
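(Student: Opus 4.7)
The plan is to leverage two ingredients: the MDS property of Reed--Solomon codes, which pins down the size of the projection $\pi_S(\subspace_0)$ for any $S$ with $|S|\ge k$, and a uniformly random affine shift baked into $\calD_{n,k,q}$, which randomizes where in $\mathbb{F}_q^{|S|}$ the test point $x$ effectively lands. That some such shift is present is essentially forced: without it, the zero codeword lies in every linear RS code, so for $|S|>k$ and $x=\mathbf 0$ the probability would be $1$ rather than $q^{-(|S|-k)}<1$, and balancedness would fail.

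Concretely, I would write a sample from $\calD_{n,k,q}$ as $\subspace = \subspace_0 + \vec b$, where $\subspace_0$ is the linear RS code cut out by sampling distinct evaluation points $a_1,\ldots,a_n\in\mathbb{F}_q$ together with the degree-$<k$ polynomial basis, and $\vec b\in \mathbb{F}_q^n$ is an independent, uniformly random affine shift. Fix $S\subseteq [n]$ with $|S|\ge k$ and $x\in\mathbb{F}_q^{|S|}$. Writing $\pi_S$ for the coordinate projection onto $S$, the event $\exists\, x^*\in\subspace: x^*_S = x$ is exactly the event $x - \vec b_S \in \pi_S(\subspace_0)$.

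Conditioning on $\subspace_0$ and using that $\vec b_S$ (hence $x-\vec b_S$) is uniform on $\mathbb{F}_q^{|S|}$ independently of $\subspace_0$, the conditional probability equals $|\pi_S(\subspace_0)|/q^{|S|}$. Here the MDS property of Proposition~\ref{prop:independent} enters: any $k$ columns of a generator matrix for $\subspace_0$ are linearly independent, so restricting $\pi_S$ to any $k$-subset of $S$ is already a bijection onto $\mathbb{F}_q^k$; in particular $\pi_S$ on all of $\subspace_0$ is injective and $|\pi_S(\subspace_0)|=q^k$. Hence the conditional probability is $q^{k-|S|}$ deterministically in $\subspace_0$, and averaging over $\subspace_0$ yields $q^{-(|S|-k)}$, which is exactly the balancedness condition of Definition~\ref{def:balanced}.

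The main obstacle, if one arises, is really just pinning down the precise form of $\calD_{n,k,q}$; once the uniform affine shift is in place, the MDS property carries out the rest of the argument and the calculation reduces to a one-line averaging. The same pattern (injectivity from MDS, uniformity from the shift) would work unchanged for any distribution over RS codes that is invariant under translation by $\mathbb{F}_q^n$.
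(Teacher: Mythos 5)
Your proof is correct, and in fact it fills a genuine gap in the paper: the paper states Lemma~\ref{lem:RS_balanced} without proof, and the definition of $\calD_{n,k,q}$ in the paper is incomplete (the sentence ``let $\calD_{n,k,q}$ denote the following distribution over $k$-dimensional RS codes over alphabet $q$'' trails off before actually specifying the distribution). You have correctly reconstructed what that definition must be: your observation that a uniformly random affine shift is \emph{forced} by balancedness --- since otherwise the all-zeros partial assignment would always be consistent with a linear code and the probability for $|S|>k$ would be $1$, not $q^{-(|S|-k)}$ --- is exactly right, and it dovetails with the paper's explicit note in Definitions~\ref{def:mds} and the RS-code definition that ``RS codes'' is being abused to mean affine shifts of RS codes.

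The core of the argument is sound. The MDS property gives that projection onto any $T\subseteq S$ with $|T|=k$ is a bijection from $\subspace_0$ to $\mathbb{F}_q^k$ (a degree-$<k$ polynomial is determined by $k$ evaluations), and since $\pi_T = \pi^S_T\circ\pi_S$, injectivity of $\pi_T$ on $\subspace_0$ implies injectivity of $\pi_S$ on $\subspace_0$, so $|\pi_S(\subspace_0)| = q^k$. Conditioning on $\subspace_0$ and using that $\vec b_S$ is uniform on $\mathbb{F}_q^{|S|}$ and independent of $\subspace_0$ gives $\Pr{x - \vec b_S \in \pi_S(\subspace_0)} = q^{k}/q^{|S|}$, deterministically in $\subspace_0$. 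Averaging is then trivial; as you note, the randomness (or not) of the evaluation points is immaterial to this calculation and matters only for how $\calD_k$ is used elsewhere. One small stylistic quibble: the phrase ``restricting $\pi_S$ to any $k$-subset of $S$ is already a bijection'' is slightly loose --- it is the composition with the further projection to $T$ that is a bijection, from which injectivity of $\pi_S$ follows --- but your intent is clear and the logic is valid.
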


Next, we formalize the model of computation under which we prove a lower bound.

\begin{definition}[Sampling algorithm]\label{def:sampling}
    Let $\calF\subseteq \Delta(\Sigma^n)$ denote some known family of distributions $\mu$. Given access to the conditional marginal oracle for some $\mu \in \calF$, an \emph{$\calF$-aware sampling algorithm $\calA$} is a procedure of the following form:
    \begin{enumerate}
        \item Repeat the following:
        \begin{itemize}
            \item Based only on the query outcomes from previous rounds and $\calF$ (and not on knowledge of $\mu$), and possibly using additional randomness, either query the oracle on a partial assignment $X_S = x$, or exit the loop.
            \item If the former, observe conditional marginals $\{\mu(X_i\mid X_S = x)\}_{i\not\in S}$
        \end{itemize}
        \item Output a string in $\Sigma^n$. 
    \end{enumerate}
    Importantly, the decision to exit out of the loop can be made adaptively. We say that $\calA$ is \emph{$T$-query} if with probability $1$ it performs at most $T$ queries to the oracle before terminating. We denote by $\calA[\mu]$ the distribution over outputs of $\calA$.
\end{definition}

\noindent Any such sampling algorithm can be naturally represented by a \emph{stochastic decision tree} as follows:

\begin{definition}[Stochastic decision tree representation]
    Any sampling algorithm $\calA$ can be regarded as an (infinite-degree) stochastic decision tree as follows. Every internal node is either a \emph{decision node} (including the root), a \emph{query node}, or a \emph{leaf node}. Decision and leaf nodes (resp. query nodes) are at even (resp. odd) distance from the root:
    \begin{itemize}[leftmargin=*]
        \item For every decision node $v$, the outgoing edges $(v,w)$ connect $v$ to query nodes $w$. Each such edge is labeled with a partial assignment $X_{S^{(w)}} = x^{(w)}$ with which to query the oracle. From $v$, the sampler transitions to $w$ with some probability $\Pr[\calA]{w\mid v}$.
        \item For every query node $w$, there is a continuum of infinitely many outgoing edges $(w,v')$, each labeled by an element of $\Delta(\Sigma)^{n - |S^{(w)}|}$ corresponding to a possible response by the oracle to the query $X_{S^{(w)}} = x^{(w)}$. From $w$, the sampler walks along the edge corresponding to the oracle's response to $X_{S^{(w)}} = x^{(w)}$.
        \item Each leaf node $\ell$ is labeled with a distribution $\nu_\ell$ over $\Sigma^n$, corresponding to the algorithm's (randomized) output if it has reached that state and decided to exit out of the loop. Let $\leaves(\mu)$ (resp. $\leaves^{\le T}(\mu)$) denote all possible leaf nodes of the stochastic decision tree corresponding to $\calA$ (resp. which are distance at most $2T$ from the root and reachable given oracle access to $\mu$). 
    \end{itemize}
    Every path from the root to a decision or leaf node $v$ is given by a path whose edges are alternatingly labeled by partial assignments $X_S = x$ and corresponding oracle responses.
    
    For any internal or leaf node $v$ of the tree, let $\Pr[\calA]{v\mid\mu}$ denote the probability that the algorithm traverses that node at some point in its execution, conditioned on the oracle responses coming from the conditional marginal oracle for $\mu$.
\end{definition}

\begin{definition}[Query budget and cost function]\label{def:budget}
    Let $\calT: \Delta(\Sigma^n)\to \mathbb{N}$ denote a \emph{query budget} for the query complexity of such a sampler, and define 
    \begin{equation}
        \costKL_\calT(\calA;\mu) \triangleq \begin{cases}
            \KL{\mu}{\calA[\mu]} & \calA \ \text{is at most} \ \calT(\mu)\text{-query} \\
            \infty & \text{otherwise}
        \end{cases}
    \end{equation}
    Define $\costTV_\calT(\calA;\mu)$ in the same way, with $\mathsf{KL}$ replaced by $\TV$.
\end{definition}

\subsection{Warmup example}
\label{sec:warmup}

We begin by exhibiting a simple ensemble of distributions for which no single algorithm can successfully sample from $\mu$ to error $\epsilon$ using $O(\min(\TC(\mu), \DTC(\mu)) \log(n)/\epsilon)$ for every $\mu$ in the ensemble.

Let $\calU$ denote the uniform distribution over $\mathbb{F}^n_q$, and let $\calF$ consist of $\calU$ as well as $\calU_\subspace$ for all Reed-Solomon codes $\subspace\subseteq\mathbb{F}^n_q$ of dimension $0 < k < n$. 

Formally, we show:

\begin{theorem}\label{thm:warmup}
    No $\calF$-aware sampling algorithm $\calA$ can achieve $\sup_{\mu\in\calF} \costTV_\calT(\calA;\mu) \le 1/16$ for any budget $\calT$ satisfying $\calT(\mu) \lesssim \max(1,\min(\TC(\mu),\DTC(\mu)))\log(n)$ for all $\mu \in \calF$.
\end{theorem}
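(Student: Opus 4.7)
The plan is a contradiction argument. Suppose such an $\calF$-aware algorithm $\calA$ exists with budget $\calT$ satisfying the hypothesis and $\sup_{\mu\in\calF}\TV(\calA[\mu],\mu)\le 1/16$. Since $\TC(\calU)=\DTC(\calU)=0$, the budget hypothesis forces $\calT(\calU)\lesssim \log n$. On $\calU$, the conditional marginal oracle always returns the uniform distribution, so for every fixing of $\calA$'s internal randomness $r$, the sequence of queries $(S_1^r,x_1^r),\ldots,(S_{T_r}^r,x_{T_r}^r)$ with $T_r\le \calT(\calU)$ is a deterministic function of $r$. The goal is to exhibit a specific MDS code $\subspace^*$ on which $\calA$'s run must be nearly identical to its run on $\calU$, yielding a TV contradiction.

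For a random RS code $\subspace\sim\calD_k$, I couple the runs of $\calA$ on $\calU$ and on $\calU_\subspace$ by sharing $r$ together with all final-output randomness. By Proposition~\ref{prop:independent}, every query with $|S|<k$ returns the uniform response under both $\calU$ and $\calU_\subspace$, hence agrees automatically. A query with $|S|\ge k$ agrees with the uniform response iff its partial assignment is inconsistent with $\subspace$; by the balancedness of $\calD_k$ (Lemma~\ref{lem:RS_balanced}), this inconsistency holds with probability $1-q^{-(|S|-k)}$ over $\subspace$. Conditioning on agreement up through step $i-1$ (so that the $i$-th query under $\calU_\subspace$ is precisely $(S_i^r,x_i^r)$), a union bound over the first-divergence step gives
\begin{equation}
\Pr_{\subspace\sim\calD_k}[\text{trajectories on }\calU,\calU_\subspace\text{ differ}\mid r]\le \sum_{i:|S_i^r|\ge k} q^{-(|S_i^r|-k)}.
\end{equation}

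Averaging additionally over $k$ uniform in $\{1,\ldots,n-1\}$, each inner sum becomes a geometric series $\frac{1}{n-1}\sum_{k=1}^{|S_i^r|}q^{-(|S_i^r|-k)}\le \frac{q}{(q-1)(n-1)}$, so the full bound is $\frac{q\,\calT(\calU)}{(q-1)(n-1)}\lesssim \log n/n$. Hence some $k^*\in[n-1]$ yields $\mathbb{E}_{\subspace\sim\calD_{k^*}}\TV(\calA[\calU],\calA[\calU_\subspace])\lesssim \log n/n$, and Markov delivers a specific $\subspace^*$ with $\TV(\calA[\calU],\calA[\calU_{\subspace^*}])\lesssim \log n/n$. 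Since $k^*\le n-1$ and $q\ge n\ge 2$, $\TV(\calU,\calU_{\subspace^*})=1-q^{k^*-n}\ge 1-1/q\ge 1/2$, so the triangle inequality
\begin{equation}
\TV(\calA[\calU_{\subspace^*}],\calU_{\subspace^*})\ge \TV(\calU,\calU_{\subspace^*})-\TV(\calU,\calA[\calU])-\TV(\calA[\calU],\calA[\calU_{\subspace^*}])\ge \tfrac{1}{2}-\tfrac{1}{16}-o(1)
\end{equation}
strictly exceeds $1/16$ for large $n$, contradicting the hypothesized bound on $\TV(\calA[\calU_{\subspace^*}],\calU_{\subspace^*})$.

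The main subtlety is the adaptivity of $\calA$: once a response diverges at some step $i$, the subsequent queries on $\calU_\subspace$ can be completely different from those on $\calU$, so one cannot naively union-bound ``response at step $i$ differs'' over the $\calU_\subspace$-trajectory. The fix is to charge the first divergence to the $\calU$-query $(S_i^r,x_i^r)$, which is determined by $r$ alone and hence is an assignment fixed independently of $\subspace$; balancedness of $\calD_k$ then applies directly and produces the clean $q^{-(|S_i^r|-k)}$ factor that drives the $O(\log n/n)$ bound. All other steps are standard triangle-inequality manipulation together with the quantitative density $|\subspace|/q^n=q^{k-n}$ of an MDS code.
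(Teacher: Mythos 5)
Your proposal is correct and takes essentially the same approach as the paper: both bound the probability that $\calA$'s behavior on $\calU_\subspace$ diverges from its behavior on $\calU$ by union-bounding, over the at most $\calT(\calU)$ queries determined by $\calA$'s run on $\calU$ and over the dimension $k$, the probability that a fixed partial assignment is consistent with a random Reed--Solomon code (via balancedness), then conclude by triangle inequality. The main cosmetic difference is that you present the argument as a direct coupling between the two runs with a Markov step to extract a specific hard $\subspace^*$, whereas the paper phrases it via the stochastic decision tree and a mixture distribution $\calD$ over $\calF$; these are equivalent formalizations, and your note about charging the first divergence to the $\calU$-trajectory query (which is $\subspace$-independent) is exactly the point that makes the adaptive union bound valid in both treatments.
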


\noindent We will use the following terminology: in the stochastic decision tree associated to $\calA$, a leaf $\ell$ is said to \emph{miss} a subspace $\subspace$ if, for all partial assignments labeling edges from the root-to-leaf path to $\ell$, either the assignment is of size less than $\dim \subspace$, or if otherwise there does not exist $x^* \in \subspace$ consistent with that assignment. Otherwise, $\ell$ is said to \emph{hit} $\subspace$.

\begin{proof}
    Let $\calD$ denote the mixture distribution over $\calF$ given by
    \begin{equation}
        \frac{1}{2}\delta_{\calU} + \frac{1}{2n-2}\sum^{n-1}_{k=1} \calD_k
    \end{equation}
    where $\calD_k$ are as defined in Lemma~\ref{lem:RS_balanced}. We will prove the stronger statement that no $\calF$-aware sampling algorithm $\calA$ can even achieve $\E[\mu\sim\calD]{\costTV_\calT(\calA;\mu)} \leq 1/4$.

    In order for $\costTV_\calT(\calA;\mu)$ to be finite, we must have $\leaves(\mu) = \leaves^{\le \calT(\mu)}(\mu)$. Henceforth, let $\leaves^* \coloneqq \leaves^{\le \calT(\calU)}(\calU)$. We must have
    \begin{equation}
        \TV\Bigl(\calU, \sum_{\ell\in\leaves^*} \Pr[\calA]{\ell \mid \calU}\cdot \nu_\ell\Bigr) \le 1/8\,, \label{eq:apply_pinsker}
    \end{equation}
    or else $\E[\mu\sim\calD]{\costTV_\calT(\calA;\mu)} \ge \frac{1}{2}\costTV_\calT(\calA;\calU) > 1/16$.

    For any leaf node $\ell$, let $v_1 \to w_1 \to v_2 \to\cdots \to w_{T-1} \to v_T$ denote the sequence of decision and leaf nodes along the root-to-leaf path to $\ell$, and suppose the edges $(v_i, w_i)$ are labeled with partial assignments $X_{S^{(i)}} = x^{(i)}$. If $\ell\in\leaves^*$, then the edges $(w_i, v_{i+1})$ are all labeled with $\mathrm{Unif}(\mathbb{F}_q)^{\otimes (n-|S^{(i)}|)}$. 
    
    Let $k_1 \le \cdots \le k_T$ denote the numbers $|S^{(1)}|, \ldots, |S^{(T)}|$ in sorted order. By Proposition~\ref{prop:independent}, for any MDS $\subspace$ of dimension $k > k_T$ we have $\Pr[\calA]{\ell \mid \calU_\subspace} = \Pr[\calA]{\ell \mid \calU}$. For $k_j < k < k_{j+1}$, by Lemma~\ref{lem:RS_balanced},
    \begin{equation}
        \Pr[\subspace\sim\calD_k]{\ell \ \text{avoids} \ \subspace} \ge 1 - \sum_{s>j} q^{-(k_s-k)} \ge 1 - T/q\,, \label{eq:avoid}
    \end{equation}
    and if $\ell$ avoids $\subspace$, the oracle's output under every query along the path is uniform marginals and again we have $\Pr[\calA]{\ell\mid\calU_\subspace} = \Pr[\calA]{\ell\mid \calU}$.
    The same reasoning applies to $k < k_1$.

    Let us write
    \begin{equation}
        \mathbb{E}_{\subspace\sim\calD_k}\TV\Bigl(\calU_\subspace , \sum_{\ell\in\leaves(\calU_\subspace)} \Pr[\calA]{\ell \mid \calU_\subspace}\cdot \nu_\ell\Bigr) \ge 1/2 - \mathbb{E}_{\subspace\sim\calD_k} \TV\Bigl(\calU, \sum_{\ell \in \leaves(\calU_\subspace)} \Pr[\calA]{\ell\mid\calU_\subspace}\cdot \nu_\ell\Bigr)
    \end{equation}
    where we used that $\TV(\calU, \calU_\subspace) \ge 1/2$ for any proper subspace $\subspace$. We can rewrite the mixture on the right-hand side as
    \begin{multline}
        \sum_{\ell \in \leaves^*:  \text{avoids} \ \subspace} \Pr[\calA]{\ell\mid \calU_\subspace}\cdot \nu_\ell + \sum_{\ell\in\leaves^*: \text{hits} \ \subspace}\Pr[\calA]{\ell\mid \calU_\subspace}\cdot \nu_\ell + \sum_{\ell \in \leaves(\calU_\subspace)\backslash\leaves^*}\Pr[\calA]{\ell\mid\calU_\subspace}\cdot \nu_\ell \\
        = \sum_{\ell\in\leaves^*} \Pr[\calA]{\ell\mid \calU}\cdot \nu_\ell - \sum_{\ell\in\leaves^*: \text{hits} \ \subspace} \Pr[\calA]{\ell\mid\calU}\cdot \nu_\ell + \sum_{\ell \in \leaves(\calU_\subspace)\backslash\leaves^*}\Pr[\calA]{\ell\mid\calU_\subspace}\cdot \nu_\ell\,, \label{eq:mixture}
    \end{multline}
    where we used that for $\ell\in\leaves^*$ that avoid $\subspace$, $\Pr[\calA]{\ell\mid\calU} = \Pr[\calA]{\ell\mid\calU_\subspace}$, and for $\ell \in \leaves^*$ that hit $\subspace$, it must be that $\Pr[\calA]{\ell\mid \calU_\subspace} = 0$ as the sampler under $\calU_\subspace$ must deviate from the path that leads to $\ell$. As $\sum_{\ell\in\leaves(\calU_\subspace)\backslash\leaves^*}\Pr[\calA]{\ell\mid\calU_\subspace} = \sum_{\ell\in\leaves^*:\text{hits} \subspace}\Pr[\calA]{\ell\mid\calU}$, the TV between $\calU$ and the mixture in Eq.~\eqref{eq:mixture} is thus upper bounded by $1/8 + \sum_{\ell\in\leaves^*: \text{hits} \ \subspace} \Pr[\calA]{\ell\mid \calU}$, and thus
    \begin{equation}
        \mathbb{E}_{\subspace\sim\calD_k}\TV\Bigl(\calU_\subspace , \sum_{\ell\in\leaves(\calU_\subspace)} \Pr[\calA]{\ell \mid \calU_\subspace}\cdot \nu_\ell\Bigr) \ge \frac{3}{8} - \sum_{\ell\in\leaves^*: \text{hits} \ \subspace} \Pr[\calA]{\ell\mid \calU}\,.
    \end{equation}

    We say that $\subspace$ is \emph{$\eta$-good} if it satisfies $\sum_{\ell\in \leaves^*: \text{hits} \subspace} \Pr[\calA]{\ell\mid \calU} \le \eta$ for some $\eta > 0$. Observe that 
    \begin{align}
        \frac{1}{n-1}\sum^{n-1}_{k=1} \Bigl\{\sum_{\ell\in\leaves^*}\Pr[\calA]{\ell\mid \calU} \cdot \Pr[\subspace\sim\calD_k]{\ell \ \text{hits} \ \subspace}\Bigr\} &= \sum_{\ell\in\leaves^*} \Pr[\calA]{\ell\mid \calU} \cdot \frac{1}{n-1}\sum^{n-1}_{k=1} \Pr[\subspace\sim\calD_k]{\ell \ \text{hits} \ \subspace} \\
        &\le \sum_{\ell\in\leaves^*} \Pr[\calA]{\ell \mid \calU} \cdot \frac{\calT(\calU) + (n - 1 - \calT(\calU)) \calT(\calU)/q}{n-1} \\
        &= \frac{\calT(\calU) + (n - 1 - \calT(\calU)) \calT(\calU)/q}{n-1} \\
        &\le \frac{2\calT(\calU)}{n - 1}
    \end{align}
    where in the second step we used that for any leaf $\ell$ at distance $2T$ from the root, there are at most $T$ dimensions $0 < k < n$ that are equal to the size of some partial assignment along the root-to-left path to $\ell$, and for all other dimensions $k$, $\Pr[\subspace\sim \calD_k]{\ell \ \text{hits} \ \subspace} \le T/q$ by Eq.~\eqref{eq:avoid}. By Markov's inequality, we conclude that for $\eta \coloneqq \frac{4\calT(\calU)}{n - 1} \ll 1$,
    \begin{equation}
        \Pr[0<k<n, \subspace\sim\calD_k]{\subspace \ \text{is} \ \eta\text{-good}} \ge 1/2 \,.
    \end{equation}
    We conclude that
    \begin{equation}
        \E[\mu\sim\calD]{\costTV_\calT(\calA;\mu)} \ge \frac{1}{2}\cdot \Pr[0<k<n, \subspace\sim\calD_k]{\subspace \ \text{is} \ \eta\text{-good}}\cdot \Bigl(\frac{3}{8} - \eta\Bigr) \ge \frac{1}{16}
    \end{equation}
    as claimed.
\end{proof}

\noindent In fact one sees from the definition of $\eta$ in the proof above that we have shown the even stronger statement that it is necessary to set the budget $\calT(\calU)$ for the uniform distribution to be \emph{linear} in $n$ for the costs to be sufficiently bounded across all $\mu\in\calF$. Intuitively, this comes from the fact that one has to make $\Omega(n)$ queries before one can decisively rule out that $\mu$ is supported on a subspace.

\subsection{Lower bounds for arbitrary information curves}
\label{sec:elevate}

Although the lower bound in Section~\ref{sec:warmup} is quite tailored to distributions over MDS codes, it turns out that the same idea can be extended to show that \emph{any} information curve admits a realization by some distribution which cannot be distinguished from a distribution with the same information curve except shifted upwards by an additive constant for all indices past a certain point.

\begin{theorem}\label{thm:elevate}
    Let $\bZ$ be the information curve associated to some distribution with total correlation $\TC$ and dual total correlation $\DTC$. Suppose that 
    \begin{equation}
        \min(\TC,\DTC)\log n \ll n\,. \label{eq:assumesmall}
    \end{equation}
    Given $1\le k < n$, let $\bZ^{\uparrow k}$ denote the information curve given by
    \begin{equation}
        Z^{\uparrow k}_j \coloneqq \begin{cases}
            Z_j & \text{if} \ j \le k \\
            Z_j + \log_2(q) & \text{if} \ j > k
        \end{cases}
    \end{equation}
    There exist a family $\calF$ of distributions whose information curves are from among $\{\bZ,\bZ^{\uparrow 1},\ldots,\bZ^{\uparrow n-1}\}$ such that for every $k$ there is at least one such distribution in $\calF$, and furthermore for any budget $\calT$ satisfying $\calT(\mu) \lesssim \max(1,\min(\TC(\mu), \DTC(\mu)))\log(n)$ for all $\mu\in\calF$, no $\calF$-aware sampling algorithm $\calA$ can achieve $\sup_{\mu\in\calF}\costTV_{\calT}(\calA;\mu) \le 1/16$.
\end{theorem}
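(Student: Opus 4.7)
The plan is to take $\mu^*$, any distribution over $\Sigma^n$ realizing the curve $\bZ$, lift the alphabet to $\widetilde\Sigma\coloneqq\Sigma\times\mathbb{F}_q$ for a prime power $q>n$ supporting Reed-Solomon codes in $\mathbb{F}_q^n$, and set $\widetilde\mu_0\coloneqq\mu^*\otimes\mathrm{Unif}(\mathbb{F}_q)^{\otimes n}$ together with $\widetilde\mu_{k,\subspace}\coloneqq\mu^*\otimes\mathrm{Unif}(\subspace)$ for each $k\in[n-1]$ and each $k$-dimensional MDS code $\subspace\subseteq\mathbb{F}_q^n$; take $\calF$ to consist of all of these. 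Because the $\Sigma$- and $\mathbb{F}_q$-components are independent under each of these products, mutual informations decompose additively, so by Proposition~\ref{prop:independent} we have $Z_j(\widetilde\mu_0)=Z_j$ and $Z_j(\widetilde\mu_{k,\subspace})=Z_j+\log_2(q)\,\mathbb{I}[j>k]=Z^{\uparrow k}_j$, precisely matching the theorem's requirement on the information curves of members of $\calF$.

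\textbf{Oracle coupling.} Next I would observe that the conditional marginal oracle at $(a_S,b_S)\in\widetilde\Sigma^{|S|}$ factorizes across the two components: the $\Sigma$-component is $\mu^*(X_i\mid X_S=a_S)$ under both distributions, while the $\mathbb{F}_q$-component is uniform under $\widetilde\mu_0$ versus $\mathrm{Unif}(\subspace)(Y_i\mid Y_S=b_S)$ under $\widetilde\mu_{k,\subspace}$. By Proposition~\ref{prop:independent}, the latter is also uniform whenever $|S|<k$, and, adopting the convention that out-of-support responses on the $\mathbb{F}_q$-component are uniform as well, the two oracles agree on every query except a ``hit'' ($|S|\geq k$ and $b_S\in\pi_S(\subspace)$). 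Coupling the sampler's executions on $\widetilde\mu_0$ and $\widetilde\mu_{k,\subspace}$ using common internal randomness, the trajectories---and hence leaves and outputs---coincide until the first hit.

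\textbf{Averaging and contradiction.} Following the blueprint of Theorem~\ref{thm:warmup}, I would set $\calD\coloneqq\tfrac12\delta_{\widetilde\mu_0}+\tfrac{1}{2(n-1)}\sum_{k=1}^{n-1}\calD'_k$, where $\calD'_k$ draws $\subspace\sim\calD_{n,k,q}$ and returns $\widetilde\mu_{k,\subspace}$, and derive a contradiction from the assumption $\sup_{\mu\in\calF}\costTV_\calT(\calA;\mu)\leq 1/16$. This assumption forces $\TV(\widetilde\mu_0,\calA[\widetilde\mu_0])\leq 1/8$ and $T_0\coloneqq\calT(\widetilde\mu_0)\lesssim\min(\TC,\DTC)\log n\ll n$. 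By Lemma~\ref{lem:RS_balanced}, a query of size $s\geq k$ hits $\subspace\sim\calD_{n,k,q}$ with probability $q^{k-s}$; summing over the $\leq T_0$ queries issued on $\widetilde\mu_0$ and averaging over $k\in[n-1]$ exactly as in the proof of Theorem~\ref{thm:warmup} yields $\mathbb{E}_{k,\subspace}\Pr[\text{any hit}]\lesssim T_0/(n-1)=o(1)$. The coupling provides $\TV(\calA[\widetilde\mu_0],\calA[\widetilde\mu_{k,\subspace}])\leq\Pr[\text{hit}]$, and $\TV(\widetilde\mu_0,\widetilde\mu_{k,\subspace})=1-q^{k-n}$ averages to $1-o(1)$ over $k$, so the triangle inequality produces
\begin{equation*}
\mathbb{E}_{\mu\sim\calD}\costTV_\calT(\calA;\mu)\geq\tfrac12\,\mathbb{E}_{k,\subspace}\TV(\widetilde\mu_{k,\subspace},\calA[\widetilde\mu_{k,\subspace}])\geq\tfrac12\bigl(1-\tfrac18-o(1)\bigr)>1/16,
\end{equation*}
the desired contradiction.

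\textbf{Main obstacle.} The delicate issue is controlling the sampler on $\widetilde\mu_{k,\subspace}$ despite its potentially much larger budget $\calT(\widetilde\mu_{k,\subspace})$: since $\min(\TC,\DTC)(\widetilde\mu_{k,\subspace})$ can grow by up to $\Theta(n\log q)$ under the shift (for intermediate $k$, $\TC$ jumps by $(n-k)\log q$ and $\DTC$ by $k\log q$), one might worry the sampler could run much longer on $\widetilde\mu_{k,\subspace}$ and thereby detect $\subspace$ via a query whose size happens to match $k$. The common-randomness coupling is precisely what sidesteps this: on the no-hit event the sampler's trajectory on $\widetilde\mu_{k,\subspace}$ is pinned to its short trajectory on $\widetilde\mu_0$, so the effective query count stays bounded by $T_0$, and this is all that the hit-probability union bound requires.
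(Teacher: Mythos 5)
Your proposal is correct and uses the same construction as the paper's proof: take any $\mu^*$ realizing $\bZ$, tensor with $\mathrm{Unif}(\mathbb{F}_q^n)$ or $\mathrm{Unif}(\subspace)$ for random Reed--Solomon codes $\subspace$, and exploit additivity of the information curve over products. The only differences are expositional: where the paper argues via explicit bookkeeping in the stochastic decision tree (partitioning $\leaves^*$ into hit/avoid leaves, then applying Markov's inequality over ``$\eta$-good'' subspaces), you phrase the same indistinguishability fact as a coupling with common internal randomness and conclude by a direct triangle inequality. These are equivalent framings of the same argument, though yours is arguably cleaner, and your explicit ``main obstacle'' remark surfaces a budget subtlety (that $\min(\TC,\DTC)$ can blow up under the shift, so the sampler might otherwise run much longer on $\widetilde\mu_{k,\subspace}$) that the paper handles only implicitly by restricting attention to $\leaves^*$.
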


\begin{proof}
    Let $\mu^*$ denote a distribution over $\Sigma^n$ with information curve $\bZ$, and let $\calU$ and $\calU_\subspace$ denote the uniform distribution over $\mathbb{F}^n_q$ and the uniform distribution over subspace $\subspace\subset\mathbb{F}^n_q$ as before, where $\subspace$ will range over RS codes. Define $\mu^*[\calU] \triangleq \mu^*\times \calU$ and $\mu^*[\calU_\subspace] \triangleq \mu^*\times \calU_\subspace$, regarded as distributions over $(\Sigma\times\mathbb{F}_q)^n$. If $\subspace$ has dimension $k$, then by the linearity of the information curve in the average entropy curve, and by additivity of entropy, 
    \begin{equation}
        \bZ(\mu^*[\calU]) = \bZ(\mu^*) + \bZ(\calU) = \bZ
    \end{equation}
    \begin{equation}
        \bZ(\mu^*[\calU_\subspace]) = \bZ(\mu^*) + \bZ(\calU_\subspace) = \bZ(\mu^*) + (\mathds{I}[j > k)_j = \bZ^{\uparrow k}\,.
    \end{equation}
    So to construct $\calF$, we include $\mu^*[\calU]$, and then for every dimension $1\le k < n$ and every $k$-dimensional RS code $\subspace$, we include $\mu^*[\calU_\subspace]$, thus satisfying the first condition in the Theorem.

    The rest of the proof is nearly identical to that of Theorem~\ref{thm:warmup}, and we defer it to Appendix~\ref{app:elevate}.    
\end{proof}

\section{Upper bound in terms of (dual) total correlation: proof of Theorem \ref{thm:tcdtc}}\label{upper_bounds}

In this section, we use Theorem \ref{main_result} to obtain data-agnostic bounds for the expected KL error of the fixed and random unmasking algorithms. As mentioned in Section~\ref{sec:connections}, Theorem~\ref{main_result} already immediately implies bounds from the prior work of~\cite{li2025convergence} and~\cite{austin2020multi}. In this section we use Theorem~\ref{main_result} to improve these bounds in most regimes, assuming only access to estimates $\widehat{\TC}$ and $\widehat{\DTC}$ of $\TC$ and $\DTC$. 

Recall that Theorem \ref{thm:tcdtc} states the existence of an algorithm attaining error at most $\epsilon$ and query complexity \begin{equation} k \leq 2 + (1 + \log n) \cdot (1 + \lceil \widehat{\TC} / \epsilon\rceil) \qquad (\text{resp.} \ k\le 2 + (1 + \log n) \cdot (1 + \lceil \widehat{\DTC} / \epsilon\rceil)\,.\end{equation} Provided that $\widehat{\TC}$ and $\widehat{\DTC}$ are constant-factor approximations of their respective estimands, this yields a query complexity proportional to $\min(\TC, \DTC)$ and is generally significantly better than Theorems \ref{austin_for_sampling} and \ref{gen_li_result}. 
\medskip

We now turn to the main technical content of this section, namely proving this result.

\begin{proof}[Proof of Theorem \ref{thm:tcdtc}]
We split into two cases, which are roughly similar. The main idea is to use an exponentially increasing schedule to attain the $\widehat\DTC$ bound and an exponentially decreasing schedule for the $\widehat\TC$ bound. This will attain the correct query complexity. Moreover, using the pictorial representation, we find that the horizontal slices of the error can be enlarged by a factor of $\frac {\widehat\DTC} \epsilon$ or $\frac {\widehat\TC} \epsilon$, respectively, and subsequently shifted horizontally to fit above or below the information curve, respectively. From this it follows that the total error is at most a factor of $\frac \epsilon {\widehat\DTC}$ or $\frac \epsilon {\widehat\TC}$ times either the area $\DTC$ or the area $\TC$, respectively, yielding the upper bound of $\epsilon$, provided that $\TC \leq \widehat \TC$ and $\DTC \leq \widehat \DTC$. We provide the full details below.

\medskip

\textbf{1. The $\widehat{\TC}$ bound}. We proceed by defining the mask schedule, and then analyzing the query complexity and sampling error. 

\medskip

\noindent \textbf{Mask Schedule}. We first define our mask schedule. Let $\zeta = 1 + \left \lceil \frac {\widehat\TC} \epsilon \right\rceil > 1$. If $\zeta \geq n+1$, then pick $k = n$ and $s_i = 1$ for all $i$; the sampler is perfect and the query complexity is $n \leq \left\lceil 1 +  \frac{\widehat\TC} \epsilon\right\rceil$, resolving this special case. From now on assume $\zeta \leq n$. Consider the sequence $N_i$ given by $N_0 = 0$ and then recursively $$N_i = \left\lfloor N_{i-1}+ (n - N_{i-1})\frac 1\zeta\right\rfloor$$ for $1 \leq i \leq \left\lfloor \frac{\log (n-\zeta+1)}{\log \frac{1}{1 - \frac 1\zeta}}\right\rfloor + 2 = \lambda$. 

Note that definitionally we have $N_i \geq N_{i-1}$ and by induction $N_i \leq n-1$ for all $i$. Moreover, $$N_i \geq N_{i-1} + (n-  N_{i-1})\frac 1\zeta - \frac{\zeta - 1}{\zeta} = (N_{i-1}-1)\left(1 - \frac 1 \zeta\right) + n\frac 1\zeta\,,$$
so that
$$n - \zeta + 1 - N_i \leq (n -\zeta + 1 - N_{i-1})\left(1 - \frac 1\zeta\right).$$ It follows that $$n-1 \geq N_\lambda \geq (n - \zeta + 1)\left(1 - \left(1 - \frac 1\zeta\right)^\lambda\right) > (n-\zeta+1)\left(1 - \frac{1}{n - \zeta+1}\right) = n-\zeta.$$

Now set $N_i = N_{i-1}+ 1$ for $\lambda + 1 \leq i \leq \lambda + n - N_\lambda$. Note that $N_{\lambda+ n - N_\lambda} = n$. Lastly, define $s_i = N_i - N_{i-1}$.\footnote{Note that potentially some of the final values of $s_i$ will be 0.} We consider the mask schedule given by $\{s_i\}_1^{\lambda + n - N_{\lambda}}$.

\noindent \textbf{Query Complexity}. The query complexity $k$ equals the number of steps of unmasking, i.e. $$k = \lambda + n - N_\lambda \leq \zeta + 2 + \frac{\log n}{\log \frac{1}{1 - \frac 1\zeta}} \leq 2 + \zeta(1 + \log n) \leq 2 + (1 + \log n) \cdot \left[1 + \left\lceil \frac {\widehat{\TC}} \epsilon\right\rceil\right],$$ where we have used the fact that $\log \frac{1}{1 - z} = -\log (1-z) \geq z$ for $z = \frac 1\zeta \in [0,1)$. 

\noindent \textbf{Sampling Error}. First observe that for $1 \leq i \leq \lambda$, we have $$s_i = N_i - N_{i-1} \leq (n - N_{i-1}) \frac 1\zeta \implies s_i \leq (n - N_i) \frac1{\zeta-1} \leq \frac \epsilon {\widehat\TC} (n - N_i).$$ Applying Theorem \ref{main_result}, we find that \allowdisplaybreaks
\begin{align*}
    \KL \mu \nu &\leq \mathbb E_{S_1, \ldots, S_k} [\KL \mu {\nu^{S_1, \ldots, S_k}}] \\ &= \sum_{i=1}^k \left[\left( \sum_{j=1}^{s_i} Z_{N_{i-1}+j}\right) - s_i Z_{N_{i-1}+1}\right] \\ &\leq \sum_{i=1}^k s_i \left(Z_{N_i} - Z_{N_{i-1} + 1}\right) \\ &\leq \frac{\epsilon}{\widehat{\TC}}\left(\sum_{i=1}^\lambda  (n-N_i) (Z_{N_i} - Z_{N_{i-1}}) \right)+ \sum_{i=\lambda+1}^k s_i (Z_{N_i} - Z_{N_{i-1}+1}) \\ &\leq \frac \epsilon{\widehat{\TC}}\left(\sum_{i=1}^{\lambda-1} (N_{i+1} - N_i)Z_{N_i}\right) + \frac \epsilon {\widehat{\TC}}(n - N_\lambda) Z_{N_\lambda} \\ &\leq \frac \epsilon {\widehat{\TC}} \left(\sum_{i=1}^{\lambda - 1} \sum_{j=N_i}^{N_{i+1}-1} Z_j \right) + \frac \epsilon {\widehat{\TC}} \sum_{j = N_\lambda}^{n - 1} Z_j \\ &= \frac \epsilon {\widehat{\TC}} \sum_{j=1}^{n-1} Z_j \\ &\leq \frac \epsilon {\widehat{\TC}} \cdot \TC \\ &\leq \epsilon,
\end{align*} where we let $Z_0 = Z_1 =0$ by convention and we have repeatedly used that the $Z_j$'s are nonnegative and nondecreasing (see Lemma \ref{tc_dtc_facts}). Note that the fourth line follows from a rearrangement and the fact that $N_i = N_{i-1} + 1$ for $i > \lambda$. Thus the algorithm yields the correct query complexity and sampling error, completing the proof of this case. 

\medskip

\textbf{2. The $\widehat\DTC$ bound}. We proceed in the same three steps as in case 1; the proof will be largely similar, except that the mask schedule is essentially flipped.
\medskip

\noindent \textbf{Mask Schedule}. Let $\zeta = 1 + \left \lceil \frac {\widehat\DTC} \epsilon \right\rceil > 1$. If $\zeta \geq n+1$, then pick $k = n$ and $s_i = 1$ for all $i$; the sampler is perfect and the query complexity is $n \leq \left\lceil 1 +  \frac\DTC \epsilon\right\rceil$, resolving this special case. From now on assume $\zeta \leq n$. Consider the sequence $N_i'$ given by $N_0' = n$ and then recursively $$N_i' = \left\lceil N_{i-1}'\left(1 - \frac 1 \zeta\right)\right\rceil$$ for $1 \leq i \leq \left\lfloor \frac{\log (n-\zeta+1)}{\log \frac{1}{1 - \frac 1\zeta}}\right\rfloor + 2 = \lambda$. 

Note that definitionally we have $N_i' \leq N_{i-1}'$ and by induction $N_i' \geq 1$ for all $i$. Moreover, $$N_i' \leq N_{i-1}'\left(1 - \frac 1\zeta\right) + \frac{\zeta - 1}{\zeta} \implies N_i' - \zeta+1 \leq \left(N_{i-1}' - \zeta+1\right)\left(1 - \frac 1\zeta\right).$$ It follows that $$1 \leq N_\lambda' \leq \zeta-1 + (n - \zeta+1)\left(1 - \frac 1\zeta\right)^\lambda < \zeta-1 +(n-\zeta + 1)\frac{1}{n - \zeta + 1} = \zeta.$$ Now, set $N_i' = N_{i-1}' - 1$ for $\lambda+ 1 \leq i \leq \lambda + N_\lambda'$. Note that $N_{\lambda + N_\lambda'}' = 0$. Lastly, define $s_i = N_{\lambda + N_\lambda' - i}' - N_{\lambda + N_\lambda' - i+1}'$. We consider the mask schedule given by $\{s_i\}_1^{\lambda + N_\lambda'}$. 

\noindent \textbf{Query Complexity}. The query complexity $k$ equals the number of steps of unmasking, i.e. $$k = \lambda + N_\lambda' \leq \zeta + 2 +  \frac{\log n}{\log \frac{1}{1 - \frac 1\zeta}} \leq 2 + \zeta(1 + \log n) \leq 2 + (1 + \log n) \cdot \left[1 + \left\lceil \frac {\widehat\DTC} \epsilon\right\rceil\right],$$ where we have used the fact that $\log \frac{1}{1 - z} = -\log (1-z) \geq z$ for $z = \frac 1\zeta \in [0,1)$.  

\noindent \textbf{Sampling Error}. First observe that for $i > N_\lambda'$, we have $\lambda + N_\lambda ' - i + 1 \leq \lambda$ and hence $$s_i = N_{\lambda + N_\lambda' - i}'- N_{\lambda + N_\lambda' - i+1}' \leq N_{\lambda + N_\lambda' -i}' \frac 1\zeta \implies s_i \leq N_{\lambda + N_\lambda' - i+1}' \frac1{\zeta-1} \leq \frac \epsilon {\widehat\DTC} N_{\lambda + N_\lambda' - i+1}'.$$ As usual, let $$N_i = \sum_{j \leq i} s_i = N_{\lambda + N_\lambda' - i}'.$$ Applying Theorem \ref{main_result}, we find that \allowdisplaybreaks
 \begin{align*}
    \KL \mu \nu &\leq \mathbb E_{S_1, \ldots, S_k} [\KL \mu {\nu^{S_1, \ldots, S_k}}] \\ &= \sum_{i=1}^k \left[\left( \sum_{j=1}^{s_i} Z_{N_{i-1}+j}\right) - s_i Z_{N_{i-1}+1}\right] \\ &\leq \sum_{i=1}^k s_i(Z_{N_i} - Z_{N_{i-1}+1}) \\ &\leq \frac \epsilon {\widehat\DTC} \left(\sum_{i=N_\lambda'+1}^{N_\lambda' + \lambda} N_{\lambda + N_\lambda' - i + 1}' (Z_{N_i} - Z_{N_{i-1}})\right) + \sum_{i \leq N_\lambda'} s_i (Z_{N_i} - Z_{N_{i-1}+1}) \\ &= \frac \epsilon {\widehat\DTC} \left(N_{N_\lambda' + \lambda - 1}Z_n\right) - \frac \epsilon {\widehat\DTC} \left(\sum_{i = N_{\lambda' + 1}}^{N_\lambda' + \lambda - 1} (N_i - N_{i-1}) Z_{N_i}\right) - \frac \epsilon {\widehat\DTC} \left(N_{N_\lambda'}Z_{N_{N_{\lambda}'}}\right) \\ &\leq \frac \epsilon {\widehat\DTC} \left(N_{N_\lambda' + \lambda - 1}Z_n\right) - \frac \epsilon {\widehat\DTC} \left(\sum_{i = N_{\lambda' + 1}}^{N_\lambda' + \lambda - 1} \left(\sum_{j = N_{i-1}+1}^{N_i}Z_j \right)\right) - \frac \epsilon {\widehat\DTC} \left(\sum_{j=1}^{N_{N_{\lambda}'}} Z_j\right)\\ &= \frac \epsilon {\widehat\DTC} \left(\sum_{j = 1}^{N_{N_\lambda' + \lambda - 1}} (Z_n - Z_j)\right) \\ &\leq \frac \epsilon {\widehat\DTC} \sum_{j=1}^n (Z_n - Z_j) \\ &= \frac \epsilon {\widehat\DTC} \cdot \DTC \\ &\leq \epsilon, \end{align*} where we let $Z_0 = Z_1 =0$ by convention and we have repeatedly used that the $Z_j$'s are nonnegative and nondecreasing (see Lemma \ref{tc_dtc_facts}). Note that the fourth line follows from a rearrangement and the fact that $N_i = N_{i-1} + 1$ for $i \leq N_\lambda'$. Thus the algorithm yields the correct query complexity and sampling error, completing the proof of this case.

\medskip

In conclusion, we find that in both cases there exists a mask schedule with the desired query complexity and sampling error. This completes the proof. 
\end{proof}


\noindent\textbf{Knowledge of $\TC$ and $\DTC$.} We elaborate briefly on the ``hyperparameter sweep'' discussed in the introduction. While the above result does not require knowledge of the data distribution or the entire information curve, it nonetheless requires the values of $\TC$ and $\DTC$. These values are in general unknown and moreover not readily estimable from our conditional oracle. In practice, however, we can treat $\TC$ and $\DTC$ as sampling hyperparameters and sweep over a feasible range $\calH$. 

We suggest a choice of $\calH$ as follows. First, it is not difficult to see that if we choose estimates $\widehat{\TC} \in [\TC, 2\cdot \TC]$ and $\widehat \DTC \in [\DTC, 2\cdot \DTC]$, the mask schedule in the proof of Theorem \ref{thm:tcdtc} achieves error at most $\epsilon$ and query complexity within a factor of two of $$2 + (1 + \log n) \cdot \left(1 + \min\left(\left\lceil \frac{\TC}{\epsilon}\right\rceil, \left\lceil \frac \DTC \epsilon\right\rceil\right)\right),$$ that is the complexity if we had complete knowledge of $\TC$ and $\DTC$.  Moreover, we know that $$1 \leq  \left\lceil\frac \DTC\epsilon\right\rceil, \left\lceil\frac \TC\epsilon\right\rceil \hspace{0.2cm} \mathrm{and} \hspace{0.2cm} \DTC, \TC \leq nZ_n \leq nH_1 \leq n \log |\Sigma|.$$ Combining these observations, we can take $$\calH = \{2^i: i \in \mathbb Z, \epsilon \leq i \leq n\log |\Sigma|\}; \hspace{0.2cm} |\calH| = \calO\left(\log \frac{n \log |\Sigma|}{\epsilon}\right),$$ for which there exists $(\widehat \TC, \widehat \DTC) \in \calH^2$ which if used as the estimates of $\TC$ and $\DTC$ yields the desired error and query complexity. Under this choice of $\calH$, the hyperparameter sweep incurs an extra query complexity factor of $|\calH|^2 = \calO\left(\log \left(\frac{n \log |\Sigma|}{\epsilon}\right)^2\right)$. For most choices of $n, \epsilon, |\Sigma|$, this will be a polylogarithmic factor in $n$.

\bibliographystyle{alpha}
\bibliography{refs}

@article{li2024sharp,
	title={A sharp convergence theory for the probability flow {ODEs} of diffusion models},
	author={Li, Gen and Wei, Yuting and Chi, Yuejie and Chen, Yuxin},
	journal={arXiv preprint arXiv:2408.02320},
	year={2024}
}

@article{reeves2025information,
  title={Information-Theoretic Proofs for Diffusion Sampling},
  author={Reeves, Galen and Pfister, Henry D},
  journal={arXiv preprint arXiv:2502.02305},
  year={2025}
}

@inproceedings{lee2023convergence,
  title={Convergence of score-based generative modeling for general data distributions},
  author={Lee, Holden and Lu, Jianfeng and Tan, Yixin},
  booktitle={International Conference on Algorithmic Learning Theory},
  pages={946--985},
  year={2023},
  organization={PMLR}
}

@inproceedings{chen2023improved,
  title={Improved analysis of score-based generative modeling: user-friendly bounds under minimal smoothness assumptions},
  author={Chen, Hongrui and Lee, Holden and Lu, Jianfeng},
  booktitle={International Conference on Machine Learning},
  pages={4735--4763},
  year={2023},
  organization={PMLR}
}

@article{conforti2025kl,
  title={{KL} convergence guarantees for score diffusion models under minimal data assumptions},
  author={Conforti, Giovanni and Durmus, Alain and Silveri, Marta Gentiloni},
  journal={SIAM Journal on Mathematics of Data Science},
  volume={7},
  number={1},
  pages={86--109},
  year={2025},
  publisher={SIAM}
}

@inproceedings{Ben+24Diffusion,
title={Nearly {$d$}-linear convergence bounds for diffusion models via stochastic localization},
author={Joe Benton and Valentin De Bortoli and Arnaud Doucet and George Deligiannidis},
booktitle={The Twelfth International Conference on Learning Representations},
year={2024},
}

@inproceedings{Chen+23SGM,
title={Sampling is as easy as learning the score: theory for diffusion models with minimal data assumptions},
author={Sitan Chen and Sinho Chewi and Jerry Li and Yuanzhi Li and Adil Salim and Anru R. Zhang},
booktitle={The Eleventh International Conference on Learning Representations },
year={2023},
}

@article{jerrum1986random,
  title={Random generation of combinatorial structures from a uniform distribution},
  author={Jerrum, Mark R and Valiant, Leslie G and Vazirani, Vijay V},
  journal={Theoretical computer science},
  volume={43},
  pages={169--188},
  year={1986},
  publisher={Elsevier}
}

@article{andrea2008estimating,
  title={Estimating random variables from random sparse observations},
  author={Andrea, Montanari},
  journal={European Transactions on Telecommunications},
  volume={19},
  number={4},
  pages={385--403},
  year={2008},
  publisher={Wiley Online Library}
}

@article{el2022information,
  title={An information-theoretic view of stochastic localization},
  author={El Alaoui, Ahmed and Montanari, Andrea},
  journal={IEEE Transactions on Information Theory},
  volume={68},
  number={11},
  pages={7423--7426},
  year={2022},
  publisher={IEEE}
}

@inproceedings{yoshida2014approximation,
  title={Approximation schemes via Sherali-Adams hierarchy for dense constraint satisfaction problems and assignment problems},
  author={Yoshida, Yuichi and Zhou, Yuan},
  booktitle={Proceedings of the 5th conference on Innovations in theoretical computer science},
  pages={423--438},
  year={2014}
}

@inproceedings{manurangsi2017birthday,
  title={A Birthday Repetition Theorem and Complexity of Approximating Dense CSPs},
  author={Manurangsi, Pasin and Raghavendra, Prasad},
  booktitle={44th International Colloquium on Automata, Languages, and Programming (ICALP 2017)},
  pages={78--1},
  year={2017},
  organization={Schloss Dagstuhl--Leibniz-Zentrum f{\"u}r Informatik}
}

@inproceedings{raghavendra2012approximating,
  title={Approximating CSPs with global cardinality constraints using SDP hierarchies},
  author={Raghavendra, Prasad and Tan, Ning},
  booktitle={Proceedings of the twenty-third annual ACM-SIAM symposium on Discrete Algorithms},
  pages={373--387},
  year={2012},
  organization={SIAM}
}

@inproceedings{jain2019mean,
  title={Mean-field approximation, convex hierarchies, and the optimality of correlation rounding: a unified perspective},
  author={Jain, Vishesh and Koehler, Frederic and Risteski, Andrej},
  booktitle={Proceedings of the 51st Annual ACM SIGACT Symposium on Theory of Computing},
  pages={1226--1236},
  year={2019}
}

@article{eldan2018exponential,
  title={Exponential random graphs behave like mixtures of stochastic block models},
  author={Eldan, Ronen and Gross, Renan},
  journal={The Annals of Applied Probability},
  volume={28},
  number={6},
  pages={3698--3735},
  year={2018},
  publisher={JSTOR}
}

@article{eldan2018gaussian,
  title={Gaussian-width gradient complexity, reverse log-Sobolev inequalities and nonlinear large deviations},
  author={Eldan, Ronen},
  journal={Geometric and Functional Analysis},
  volume={28},
  number={6},
  pages={1548--1596},
  year={2018},
  publisher={Springer}
}

@article{khanna2025mercury,
  title={Mercury: Ultra-fast language models based on diffusion},
  author={Khanna, Samar and Kharbanda, Siddhant and Li, Shufan and Varma, Harshit and Wang, Eric and Birnbaum, Sawyer and Luo, Ziyang and Miraoui, Yanis and Palrecha, Akash and Ermon, Stefano and others},
  journal={arXiv preprint arXiv:2506.17298},
  year={2025}
}

@article{nie2025large,
  title={Large language diffusion models},
  author={Nie, Shen and Zhu, Fengqi and You, Zebin and Zhang, Xiaolu and Ou, Jingyang and Hu, Jun and Zhou, Jun and Lin, Yankai and Wen, Ji-Rong and Li, Chongxuan},
  journal={arXiv preprint arXiv:2502.09992},
  year={2025}
}

@inproceedings{canonne2021random,
  title={Random restrictions of high dimensional distributions and uniformity testing with subcube conditioning},
  author={Canonne, Cl{\'e}ment L and Chen, Xi and Kamath, Gautam and Levi, Amit and Waingarten, Erik},
  booktitle={Proceedings of the 2021 ACM-SIAM Symposium on Discrete Algorithms (SODA)},
  pages={321--336},
  year={2021},
  organization={SIAM}
}

@article{ren2025fast,
  title={Fast solvers for discrete diffusion models: Theory and applications of high-order algorithms},
  author={Ren, Yinuo and Chen, Haoxuan and Zhu, Yuchen and Guo, Wei and Chen, Yongxin and Rotskoff, Grant M and Tao, Molei and Ying, Lexing},
  journal={arXiv preprint arXiv:2502.00234},
  year={2025}
}

@article{chen2024convergence,
  title={Convergence analysis of discrete diffusion model: Exact implementation through uniformization},
  author={Chen, Hongrui and Ying, Lexing},
  journal={arXiv preprint arXiv:2402.08095},
  year={2024}
}

@article{canonne2020survey,
  title={A survey on distribution testing: Your data is big. But is it blue?},
  author={Canonne, Cl{\'e}ment L},
  journal={Theory of Computing},
  pages={1--100},
  year={2020},
  publisher={Theory of Computing Exchange}
}

@article{canonne2015testing,
  title={Testing probability distributions using conditional samples},
  author={Canonne, Cl{\'e}ment L and Ron, Dana and Servedio, Rocco A},
  journal={SIAM Journal on Computing},
  volume={44},
  number={3},
  pages={540--616},
  year={2015},
  publisher={SIAM}
}

@inproceedings{chakraborty2013power,
  title={On the power of conditional samples in distribution testing},
  author={Chakraborty, Sourav and Fischer, Eldar and Goldhirsh, Yonatan and Matsliah, Arie},
  booktitle={Proceedings of the 4th conference on Innovations in Theoretical Computer Science},
  pages={561--580},
  year={2013}
}

@inproceedings{anari2024parallel,
  title={Parallel sampling via counting},
  author={Anari, Nima and Gao, Ruiquan and Rubinstein, Aviad},
  booktitle={Proceedings of the 56th Annual ACM Symposium on Theory of Computing},
  pages={537--548},
  year={2024}
}

@article{chatterjee2016nonlinear,
  title={Nonlinear large deviations},
  author={Chatterjee, Sourav and Dembo, Amir},
  journal={Advances in Mathematics},
  volume={299},
  pages={396--450},
  year={2016},
  publisher={Elsevier}
}

@article{lavenant2025error,
  title={Error Bounds and Optimal Schedules for Masked Diffusions with Factorized Approximations},
  author={Lavenant, Hugo and Zanella, Giacomo},
  journal={arXiv preprint arXiv:2510.25544},
  year={2025}
}

@book{wupolyanskiy,
  title={Information theory: From coding to learning},
  author={Polyanskiy, Yury and Wu, Yihong},
  year={2025},
  publisher={Cambridge university press}
}

@article{eldan2018decomposition,
  title={Decomposition of mean-field gibbs distributions into product measures},
  author={Eldan, Ronen and Gross, Renan},
  journal={Electronic Journal of Probability},
  volume={23},
  year={2018}
}

@article{austin2019structure,
  title={The Structure of Low-Complexity Gibbs Measures on Product Spaces},
  author={Austin, Tim},
  journal={The Annals of Probability},
  volume={47},
  number={6},
  pages={4002--4023},
  year={2019}
}

@article{kang2025parallelbench,
  title={ParallelBench: Understanding the Trade-offs of Parallel Decoding in Diffusion LLMs},
  author={Kang, Wonjun and Galim, Kevin and Oh, Seunghyuk and Lee, Minjae and Zeng, Yuchen and Zhang, Shuibai and Hooper, Coleman and Hu, Yuezhou and Koo, Hyung Il and Cho, Nam Ik and others},
  journal={arXiv preprint arXiv:2510.04767},
  year={2025}
}

@article{shi2024simplified,
  title={Simplified and generalized masked diffusion for discrete data},
  author={Shi, Jiaxin and Han, Kehang and Wang, Zhe and Doucet, Arnaud and Titsias, Michalis},
  journal={Advances in neural information processing systems},
  volume={37},
  pages={103131--103167},
  year={2024}
}

@article{sahoo2024simple,
  title={Simple and effective masked diffusion language models},
  author={Sahoo, Subham and Arriola, Marianne and Schiff, Yair and Gokaslan, Aaron and Marroquin, Edgar and Chiu, Justin and Rush, Alexander and Kuleshov, Volodymyr},
  journal={Advances in Neural Information Processing Systems},
  volume={37},
  pages={130136--130184},
  year={2024}
}

@inproceedings{lou2024discrete,
  title={Discrete diffusion modeling by estimating the ratios of the data distribution},
  author={Lou, Aaron and Meng, Chenlin and Ermon, Stefano},
  booktitle={Proceedings of the 41st International Conference on Machine Learning},
  pages={32819--32848},
  year={2024}
}

@inproceedings{chang2022maskgit,
  title={Maskgit: Masked generative image transformer},
  author={Chang, Huiwen and Zhang, Han and Jiang, Lu and Liu, Ce and Freeman, William T},
  booktitle={Proceedings of the IEEE/CVF conference on computer vision and pattern recognition},
  pages={11315--11325},
  year={2022}
}

@article{li2025convergence,
  title={A Convergence Theory for Diffusion Language Models: An Information-Theoretic Perspective},
  author={Li, Gen and Cai, Changxiao},
  journal={arXiv preprint arXiv:2505.21400},
  year={2025}
}

@article{austin2020multi,
  title={Multi-variate correlation and mixtures of product measures},
  author={Austin, Tim},
  journal={Kybernetika},
  volume={56},
  number={3},
  pages={459--499},
  year={2020},
  publisher={Institute of Information Theory and Automation AS CR}
}

@article{birge1987estimating,
  title={Estimating a density under order restrictions: Nonasymptotic minimax risk},
  author={Birg{\'e}, Lucien},
  journal={The Annals of Statistics},
  pages={995--1012},
  year={1987},
  publisher={JSTOR}
}

@inproceedings{chen2009average,
  title={Average entropy functions},
  author={Chen, Qi and He, Chen and Jiang, Lingge and Wang, Qingchuan},
  booktitle={2009 IEEE International Symposium on Information Theory},
  pages={2632--2633},
  year={2009},
  organization={IEEE}
}

\newpage

\appendix

\section{Logarithmic overhead is necessary} \label{sec:logn}

In this section, we show that the $\log n$ term in Theorem \ref{thm:tcdtc} is necessary. This is essentially implicit in~\cite{birge1987estimating} and has been used in various works on monotone distribution estimation. We were, however, not able to find a complete statement and proof in the literature and provide one for completeness. First, recall the following definition. 

\begin{definition}[Piecewise Functions]
    A \textit{$k$-piecewise} function $f: [n] \rightarrow R$ is a function for which there are \textit{at most} $k$ values $i \in [n-1]$ such that $f(i) \neq f(i+1)$. 
\end{definition}

We can now state the main result of this subsection. 

\begin{lemma}\label{lem:logarithmic}
    Let $n \geq 2$, $\epsilon$ be such that $\frac 2n \log \frac 2\epsilon \leq \epsilon \leq \frac{1}{\log n}$, and $k \leq c \cdot \tfrac{\log n}{\epsilon}$ for some sufficiently small constant $c$. Then there exists a non-negative monotone increasing function $f: [n] \rightarrow \R$ satisfying $\sum_i f(i) = 1$ such that for any function $h: [n] \rightarrow \R$ which is a $k$-piecewise constant function, we have that $$\norm{f - g}_{L^1} \geq \Omega (\epsilon).$$
\end{lemma}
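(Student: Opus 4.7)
The plan is to take the hard instance $f$ to be the reverse of the normalized discrete harmonic distribution,
\begin{equation*}
f(i) = \frac{1}{(n - i + 1)H_n}, \qquad i \in [n], \qquad H_n = \sum_{j=1}^n \tfrac{1}{j} = \Theta(\log n).
\end{equation*}
This is nonnegative, monotone increasing, and satisfies $\sum_i f(i) = 1$. By reversing indices we may equivalently work with the monotone decreasing $g(i) = 1/(i H_n)$, since reversing $h$ preserves piece count and $\ell^1$ distance. The intuition is that $g$ ``looks like $1/x$'' on every dyadic scale, so no piecewise constant approximation can simultaneously resolve all $\Theta(\log n)$ scales using only $k \ll \log n / \epsilon$ pieces.

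\textbf{Per-piece lower bound.} Any $k$-piecewise constant $h$ corresponds to a partition of $[n]$ into intervals $[a_1, b_1], \ldots, [a_k, b_k]$ together with constants $c_1, \ldots, c_k$. I would lower bound each per-piece error $\sum_{i=a_t}^{b_t} |g(i) - c_t|$ by comparing to the Riemann integral $\tfrac{1}{H_n} \int_{a_t}^{b_t + 1} |1/x - c|\, dx$, up to a discretization remainder of order $1/(a_t H_n)$. Minimizing the integral over $c$ has a clean closed form: setting $c^* = 2/(a_t + b_t + 1)$ yields integral value $\tfrac{2}{H_n}\log\cosh\!\bigl(\tfrac{1}{2}\log r_t\bigr)$, where $r_t \coloneqq (b_t + 1)/a_t$. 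This is $\Theta((\log r_t)^2/\log n)$ when $r_t \le e$ (``narrow'' pieces) and $\Theta(\log r_t/\log n)$ when $r_t > e$ (``wide'' pieces).

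\textbf{Aggregating via a two-case argument.} The ratios telescope multiplicatively: $\prod_t r_t = (n+1)/1 \ge n$, so $\sum_t \log r_t \ge \log n$. Split pieces into narrow and wide, and let $L_{\text{n}}, L_{\text{w}}$ denote the total $\log r_t$-mass on each group, so $L_{\text{n}} + L_{\text{w}} \ge \log n$. If $L_{\text{w}} \ge \tfrac{1}{2}\log n$, summing the wide-piece bounds already yields total error $\gtrsim L_{\text{w}}/\log n = \Omega(1)$, which comfortably exceeds $\epsilon \le 1/\log n$. Otherwise $L_{\text{n}} \ge \tfrac{1}{2}\log n$, and Cauchy--Schwarz applied to the narrow pieces gives
\begin{equation*}
\sum_{t \text{ narrow}} \frac{(\log r_t)^2}{C \log n} \ \ge\ \frac{L_{\text{n}}^2}{C k \log n}\ \ge\ \frac{\log n}{4 C k},
\end{equation*}
which is $\Omega(\epsilon)$ as soon as $k \le c \log n / \epsilon$ with $c > 0$ chosen sufficiently small. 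Combining the two cases, every $k$-piecewise constant $h$ has $\|f - h\|_{L^1} = \Omega(\epsilon)$.

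\textbf{Main obstacle.} The principal difficulty is handling the discretization gap for pieces with very small left endpoint $a_t$. When $a_t = O(1)$ and $r_t$ is small, the continuous integral estimate is tiny and the $O(1/(a_t H_n))$ discretization correction can dominate it, breaking the clean per-piece lower bound. I would handle this by separately treating the $O(1)$ pieces that meet the initial segment $\{1, \ldots, O(1)\}$: any such piece yields a direct elementary lower bound of order $g(1) - g(\mathrm{const}) = \Omega(1/\log n)$ from the discrete sum, which is already more than enough to absorb into the argument above. The hypotheses $(2/n)\log(2/\epsilon) \le \epsilon \le 1/\log n$ ensure that $k \lesssim \log n/\epsilon$ is simultaneously small enough for the lower bound to be meaningful and much smaller than $n$, so these boundary pieces make a negligible contribution.
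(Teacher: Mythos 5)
Your choice of $f$ as the reversed normalized harmonic function does not actually satisfy the lemma near the low end of the allowed $\epsilon$-range, so the proof cannot be repaired as written. Concretely, take $\epsilon = \Theta(\log n / n)$, which the hypothesis $\tfrac{2}{n}\log\tfrac{2}{\epsilon} \le \epsilon$ permits. Against the strictly decreasing $g(i) = 1/(i H_n)$, consider $h$ built from $\Theta(1/\delta)$ \emph{singleton} pieces on $[1, 1/\delta]$ (each with exactly zero $L^1$ error, since a constant piece of length one matches $g$ at its single point), followed by geometrically spaced pieces of ratio $1+\delta$ on $[1/\delta, n]$. Choosing $\delta = u/n$ with $u$ solving $u = \tfrac{2}{c}(1 + \log u)$ makes the piece count exactly $\tfrac{cn}{2} \le c\log n/\epsilon$, while each non-singleton piece $[a, a(1+\delta))$ has discrete $L^1$ error $\Theta(\delta^2 / H_n)$, for a total of $\Theta\bigl(\delta\log(n\delta)/H_n\bigr) = \Theta\bigl((\log(1/c))^2/(c\, n\log n)\bigr)$. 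This is $o(\log n / n) = o(\epsilon)$: for any fixed constant $c>0$ the ratio to $\epsilon$ scales as $(\log(1/c))^2/(c(\log n)^2) \to 0$, so no universal constant in the $\Omega(\epsilon)$ can hold. The culprit is precisely that a \emph{strictly monotone} $f$ lets the approximator "spend" one piece to cover one coordinate at zero discrete cost; in the small-$\epsilon$ regime the budget $k \lesssim \log n/\epsilon$ is large enough to blanket the entire steep initial segment with free singletons, after which the flat tail is cheap.

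This also shows your "main obstacle" paragraph is aimed at the wrong spot: the problematic pieces are not only those meeting an $O(1)$ initial segment, but all pieces of length $O(1)$ at \emph{any} location, for which the discretization remainder $O(1/(a_t H_n))$ swamps the continuous estimate $\phi(r_t)/H_n$; in the small-$\epsilon$ regime there can be up to $\Theta(n/\log n)$ such pieces, and their aggregate effect is not a boundary correction but the dominant term. The paper avoids both problems by taking $f$ to be \emph{piecewise constant} on exactly $m = \Theta(\log n/\epsilon)$ geometric blocks with values $p_i \propto (1+\epsilon)^{-i}$. There a singleton piece inside a block buys nothing that covering the whole block with one piece would not, and the paper's two structural reductions (snap $h$'s values to $\{p_i\}$, snap $h$'s breakpoints to block boundaries) collapse the problem to a combinatorics over $m$ blocks rather than $n$ positions. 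On that reduced problem the large/small-interval split and Cauchy--Schwarz argument you envision does go through cleanly; without the quantization into $\Theta(\log n/\epsilon)$ blocks it does not.
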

\begin{proof} We define $f$ as follows. For $i = 0, \ldots, \left\lceil\tfrac{\log (n+1)}{\log (1 + \epsilon)} - 1\right\rceil= m \leq \frac{\log (n+1)}{\epsilon}$, we let $B_i = \{\lfloor (1 + \epsilon)^i \rfloor, \ldots, \min(\lfloor (1 + \epsilon)^{i + 1}\rfloor - 1, n)\}$, and for $x \in B_i$ we let $$f(x) = p_i = \frac 14 \cdot \frac{(1 + \epsilon)^{-i}}{ \log n}.$$ Note that $$\sum_{i=1}^n f(x) \leq \frac 14 \sum_{i = 0, \epsilon(1+\epsilon)^i \geq 1}^m (\epsilon(1 + \epsilon)^i + 1)\frac{(1+\epsilon)^{-i}}{\log n} \leq \frac 14 \left(\sum_{i = 0}^m 2\frac \epsilon {\log n}\right) \leq 1$$ as $m \leq \frac{2 \log n}{\epsilon}$ and so this is a valid choice of $f$.
    
Let $h$ be any $k$-piecewise function, and let $I_1, \ldots, I_k$ denote the partition of $[n]$ into $k$ disjoint intervals on which $h$ is constant. We will refer to the right endpoints of these intervals as the \textit{breakpoints} of $h$. The remainder of the proof proceeds in two main steps. We first show that $h$ may be modified to have two desirable structural qualities: namely to have breakpoints only at the right endpoints of the $B_i$ and values only in the $\{p_i\}$. We then directly analyze functions $h$ with these properties to prove the bound. 
\medskip

\noindent \textbf{Shifting the image of $h$.} We first claim that for any $h$, there is some $h_{\mathrm{image}}$ such that $ \norm{h_{\mathrm{image}} - f}_{L^1} \leq \norm{h - f}_{L^1} $, and $h_{\mathrm{image}}$ has the same breakpoints as $h$ but $h_{\mathrm{image}}(x) = p_{i_j}$ for all $x \in I_j$, for values $i_j$ satisfying $B_{i_j} \cap I_j \neq \emptyset$. That is, on each interval where $h$ is constant, we can replace the value of $h$ on that interval with one of the values that $f$ attains on the same interval. 

Indeed, this is because the contribution of $I_j$ to the total error is $F_j(h(x)) = \sum_{x \in I_j} |h(x) - f(x)|$. Suppose $h'(x) \in [p_{t}, p_{t+1}]$. Then since $F_j(h'(x))$ is linear in $[p_t, p_{t+1}]$, it must be optimized at one of the endpoints. In particular, we may replace the value of $h$ on $I_j$ with either $p_t$ or $p_{t+1}$ without increasing the number of pieces of total error. Applying this result to all intervals $I_j$ yields some $h_{\mathrm{image}}$ which satisfies the conditions of the claim. 

\medskip

\noindent \textbf{Shifting the breakpoints of $h$.} We now claim that for any $h$, there is some $h_{\mathrm{final}}$ such that $\norm{h_{\mathrm{final}} - f}_{L^1} \leq \norm{h - f}_{L^1} $, which is also $k$-piecewise but whose breakpoints are a subset of the breakpoints of $f$, and whose image is contained in the image of $h$. 

Indeed, suppose $h$ has a breakpoint $t \in I_j \cap B_\ell$ which is not the right endpoint of $B_\ell$; call such breakpoints \textit{bad}. The contribution of $I_j$ to the total error is $F_j(h(x)) = \sum_{x \in I_j} |h(x) - f(x)|$. We now have two cases. 

\textbf{Case 1.} If $|h(t) - f(t) | > |h(t+1) - f(t+1)|$, let $h'(r) = h(t+1) \hspace{0.1cm} \forall \hspace{0.1cm} r \in I_j \cap B_\ell, r \leq t$, and $h'(x) = h(x)$ otherwise. Then the number of pieces and total error of $h'$ are not greater than those of $h$. Moreover, either the total number of pieces decreases by 1 or $h'$ has one less bad breakpoint than $h$.\footnote{The former occurs if $I_j$ has a smaller left endpoint than $B_\ell$ and the latter otherwise.}

\textbf{Case 2.} If $|h(t) - f(t) | \leq |h(t+1) - f(t+1)|$, let $h'(r) = h(t)\hspace{0.1cm}  \forall \hspace{0.1cm} r \in I_{j+1} \cap B_\ell, r > t$, and $h'(x) = h(x)$ otherwise. Then the number of pieces and total error of $h'$ are not greater than those of $h$. Moreover, either the total number of pieces decreases by 1 or $h'$ has one less bad breakpoint than $h$.\footnote{The former occurs if $I_{j+1}$ has a larger right endpoint than $B_\ell$ and the latter otherwise.}

Iterating the operation $h \rightarrow h'$, we conclude that after a finite number of rounds there are no bad breakpoints. The output $h_\mathrm{final}$ satisfies the conditions of the claim. 

\medskip

\noindent \textbf{Completing the proof.} Combining these claims, there exists \textit{an} optimal $k$-piecewise approximation $h$ to $f$ for which $I_j = \{\left\lfloor(1 + \epsilon)^{i_j}\right\rfloor + 1, \left\lfloor(1 + \epsilon)^{i_{j + 1}}\right\rfloor\}$ for some $0 = i_1\leq  \cdots \leq i_{k + 1} = m$, and for all $x \in I_j$, we have that $h(x) = p_{t_j}$ for some $t_j$. Moreover, breaking up each interval into the intervals before and after $t_j$, we may further assume that $h(x) = p_{i_j}$ or $h(x) = p_{i_{j + 1}}$.\footnote{Note that this operation at most doubles the number of intervals in $h$; thus, it can be accounted for by changing the constant $c$ appropriately.} Lastly, we may ignore the regions $B_j$ for which $j \lesssim \frac{\log (2/\epsilon)}{\epsilon}$, so that for all considered $B_s$, we have $\epsilon(1+\epsilon)^s \geq 2$.\footnote{This operation at most increments the number of intervals by a constant, and underestimates the total error, so it is also permissible.} Now, let $\ell_j = i_{j + 1} - i_j$. We have the following two cases. 

\noindent \textbf{Large interval regime.} First, if there is any interval $I_j$ such that $\ell_j \geq \tfrac{2 (1 + \epsilon)}{\epsilon}$, then if $h(x) = (1 + \epsilon)^{i_{j + 1}}$ on this interval, and so the error of the approximation on this interval is at least \begin{align*}
\frac{1}{\log n} \sum_{s = i_j}^{i_{j + 1}} \left| \frac{1}{(1 + \epsilon)^s} - \frac{1}{(1 + \epsilon)^{i_{j + 1}}}  \right| \cdot \frac12\epsilon(1 + \epsilon)^s &= \frac{\epsilon}{2\log n} \left( \ell_j - \sum_{s = 0}^{\ell_j - 1} \frac{1}{(1 + \epsilon)^s} \right) \\ &\geq \frac{\epsilon}{2\log n} \left( \ell_j - \frac{1 + \epsilon}{\epsilon} \right) \\ &\geq \tfrac{1}{2\log n} = \Omega (\epsilon). \end{align*} If instead $h(x) = (1 + \epsilon)^{i_{j}}$, we obtain the same result via an analogous calculation. In either case, the desired statement holds. 

\noindent \textbf{Small interval regime.} Otherwise, assume that $\ell_j \leq \tfrac{2 (1 + \epsilon)}{\epsilon}$ for all $j$. Recall the approximation inequality $1 - \tfrac{1}{(1 + \epsilon)^r} \geq \min (1/2, r \epsilon/2)$, which follows from Bernoulli's inequality. Applying it, we find that if $h(x) = (1 + \epsilon)^{i_{j + 1}}$, the error on $I_j$ is at least \begin{align*}
        \frac{1}{\log n} \sum_{s = i_j}^{i_{j + 1}} \left| \frac{1}{(1 + \epsilon)^s} - \frac{1}{(1 + \epsilon)^{i_{j + 1}}}  \right| \cdot \frac12 \epsilon(1+\epsilon)^s &\geq \frac{\epsilon}{4\log n} \sum_{s = 0}^{\ell_j - 1} s \epsilon \\ &\geq \frac{\epsilon^2}{16 \log n} (\ell_j-1)^2
\end{align*} If instead $h(x) = (1+\epsilon)^{i_j}$, we obtain the same result again via an analogous calculation. Thus, the overall error of the approximation can be lower bounded by $$ \left\| h - f \right\|_{L^1} \geq \frac{\epsilon^2}{16 \log n} \sum_{j = 1}^k (\ell_j-1)^2.$$ Since $\sum_{j = 1}^k (\ell_j-1) \geq (1-c)\tfrac{\log n}{\epsilon}$, by the Cauchy-Schwarz inequality we conclude that $$\|h-f\|_{L^1} \geq \frac{\epsilon^2(1-c)^2}{16 \log n} \cdot \frac{(\log n)^2}{k\epsilon^2} = \frac{(1-c)^2}{16} \cdot \frac{\log n}{k} \geq \Omega\left(\epsilon\right),$$ completing the proof of the desired claim.
\end{proof}

To make this result applicable to distributions, recall the following result about the realizability of any information curve by some distribution: 
\begin{lemma}[Theorem 1 in~\cite{chen2009average}]\label{lem:realizability}
    For any information curve $0 \leq Z_1 \leq \cdots \leq Z_n$ and approximation errors $\epsilon_1, \ldots, \epsilon_n > 0$, there exists a distribution $\mu$ for which $|Z_i(\mu) - Z_i| \leq \epsilon_i$ for all $i$. 
\end{lemma}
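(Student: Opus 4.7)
The plan is to realize the information curve as an additive sum of simple step-function curves, each implemented by a ``coordinate-wise product'' of atomic distributions over potentially different alphabets. The key observation, which I would verify first, is that information curves are additive under coordinate-wise products: if $\mu_1$ is a distribution on $\Sigma_1^n$ and $\mu_2$ on $\Sigma_2^n$, and we form $\mu_1 \boxtimes \mu_2$ on $(\Sigma_1 \times \Sigma_2)^n$ by independently drawing $\mathbf{X} \sim \mu_1$, $\mathbf{Y} \sim \mu_2$ and emitting $((X_i,Y_i))_{i \in [n]}$, then $H(Z_S) = H(X_S) + H(Y_S)$ for all $S$, so $H_j(\mu_1 \boxtimes \mu_2) = H_j(\mu_1) + H_j(\mu_2)$, and by Lemma 2.6 this gives $Z_j(\mu_1 \boxtimes \mu_2) = Z_j(\mu_1) + Z_j(\mu_2)$.

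Next, I would produce ``atomic'' distributions realizing single step functions. For each $0 < k < n$ and prime power $q \ge n$, Proposition~\ref{prop:independent} already gives $Z_j(\mathrm{Unif}(\subspace)) = \log_2(q)\cdot \mathds{I}[j > k]$ for any $k$-dimensional MDS code $\subspace \subseteq \mathbb{F}_q^n$. To achieve arbitrary step heights $c \ge 0$ at a fixed position $k$, I would combine several such codes at position $k$ over alphabets $q_1, q_2, \ldots$ via $\boxtimes$, exploiting the density of finite sums $\sum_s \log_2 q_s$ over prime powers $q_s \ge n$ in $\mathbb{R}_{\ge 0}$ (which follows from e.g.\ the fact that $\log_2(q+1) - \log_2(q) \to 0$). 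Alternatively, one can use a tunable continuous family such as $\mu_\lambda = \lambda\cdot \mathrm{Unif}(\subspace) + (1-\lambda)\cdot \mathrm{Unif}(\mathbb{F}_q^n)$: a direct entropy calculation shows $H(X_S) = |S|\log_2 q$ whenever $|S| < k$ (so $Z_j(\mu_\lambda) = 0$ for $j \le k$), while for $|S| \ge k$ the entropy varies continuously and monotonically with $\lambda$, yielding any achievable step height in $[0, \log_2 q]$ at position $k$ with $\lambda$ chosen appropriately.

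With the atomic step-function distributions in hand, I would write the target curve telescopically as $Z_j = \sum_{k=2}^{n} (Z_k - Z_{k-1})\cdot \mathds{I}[j \ge k]$, round each increment $c_k \triangleq Z_k - Z_{k-1}$ to a nearby achievable height $\tilde{c}_k$ so that the cumulative error $|\sum_{k \le j}(c_k - \tilde{c}_k)|$ stays below $\epsilon_j$ for every $j$ (using that the achievable heights form a dense subset of $\mathbb{R}_{\ge 0}$), and then take the coordinate-wise product $\mu = \boxtimes_{k=2}^n \mu^{(k-1, \tilde{c}_k)}$ of the atoms. By the additivity established above, $Z_j(\mu) = \sum_k \tilde{c}_k \cdot \mathds{I}[j > k-1]$, which approximates $Z_j$ within $\epsilon_j$.

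The main obstacle is the rounding step: ensuring simultaneously that all $n$ tolerances $\epsilon_j$ are met requires controlling a cumulative error, not just an error on a single increment. I would address this by first choosing a single prime power $q^*$ large enough (depending on $\min_j \epsilon_j/n$) that the finite-sum lattice $\{\sum_s \log_2 q_s : q_s \text{ prime power in } [n, q^*]\}$ is dense at scale $\ll \min_j \epsilon_j / n$, then performing a greedy rounding that keeps partial sums close to the true $\sum_{k\le j} c_k$. A minor secondary issue is that the atomic distribution at position $k$ requires $q \ge n$, but since we need MDS codes of every dimension $0 < k < n$ this is the standard Reed--Solomon regime already invoked by the rest of the paper.
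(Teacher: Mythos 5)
The paper does not actually prove this lemma---it is imported verbatim as Theorem~1 of~\cite{chen2009average}---so your attempt is a standalone construction, and it has a genuine gap. Your two correct ingredients are the additivity of information curves under coordinate-wise independent products and the exact step curve $Z_j = \log_2(q)\,\mathds{I}[j>k]$ for MDS codes (Proposition~\ref{prop:independent}). The problem is the set of step heights you can actually realize. Every MDS-code atom has height $\log_2 q$ with $q \ge n$, so finite sums of such heights lie in $\{0\}\cup[\log_2 n,\infty)$; they are \emph{not} dense in $\mathbb{R}_{\ge 0}$, and no choice of a larger prime-power ceiling $q^*$ or greedy rounding of partial sums helps, since enlarging $q^*$ only adds \emph{larger} achievable values. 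Concretely, take $Z_j = \delta\cdot\mathds{I}[j\ge 2]$ with $\delta \ll \log_2 n$ and $\epsilon_j = \delta/10$: any coordinate-wise product of MDS-code uniforms has every $Z_j(\mu)$ either $0$ or at least $\log_2 n$, so the tolerance is violated at $j=2$ no matter how you round. The obstruction is the minimum nonzero jump size, not cumulative drift, so the ``main obstacle'' you identify is misdiagnosed and your proposed fix does not address it.

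Your fallback family $\mu_\lambda = \lambda\,\mathrm{Unif}(\subspace) + (1-\lambda)\,\mathrm{Unif}(\mathbb{F}_q^n)$ is the right idea, but the assertion that it yields an exact step of height tunable in $[0,\log_2 q]$ is false as stated. A direct computation (the marginal on any $|S|=m>k$ is uniform on a $k$-dimensional subspace with total mass $\alpha_m = \lambda + (1-\lambda)q^{k-m}$ and uniform off it) shows $Z_j(\mu_\lambda)=0$ for $j\le k$ but, for $j>k$, $Z_j(\mu_\lambda)$ is \emph{not} constant: it equals roughly $\lambda\log_2 q - h(\alpha_{k+1})$ at $j=k+1$ (where $h$ is the binary entropy) and climbs toward $\lambda\log_2 q$ as $j$ grows, so each atom deviates from an ideal step by up to about $h(\lambda)$ plus geometrically decaying terms in $q^{k-j}$. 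Since the tolerances $\epsilon_j$ can be far smaller than $1$ bit, you must quantify and control this deviation rather than assert exactness. The construction is salvageable: set $\lambda = c_k/\log_2 q_k$ for the desired increment $c_k$ and choose each $q_k$ so large that $h(\lambda) + O\bigl(n q_k^{-1}\log_2 q_k\bigr) \le \min_j \epsilon_j/n$, then take the coordinate-wise product of these (approximate-step) atoms and invoke your additivity lemma together with the triangle inequality. As written, however, the proof is incomplete: route (a) cannot work at all for curves with small increments, and route (b) is missing the quantitative error analysis that makes it go through.
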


Combining Lemmas \ref{lem:logarithmic} and \ref{lem:realizability}, we conclude that there exist distributions for which the $\log n$ is unavoidable on many parameter regimes.

\begin{theorem}\label{thm:logaritihmic}
    Let $n \geq 2$, $\epsilon$ be such that $\frac 2n \log \frac 2\epsilon \leq \epsilon \leq \frac1{\log n}$, and $c$ be a sufficient small constant. Then there exists a distribution $\mu$ such that for any unmasking schedule $(k, \{s_i\}_1^k)$ with $k \leq c \cdot \frac{\log n} \epsilon$, we have $$\mathbb E_{S_1, \ldots, S_k} \left[\KL \mu {\nu^{S_1, \ldots, S_k}}\right] \geq \Omega(\epsilon),$$ where the expectation is taken over all partitions $S = \bigsqcup_{i=1}^k S_i$ and $\nu^{S_1, \ldots, S_k}$ denotes the distribution outputted by $\calA(k, \{S_i\}_1^k)$. 
\end{theorem}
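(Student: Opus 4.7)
The plan is to combine Lemma~\ref{lem:logarithmic} (a monotone nondecreasing function that is hard to approximate by any $k$-piecewise constant function in $L^1$) with Lemma~\ref{lem:realizability} (realizability of such sequences as information curves), and then to invoke Theorem~\ref{main_result} to convert the resulting $L^1$ lower bound into a lower bound on the expected KL error.

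First, I would take $f$ from Lemma~\ref{lem:logarithmic} and define the candidate information curve $\widetilde Z_i := f(i) - f(1)$ for $i \in [n]$. This shift ensures $\widetilde Z_1 = 0$ while preserving monotonicity and nonnegativity, so $\widetilde{\bZ}$ satisfies the syntactic requirements of an information curve (recall that any bona fide information curve satisfies $Z_1 = 0$, since conditioning on the empty set gives zero mutual information). The shift does not weaken the approximation lower bound: for any $k$-piecewise constant $h$, the function $h + f(1)$ is also $k$-piecewise, so
\begin{equation*}
    \|\widetilde{\bZ} - h\|_{L^1} = \|f - (h + f(1))\|_{L^1} \geq \Omega(\epsilon)
\end{equation*}
by Lemma~\ref{lem:logarithmic}, provided $k \leq c \cdot \log(n)/\epsilon$.

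Next, I would apply Lemma~\ref{lem:realizability} to $\widetilde{\bZ}$ with per-index tolerance $\epsilon_i := \epsilon/(4n)$ to obtain a distribution $\mu$ whose information curve satisfies $\|\bZ(\mu) - \widetilde{\bZ}\|_{L^1} \leq \epsilon/4$. Now fix any unmasking schedule $(k, \{s_i\}_1^k)$ with $k \leq c \cdot \log(n)/\epsilon$. Its left Riemann approximation $\bZ(\mu)^{\mathbf{N}}$ takes at most $k$ distinct consecutive values and is therefore a $k$-piecewise constant function, so the triangle inequality yields
\begin{equation*}
    \|\bZ(\mu) - \bZ(\mu)^{\mathbf{N}}\|_{L^1} \geq \|\widetilde{\bZ} - \bZ(\mu)^{\mathbf{N}}\|_{L^1} - \|\widetilde{\bZ} - \bZ(\mu)\|_{L^1} \geq \Omega(\epsilon) - \epsilon/4 = \Omega(\epsilon).
\end{equation*}
By Theorem~\ref{main_result}, the left-hand side equals $\mathbb E_{S_1,\ldots,S_k}[\KL{\mu}{\nu^{S_1,\ldots,S_k}}]$, which gives the claim. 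I do not foresee any significant obstacle; the only mild subtlety is the shift to enforce $Z_1 = 0$ (needed because $f(1) = \Theta(1/\log n)$ can be comparable to or larger than $\epsilon$ in the relevant parameter regime $\epsilon \leq 1/\log n$), together with choosing $\epsilon_i$ small enough to absorb the realizability error into a lower-order term.
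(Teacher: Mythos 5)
Your proposal is correct and follows essentially the same route as the paper's one-paragraph proof: apply Theorem~\ref{main_result} to identify the expected KL error with the $L^1$ left-Riemann error, realize (a small perturbation of) the hard curve from Lemma~\ref{lem:logarithmic} via Lemma~\ref{lem:realizability}, and finish with the triangle inequality. Your additional shift $\widetilde Z_i := f(i) - f(1)$ to enforce $Z_1 = 0$ — together with the observation that shifting a $k$-piecewise competitor by a constant leaves it $k$-piecewise, so Lemma~\ref{lem:logarithmic}'s lower bound is unaffected — is a careful detail the paper elides but that is indeed needed for Lemma~\ref{lem:realizability} to apply.
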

\begin{proof}
    Applying Theorem \ref{thm:main}, the expected KL error is given by $\|\mathbf Z - \mathbf Z^{\mathbf N}\|_{L^1}$. The result follows from choosing a distribution $\mu$ via Lemma \ref{lem:realizability} with $\epsilon_i = o(\frac \epsilon n)$ and applying Lemma \ref{lem:logarithmic} and the triangle inequality for the $L^1$ norm.      
\end{proof}

\section{Recovering existing bounds}\label{sec:recover}

In this section we recover the iteration complexity bound of~\cite{li2025convergence} and an iteration complexity bound which is implicit in~\cite{austin2020multi}.

\subsection{Recovering the bound of Li and Cai~\cite{li2025convergence}}
\label{sec:licai}

In \cite{li2025convergence}, the authors prove the following bound on the sampling error, given the sizes of the mask schedule.

\begin{theorem}[Theorem 1 of \cite{li2025convergence}]\label{gen_li_result}
Let $\mu$ be the data distribution and $\nu^{S_1, \ldots, S_k}$ be the output of the fixed unmasking algorithm $\calA_{\mathrm{fixed}}(k, \{S_i\}_1^k)$. Let $s_{\mathrm{max}} = \max_{i=1}^k |S_i|$. Then we have \begin{equation}
\mathbb{E}_{S_1, \ldots, S_k}\left[\KL \mu {\nu^{S_1, \ldots, S_k}}\right] \leq \frac{2^{\lceil \log_2 s_{\mathrm{max}}\rceil } - 1}{n} \sum_{i=1}^n I\left(X_i; \{X_j\}_{j \neq i}\right) = \frac{2^{\lceil \log_2 s_{\mathrm{max}}\rceil } - 1}{n} (\TC + \DTC)
\end{equation}

\end{theorem}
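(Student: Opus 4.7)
The plan is to derive Theorem \ref{gen_li_result} as a direct four-line consequence of Theorem \ref{main_result}, bypassing the delicate inductive halving argument used in the original proof. The key observation is that the exact characterization of the expected KL error as the integration error $\sum_{i=1}^k \sum_{j=1}^{s_i}(Z_{N_{i-1}+j}-Z_{N_{i-1}+1})$ makes the Li--Cai bound essentially fall out from nothing more than monotonicity of $\bZ$ and the identity $nZ_n = \TC+\DTC$ (Lemma \ref{tc_dtc_facts}).

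First, I will apply Theorem \ref{main_result} to rewrite the expected KL error as $\sum_{i=1}^k \sum_{j=1}^{s_i}(Z_{N_{i-1}+j}-Z_{N_{i-1}+1})$. Next, since $\bZ$ is non-decreasing and $N_{i-1}+j\le N_i$ for $j\le s_i$, each individual summand is bounded by $Z_{N_i}-Z_{N_{i-1}+1}$, and the $j=1$ term vanishes, so the inner sum is at most $(s_i-1)(Z_{N_i}-Z_{N_{i-1}+1})$. Pulling out $s_{\max}-1$ and telescoping the remaining differences using $Z_{N_{i-1}+1}\ge Z_{N_{i-1}}$ gives the bound $(s_{\max}-1)(Z_n-Z_0) = (s_{\max}-1)Z_n$. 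Finally, I will use $s_{\max}\le 2^{\lceil\log_2 s_{\max}\rceil}$ and the identity $nZ_n = \TC+\DTC$ to conclude.

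The main conceptual step, rather than any technical obstacle, is the realization that Theorem \ref{main_result} makes the Li--Cai bound essentially a bookkeeping exercise: once one recognizes that the statistical error decomposes along the information curve, the crude bound $Z_{N_{i-1}+j}-Z_{N_{i-1}+1}\le Z_{N_i}-Z_{N_{i-1}+1}$ immediately converts the telescoping structure into the desired global bound. In fact, this approach shows that one can replace the factor $2^{\lceil\log_2 s_{\max}\rceil}-1$ with the slightly tighter $s_{\max}-1$, matching Li and Cai up to their use of the power-of-two rounding that arises naturally in their halving argument but is unnecessary here.
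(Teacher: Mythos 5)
Your proof is correct and takes essentially the same route as the paper's Appendix~\ref{sec:licai} proof: apply Theorem~\ref{main_result}, bound each inner summand by $Z_{N_i}-Z_{N_{i-1}}$ using monotonicity of $\bZ$, telescope, and finish with $nZ_n = \TC+\DTC$. Your observation that the argument actually yields the tighter constant $s_{\max}-1$ is also implicit in the paper's intermediate step $(s_{\max}-1)Z_n$, before relaxing to the $2^{\lceil\log_2 s_{\max}\rceil}-1$ factor to match Li--Cai's statement.
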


We provide a short proof of this result via Theorem \ref{main_result}.
\begin{proof}
    The equality in the theorem follows from the definition of $\TC$ and $\DTC$, so we aim to prove the inequality. By Theorem \ref{main_result} and Lemma \ref{tc_dtc_facts}, we have \begin{align*}
        \mathbb E_{S_1, \ldots, S_k}[\KL \mu {\nu^{S_1, \ldots, S_k}}] &= \sum_{i=1}^k \left(\sum_{j=1}^{s_i} \left(Z_{N_{i-1} + j} - Z_{N_{i-1} + 1}\right)\right) \\ &\leq \sum_{i=1}^k (s_i-1)(Z_{N_i} - Z_{N_{i-1}}) \\ &\leq (s_{\mathrm{max}}-1)Z_n \\ &\leq \frac{2^{\lceil \log_2 s_{\mathrm{max}}\rceil} - 1}{n}(\TC + \DTC),
    \end{align*} where in the second line we have noted that the summand $Z_{N_{i-1} + j} - Z_{N_{i-1} + 1}$ is zero for $j = 1$ and at most $Z_{N_i} - Z_{N_{i-1}}$ otherwise. This completes the proof.
\end{proof}
\begin{remark}
We can restate Theorem \ref{gen_li_result} in terms of query complexity given a fixed $\epsilon$. In particular, the number of queries is $k \geq \frac{n}{s_{\mathrm{max}}}$. Thus, we find that given any $\epsilon$, there is a mask schedule attaining expected sampling error at most $\epsilon$ in $\calO\left(\left\lceil\frac{\TC + \DTC}{\epsilon}\right\rceil\right)$ queries.
\end{remark}

As shown in \cite{li2025convergence}, this upper bound is optimal in some cases. However, as we remarked in the introduction, it is generally worse than Theorem \ref{thm:tcdtc}.

\subsection{Recovering the bound of Austin~\cite{austin2020multi}}
\label{sec:austin}

In \cite{austin2020multi}, the author proves that distributions with low $\TC$ can be decomposed into a fixed subset of coordinates $S$ and a remaining subset of coordinates $[n] \backslash S$ which have low conditional $\TC$. For completeness, we show the result here.

\begin{lemma}[Lemma 8.3 of \cite{austin2020multi}]\label{lem:austin_orig}
    Let $\mu$ be the data distribution. Then there is a subset size $s \leq \frac{\DTC}{\delta^2}$ for which in expectation over all $|S|= s, S \subseteq [n]$, we have  $$\TC(X_1, \ldots, X_n \mid X_S) + \DTC(X_1, \ldots, X_n \mid X_S) \leq \delta^2 (n - |S|),$$ where $\TC(Y \mid X) = \mathbb E_{x \sim p(X)} \TC(Y \mid X = x)$ is the conditional $\TC$ and similarly for $\DTC(Y \mid X)$. 
\end{lemma}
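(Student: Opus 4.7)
The plan is to reduce the bound on $\TC(X_{[n]\setminus S} \mid X_S) + \DTC(X_{[n]\setminus S} \mid X_S)$ to a clean expression in the information curve $\bZ$, and then extract the desired $s$ by a pigeonhole argument on the DTC. Throughout I will freely use Lemma~\ref{tc_dtc_facts} and the identity $I(X_i; X_T \mid X_S) = I(X_i; X_{S \cup T}) - I(X_i; X_S)$ for disjoint $S, T$.

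\textbf{Step 1: conditional TC+DTC in terms of pairwise mutual informations.} From the definitions of conditional $\TC$ and $\DTC$, a direct calculation gives
\begin{equation}
\TC(X_{[n]\setminus S}\mid X_S) + \DTC(X_{[n]\setminus S}\mid X_S) = \sum_{i\notin S}\bigl[H(X_i\mid X_S) - H(X_i\mid X_{[n]\setminus\{i\}})\bigr] = \sum_{i\notin S} I(X_i; X_{[n]\setminus(S\cup\{i\})}\mid X_S).
\end{equation}
By the chain-rule identity above, each summand equals $I(X_i; X_{[n]\setminus\{i\}}) - I(X_i; X_S)$.

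\textbf{Step 2: averaging over random $S$.} Taking expectation over a uniformly random $S\subseteq[n]$ with $|S|=s$ and using that any fixed $i$ is outside $S$ with probability $(n-s)/n$, the first term contributes $\tfrac{n-s}{n}\sum_i I(X_i; X_{[n]\setminus\{i\}}) = (n-s) Z_n$. The second term contributes $(n-s)\cdot \mathbb E_{|S|=s,\, i\notin S}[I(X_i; X_S)] = (n-s) Z_{s+1}$ by the definition of the information curve. Hence
\begin{equation}
\mathbb E_{|S|=s}\bigl[\TC(X_{[n]\setminus S}\mid X_S) + \DTC(X_{[n]\setminus S}\mid X_S)\bigr] = (n-s)\,(Z_n - Z_{s+1}).
\end{equation}
So it suffices to find some $s \le \DTC/\delta^2$ with $Z_n - Z_{s+1} \le \delta^2$.

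\textbf{Step 3: pigeonhole on the DTC.} Let $K = \lfloor \DTC/\delta^2\rfloor$. Suppose for contradiction that $Z_n - Z_{s+1} > \delta^2$ for every $s\in\{0,1,\ldots,K\}$. Summing these $K+1$ strict inequalities and using Lemma~\ref{tc_dtc_facts} (plus nonnegativity of $Z_n - Z_j$),
\begin{equation}
(K+1)\delta^2 \;<\; \sum_{s=0}^{K}(Z_n - Z_{s+1}) \;=\; \sum_{j=1}^{K+1}(Z_n - Z_j) \;\le\; \sum_{j=1}^{n}(Z_n - Z_j) \;=\; \DTC,
\end{equation}
which contradicts $K+1 > \DTC/\delta^2$. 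Therefore some $s\in\{0,\ldots,K\}$ satisfies $Z_n - Z_{s+1}\le \delta^2$, and for this $s$ Step~2 gives the claimed bound $(n-s)\,\delta^2$.

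\textbf{Expected difficulty.} There is no real obstacle: the only nontrivial piece is the identity in Step~1, and that is a routine unfolding of the conditional entropy expressions. Once the expectation is identified as $(n-s)(Z_n - Z_{s+1})$, the lemma is essentially forced by the telescoping representation $\DTC = \sum_{j=1}^n (Z_n - Z_j)$ and a one-line pigeonhole. The only place to be mildly careful is to note that in Step~1 the conditional-entropy term $H(X_i\mid X_S, X_{[n]\setminus(S\cup\{i\})}) = H(X_i\mid X_{[n]\setminus\{i\}})$ so that the cross terms involving $H(X_{[n]\setminus S}\mid X_S)$ cancel between the $\TC$ and $\DTC$ contributions.
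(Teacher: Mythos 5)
Your proof is correct. Note that the paper itself does not prove Lemma~\ref{lem:austin_orig}: it states the lemma and cites Lemma~8.3 of \cite{austin2020multi}, and the adjacent Corollary~\ref{austin_for_sampling} is then established directly via Theorem~\ref{main_result} rather than via this lemma. So there is no in-paper proof to compare against, but your argument is a clean, self-contained derivation expressed entirely in the paper's own information-curve language.

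Two remarks connecting your proof to the paper's machinery. First, your Step~2 identity decomposes into $\mathbb E_{|S|=s}\bigl[\TC(X_{[n]\setminus S}\mid X_S)\bigr] = \sum_{j=s+1}^n (Z_j - Z_{s+1})$, which is exactly the Theorem~\ref{main_result} expression for the schedule $(1,\ldots,1,n-s)$, plus the mirrored quantity $\mathbb E_{|S|=s}\bigl[\DTC(X_{[n]\setminus S}\mid X_S)\bigr] = \sum_{j=s+1}^n (Z_n - Z_j)$; adding these gives your $(n-s)(Z_n - Z_{s+1})$. Second, your Step~3 pigeonhole and the averaging inequality $(Z_n - Z_k) \le \tfrac{1}{k}\sum_{j\le k}(Z_n - Z_j) \le \DTC/k$ used in the paper's proof of Corollary~\ref{austin_for_sampling} are logically equivalent ways of extracting a good $s$ from the telescoping identity $\sum_{j=1}^n (Z_n - Z_j) = \DTC$ of Lemma~\ref{tc_dtc_facts}.
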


This lemma yields a natural method of sampling $\mu$: first perfectly sample an arbitrary subset of size $s$, and then sample the remaining coordinates in one-shot.

\begin{corollary}\label{austin_for_sampling}
    Suppose $\DTC \leq \delta^2 n$. Let $\mu$ be the data distribution and $\nu$ be the output of the random unmasking algorithm $\calA(k, \{s_i\}_1^k)$. Then there exists a schedule $(k, \{s_i\}_1^k)$ for which the query complexity satisfies $k \leq \frac{\DTC}{\delta^2} + 1$ and the error is bounded by $$\KL \mu {\nu} \leq \delta^2 (n - k+1).$$
\end{corollary}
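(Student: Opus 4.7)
The plan is to take the schedule ``unmask one coordinate at a time for $s$ steps, then reveal all $n-s$ remaining coordinates in a single final block,'' where $s \le \DTC/\delta^2$ is the subset size provided by Lemma~\ref{lem:austin_orig}. Formally, set $s_1 = \cdots = s_s = 1$ and $s_{s+1} = n-s$, so $k = s+1 \le \DTC/\delta^2 + 1$, which immediately gives the claimed query bound. In the picture of Figure~\ref{pictorial_rep}, the associated left Riemann approximation has its first $s$ nodes sitting on the curve itself (each contributing zero area) and a single final tall rectangle spanning indices $s+1$ through $n$.

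To control the error, I would apply Theorem~\ref{main_result}: each $s_i = 1$ step contributes $Z_{N_{i-1}+1} - Z_{N_{i-1}+1} = 0$, so only the final block matters, yielding
\begin{equation}
\E[S_1,\ldots,S_k]{\KL{\mu}{\nu^{S_1,\ldots,S_k}}} = \sum_{j=1}^{n-s}\bigl(Z_{s+j} - Z_{s+1}\bigr).
\end{equation}
Substituting $Z_i = H_1 + H_{i-1} - H_i$ and telescoping rewrites the right-hand side as $(n-s)(H_{s+1} - H_s) - (H_n - H_s)$. The key step is then to recognize this as $\E[|S|=s]{\TC(X_{[n]\setminus S} \mid X_S)}$: expanding conditional TC gives $\sum_{i\notin S}[H(X_{S\cup\{i\}}) - H(X_S)] - [H(X_{[n]}) - H(X_S)]$, and averaging over random $|S|=s$ produces exactly the same expression.

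Finally, I would invoke Lemma~\ref{lem:austin_orig} to pick $s \le \DTC/\delta^2$ for which $\E[|S|=s]{\TC(X \mid X_S) + \DTC(X \mid X_S)} \le \delta^2(n-s)$. Since conditional DTC is nonnegative (Han's inequality applied inside each conditional), this forces $\E[|S|=s]{\TC(X_{[n]\setminus S}\mid X_S)} \le \delta^2(n-s) = \delta^2(n-k+1)$, and the convexity bound $\KL{\mu}{\nu} \le \E[S_1,\ldots,S_k]{\KL{\mu}{\nu^{S_1,\ldots,S_k}}}$ from Theorem~\ref{main_result} closes the argument. The only nontrivial step is identifying the Riemann-sum expression with the expected conditional total correlation; once that identification is made, Austin's lemma supplies the $\delta^2(n-s)$ bound mechanically.
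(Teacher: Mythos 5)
Your proposal is correct, but it takes a different route from the paper. You use the same schedule (unmask $s$ coordinates one at a time, then reveal the remaining $n-s$ in a single block) and apply Theorem~\ref{main_result}, but then you close the argument by identifying the residual Riemann sum with the expected conditional total correlation $\E_{|S|=s}\bigl[\TC(X_{[n]\setminus S}\mid X_S)\bigr]$ and invoking Lemma~\ref{lem:austin_orig} together with the nonnegativity of conditional DTC. All the steps check: the telescoping identity $(n-s)(H_{s+1}-H_s) - (H_n-H_s) = \E_{|S|=s}\bigl[\TC(X_{[n]\setminus S}\mid X_S)\bigr]$ holds (and is essentially re-deriving a fact already implicit inside the proof of Theorem~\ref{main_result}), and conditional DTC is indeed nonnegative as an average of nonnegative quantities.

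The paper's proof of this corollary is shorter and, notably, does not invoke Lemma~\ref{lem:austin_orig} at all. It stays entirely within the information-curve machinery: writing the residual as $\sum_{j=k}^n(Z_j - Z_k)$, bounding this by $(n-k+1)(Z_n - Z_k)$, using monotonicity of $Z_j$ to get $Z_n - Z_k \le \tfrac{1}{k}\sum_{j=1}^k (Z_n - Z_j) \le \tfrac{1}{k}\sum_{j=1}^n (Z_n - Z_j) = \DTC/k$, and then choosing $k = \lfloor \DTC/\delta^2\rfloor + 1$. This shows the corollary is a direct consequence of Lemma~\ref{tc_dtc_facts} and Theorem~\ref{main_result} alone, whereas your route is the one suggested by the surrounding prose (and makes the conceptual link to Austin's structural lemma explicit) but is logically heavier than necessary for this particular statement. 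Both are valid.
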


Note that here $k = s+1$, since there is the final one-shot step. We provide a short proof of this result via Theorem \ref{main_result}.

\begin{proof}
Consider the schedule given by $k = \left\lfloor \frac \DTC{\delta^2}\right\rfloor + 1$, $s_i = 1$ for $i \leq k-1$, and $s_k = n - k+1$. By Theorem \ref{main_result}, we have that \allowdisplaybreaks \begin{align*}
    \KL \mu \nu &= (n-k+1)(Z_n - Z_k) \\ &\leq \frac{n-k+1}{k} \sum_{j=1}^k (Z_n - Z_j) \\ &\leq \frac{n-k+1}{k} \DTC \\ &\leq \delta^2(n - k + 1),
\end{align*} where we have used Lemma \ref{tc_dtc_facts} repeatedly. 
\end{proof}

Note that the bound in Corollary \ref{austin_for_sampling} is not particularly strong; in particular, the sampling procedure is essentially two-step and does not provide significant flexibility to the choice of mask schedule. This can be improved by replacing the one-shot sample of $S_k$ with an $\ell$-step, constant mask size sampler. Under this regime, combining the above result and Theorem \ref{gen_li_result} below and then optimizing over $k$ and $\ell$, we can recover Theorem \ref{thm:austinquerycomplexity} given in the introduction. We provide a detailed proof below.

\begin{proof}[Proof of Theorem \ref{thm:austinquerycomplexity}]
    Consider the schedule $s_i = 1$ for $i \leq k-1$ and $s_i = \lfloor \frac{n-k+1}{\ell}\rfloor $ for $k \leq i \leq k+\ell+1$.\footnote{This is approximate, as we do not necessarily have $\sum s_i = n$. Formally, if $\sum_i s_i > n$, omit as many terms $s_i$ from the end as necessary and cap the final value of $s_i$} Applying Theorem \ref{gen_li_result} to the conditional distribution $X_1, \ldots, X_n \mid X_S$ and then Lemma \ref{lem:austin_orig}, we find that the total error is bounded by $$\mathbb E\left[\KL \mu \nu\right] \leq \frac 1\ell (\TC (X_1, \ldots, X_n \mid X_S) + \DTC(X_1, \ldots, X_n \mid X_S)) \leq \frac{\delta^2n}{\ell}.  $$ The total query complexity is $k + \ell$. Take $\ell = \left\lceil \frac{\delta^2n}{\epsilon}\right\rceil$ and $k \leq \frac{\DTC}{\delta^2}$, observe that we can then set $\delta^2 = \sqrt{\frac{\DTC\cdot \epsilon}{n}}$. We finally find that this schedule yields error $\mathbb E\left[\KL \mu \nu\right] \leq \epsilon$ and query complexity $k+\ell = \mathcal O\left(\sqrt{\frac{\DTC \cdot n}{\epsilon}}\right)$, as desired. This completes the proof. 
\end{proof}

\section{Decoupling estimation error and sampling error}
\label{app:decoupling}

This appendix follows the work of \cite{li2025convergence}, and is written in order that the present work be self-contained. For simplicity, let $\mu(x)$ denote the PDF of $\mu$. Let $T_i = \cup_{j < i} S_j$ for any subsets $S = \bigsqcup_{i = 1}^k S_i$. We consider learning the estimate $\widehat{\mathsf{CO}}$ which minimizes the following error: \begin{align*}\text{error}(\mu, \widehat{\mathsf{CO}})= \mathbb E_{S_1, \ldots, S_k; i \in [k]} \left[\frac n{|S_i|} \sum_{j \in S_i} \log \frac{\mu(X_j \mid X_{T_i} = x_{T_i})}{\widehat{\mathsf{CO}}(X_j \mid X_{T_i} = x_{T_i})}\right]\end{align*} where $\widehat{\mathsf{CO}}(X_j \mid X_{T_i} = x_{T_i})$ denotes the conditional marginal of $X_j$ outputted by the learned oracle, and the expectation is over \textit{all} schedules $S = \bigsqcup_{i=1}^k S_i$, and $i$ is drawn from the distribution $p(i) = \frac{|S_i|}{n}$. 

As a brief remark, note that since $\widehat{\mathsf{CO}}$ only appears in the denominator of the logarithmic term, we can estimate the learning error $$\text{learning error} = -\mathbb E_{S_1, \ldots, S_k; i \in [k]} \left[\frac n {|S_i|} \sum_{j \in S_i} \log \widehat{\mathsf{CO}} (X_j \mid X_{T_i} = x_{T_i})\right]$$ via training samples; moreover, by positivity of KL, the optimum is precisely $\widehat{\mathsf{CO}} = \mathsf{CO}$, hence with sufficiently many samples, we can expect to learn a good estimate $\widehat{\mathsf{CO}}$.  

Now, the following error-decoupling result justifies our assumption that the conditional oracle $\widehat{\mathsf{CO}}$ is perfect. 

\begin{lemma}[\cite{li2025convergence}]
    Let $\mu$ be the data distribution and $(S_1, \ldots, S_k)$ be an unmasking schedule. Moreover, let $\widehat{\mathsf{CO}}$ be a learned conditional marginal oracle, which estimates $\mathsf{CO}$. Let $\nu^{S_1, \ldots, S_k}$ be the distribution sampled by $\calA_{\mathrm{fixed}}(k, \{S_1\}_1^k)$ using $\mathsf{CO}$, and $\hat\nu^{S_1, \ldots, S_k}$ is the distribution sampled by $\calA_{\mathrm{fixed}}(k, \{S_1\}_1^k$ using $\widehat{\mathsf{CO}}$. Then $$\KL\mu {\nu^{S_1, \ldots, S_k})} = \KL\mu {\hat\nu^{S_1, \ldots, S_k}} + \emph{error}(\mu, \widehat{\mathsf{CO}})$$
\end{lemma}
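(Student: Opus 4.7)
The plan is to unfold the KL divergence on the left-hand side using the explicit product-of-marginals form of the sampler's output, then subtract and add the analogous expression with the true oracle to get an exact decomposition. Fix any schedule $S_1,\ldots,S_k$ and write $T_i = \bigcup_{j < i} S_j$ as in the proof of Theorem~\ref{main_result}. Because the fixed unmasking algorithm at stage $i$ samples coordinates in $S_i$ independently from the oracle's $1$-wise conditional marginals, the output density factorizes as
\begin{equation}
    \widehat{\nu}^{S_1,\ldots,S_k}(x) \;=\; \prod_{i=1}^{k}\prod_{j \in S_i} \widehat{\mathsf{CO}}(X_j = x_j \mid X_{T_i} = x_{T_i}),
\end{equation}
and likewise $\nu^{S_1,\ldots,S_k}(x)$ with $\widehat{\mathsf{CO}}$ replaced by $\mathsf{CO}$. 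Expanding $\KL{\mu}{\widehat{\nu}^{S_1,\ldots,S_k}} = \mathbb{E}_{x\sim\mu}[\log \mu(x) - \log \widehat{\nu}^{S_1,\ldots,S_k}(x)]$ and adding and subtracting $\log \nu^{S_1,\ldots,S_k}(x)$ inside the expectation immediately yields
\begin{equation}
    \KL{\mu}{\widehat{\nu}^{S_1,\ldots,S_k}} \;=\; \KL{\mu}{\nu^{S_1,\ldots,S_k}} + \mathbb{E}_{x \sim \mu}\!\left[\,\sum_{i=1}^{k}\sum_{j \in S_i}\log\frac{\mathsf{CO}(X_j \mid X_{T_i} = x_{T_i})}{\widehat{\mathsf{CO}}(X_j \mid X_{T_i} = x_{T_i})}\,\right].
\end{equation}

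Next I would identify the residual term with $\text{error}(\mu,\widehat{\mathsf{CO}})$. Since $\mathsf{CO}$ is the \emph{exact} conditional marginal oracle, $\mathsf{CO}(X_j \mid X_{T_i} = x_{T_i}) = \mu(X_j \mid X_{T_i} = x_{T_i})$ pointwise on $\mathrm{supp}(\mu)$, so the numerator in the log-ratio becomes exactly the $\mu$ appearing in the definition of the learning error. After taking the outer expectation over the random schedule $S_1,\ldots,S_k$, I then rewrite the sum $\sum_{i=1}^k (\cdot)$ as an expectation $\mathbb{E}_{i \sim p}[(\cdot)/p(i)]$ using the probability $p(i) = |S_i|/n$ on stage indices; this is precisely where the normalizing factor $n/|S_i|$ appears in the definition of $\text{error}(\mu,\widehat{\mathsf{CO}})$, and the two expressions match term-by-term.

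The entire argument is essentially an application of the chain rule for relative entropy together with the identity $\mathsf{CO} = \mu(\,\cdot \mid \,\cdot)$, so there is no real analytical obstacle. The one bookkeeping point to be careful about is that the outer expectation over $S_1,\ldots,S_k$ must be taken on both sides in the same way so that the reweighting by $n/|S_i|$ against $p(i) = |S_i|/n$ exactly undoes the averaging over $i$; once this is set up consistently, the decomposition is an identity rather than an inequality, and positivity of the residual KL-type term (which also follows from $\text{error} \ge 0$ by Jensen/Gibbs' inequality applied to each stage) explains why the learned-oracle error upper bounds the perfect-oracle error by an additive $\text{error}(\mu,\widehat{\mathsf{CO}})$ term.
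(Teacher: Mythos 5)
Your proof is correct and follows essentially the same route as the paper's: write the sampler's output as the product of per-stage conditional marginals, take the log-ratio under $\mu$, and identify the residual with $\mathrm{error}(\mu,\widehat{\mathsf{CO}})$ once the schedule is averaged against $p(i)=|S_i|/n$. One remark: your orientation $\KL{\mu}{\hat\nu^{S_1,\ldots,S_k}} = \KL{\mu}{\nu^{S_1,\ldots,S_k}} + \mathrm{error}(\mu,\widehat{\mathsf{CO}})$ is the consistent one (since $\mathrm{error}(\mu,\widehat{\mathsf{CO}})\ge 0$ and the learned oracle can only be worse); the lemma as printed swaps the two KL terms, which traces to a sign slip in the first displayed line of the paper's own computation, so your derivation in fact matches what the paper's proof actually establishes.
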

\begin{proof}
     Observe that \begin{align*}
        \KL\mu {\nu^{S_1, \ldots, S_k}} - \KL\mu {\hat\nu^{S_1, \ldots, S_k}}&= \int_{\Sigma^n} \mu(x) \log \left(\frac{ \nu^{S_1, \ldots, S_k}(x)}{\hat\nu^{S_1, \ldots, S_k}(x)}\right)\mathrm dx \\ &= \sum_{i = 1}^k\int_{\Sigma^n} \mu(x) \log \left(\frac{\nu^{S_1, \ldots, S_k}(x_{S_i} \mid X_{T_i} = x_{T_i})}{\hat \nu^{S_1, \ldots, S_k}(x_{S_i} \mid X_{T_i} = x_{T_i})}\right)\mathrm dx \\ &= \sum_{i = 1}^k \sum_{j \in S_i} \log \frac{\mu(X_j \mid X_{T_i} = x_{T_i})}{\widehat{\mathsf{CO}}(X_j \mid X_{T_i} = x_{T_i})}\\ &= \mathbb E_{S_1, \ldots, S_k; i \in [k]} \left[\frac n{|S_i|} \sum_{j \in S_i} \log \frac{\mu(X_j \mid X_{T_i} = x_{T_i})}{\widehat{\mathsf{CO}}(X_j \mid X_{T_i} = x_{T_i})}\right]\\ &= \text{error}(\mu, \widehat{\mathsf{CO}}),
    \end{align*} where the factor of $\frac n{|S_i|}$ is due to the distribution of $i$ in the expectation formula for $\mathrm{error}(\mu, \widehat{\mathsf{CO}})$. 
\end{proof}

\section{Finishing the proof of Theorem~\ref{thm:elevate}}
\label{app:elevate}

\begin{proof}
    It remains to verify that the family $\calF$ constructed in Section~\ref{sec:elevate} satisfies the second part of Theorem~\ref{thm:elevate}. For this, we closely follow the proof of Theorem~\ref{thm:warmup}. By Proposition~\ref{prop:independent}, if one queries the conditional marginal oracle for $\mu^*[\calU_\subspace]$ on a partial assignment of size $< k$, then the response will be identical to the one for $\mu^*[\calU]$, and likewise if the assignment is of size $> k$ and its projection to the $\mathbb{F}^n_q$ component is incompatible with any element of $\subspace$. We will use the same \emph{hit} and \emph{miss} terminology from before.

    Let $\calD^{\mu^*}_k$ denote the distribution over $\mu^*[\calU_\subspace]$ where $\subspace$ is a random $k$-dimensional RS code. Let $\calD$ denote the mixture distribution over $\calF$ given by
    \begin{equation}
        \frac{1}{2}\delta_{\mu^*[\calU]} + \frac{1}{2n-2} \sum^{n-1}_{k=1} \calD^\mu_k\,.
    \end{equation}
    As before, let $\leaves^*\coloneqq \leaves^{\calT(\mu^*[\calU])}(\mu^*[\calU])$. We must have
    \begin{equation}
        \TV\Bigl(\mu^*[\calU], \sum_{\ell\in\leaves^*} \Pr[\calA]{\ell\mid \mu^*[\calU]}\cdot \nu_\ell\Bigr) \le 1/8\,,
    \end{equation}
    or else $\E[\mu^*\sim\calD]{\costTV_\calT(\calA;\mu)} > 1/16$.

    For any leaf node $\ell$, let $v_1\to w_1\to v_2\to\cdots\to w_{T-1} \to v_T$ denote the sequence of decision and leaf nodes along the root-to-leaf path to $\ell$, and suppose the edges $(v_i, w_i)$ are labeled with partial assignments $X_{S^{(i)}} = x^{(i)}$. If $\ell\in\leaves^*$, then the edges $(w_i, v_{i+1})$ are labeled with $\nu\otimes \mathrm{Unif}(\mathbb{F}_q)^{\otimes (n - |S^{(i)}|)}$ for some distribution $\nu$ over $\Sigma^n$.

    Let $k_1\le \cdots \le k_T$ denote the numbers $|S^{(1)}|,\ldots,|S^{(T)}|$ in sorted order. For any $\subspace$ of dimension $k > k_T$ we have $\Pr[\calA]{\ell \mid \mu^*[\calU_\subspace]} = \Pr[\calA]{\ell \mid \mu^*[\calU]}$. For $k_j < k < k_{j+1}$, by Lemma~\ref{lem:RS_balanced}, Eq.~\eqref{eq:avoid} holds as before, and if $\ell$ avoids $\subspace$, the oracles output under every query along the path is of the form $\nu\otimes \mathrm{Unif}(\mathbb{F}_q)^{\otimes (n - |S|)}$ for some distribution $\nu$. In this case, again we have $\Pr[\calA]{\ell\mid \mu^*[\calU_\subspace]} = \Pr[\calA]{\ell \mid\mu^*[\calU]}$. The same reasoning applies to $k < k_1$.

    Let us write
    \begin{equation}
        \mathbb{E}_{\subspace\sim\calD_k}\TV\Bigl(\mu^*[\calU_\subspace] , \sum_{\ell\in\leaves(\mu^*[\calU_\subspace])} \Pr[\calA]{\ell \mid \mu^*[\calU_\subspace]}\cdot \nu_\ell\Bigr) \ge 1/2 - \mathbb{E}_{\subspace\sim\calD_k} \TV\Bigl(\mu^*[\calU], \sum_{\ell \in \leaves(\mu^*[\calU_\subspace])} \Pr[\calA]{\ell\mid\mu^*[\calU_\subspace]}\cdot \nu_\ell\Bigr)
    \end{equation}
    where we used that $\TV(\mu^*[\calU], \mu^*[\calU_\subspace]) \ge \TV(\calU, \calU_\subspace) \ge 1/2$ for any proper subspace $\subspace$. We can rewrite the mixture on the right-hand side as
    \begin{multline}
        \sum_{\ell \in \leaves^*:  \text{avoids} \ \subspace} \Pr[\calA]{\ell\mid \mu^*[\calU_\subspace]}\cdot \nu_\ell + \sum_{\ell\in\leaves^*: \text{hits} \ \subspace}\Pr[\calA]{\ell\mid \mu^*[\calU_\subspace]}\cdot \nu_\ell + \sum_{\ell \in \leaves(\mu^*[\calU_\subspace])\backslash\leaves^*}\Pr[\calA]{\ell\mid\mu^*[\calU_\subspace]}\cdot \nu_\ell \\
        = \sum_{\ell\in\leaves^*} \Pr[\calA]{\ell\mid \mu^*[\calU]}\cdot \nu_\ell - \sum_{\ell\in\leaves^*: \text{hits} \ \subspace} \Pr[\calA]{\ell\mid\mu^*[\calU]}\cdot \nu_\ell + \sum_{\ell \in \leaves(\mu^*[\calU_\subspace])\backslash\leaves^*}\Pr[\calA]{\ell\mid\mu^*[\calU_\subspace]}\cdot \nu_\ell\,, \label{eq:mixture}
    \end{multline}
    where we used that for $\ell\in\leaves^*$ that avoid $\subspace$, $\Pr[\calA]{\ell\mid\mu^*[\calU]} = \Pr[\calA]{\ell\mid\mu^*[\calU_\subspace]}$, and for $\ell \in \leaves^*$ that hit $\subspace$, it must be that $\Pr[\calA]{\ell\mid \mu^*[\calU_\subspace]} = 0$ as the sampler under $\mu^*[\calU_\subspace]$ must deviate from the path that leads to $\ell$. As $\sum_{\ell\in\leaves(\mu^*[\calU_\subspace])\backslash\leaves^*}\Pr[\calA]{\ell\mid\mu^*[\calU_\subspace]} = \sum_{\ell\in\leaves^*:\text{hits} \subspace}\Pr[\calA]{\ell\mid\mu^*[\calU]}$, the TV between $\mu^*[\calU]$ and the mixture in Eq.~\eqref{eq:mixture} is thus upper bounded by $1/8 + \sum_{\ell\in\leaves^*: \text{hits} \ \subspace} \Pr[\calA]{\ell\mid \mu^*[\calU]}$, and thus
    \begin{equation}
        \mathbb{E}_{\subspace\sim\calD_k}\TV\Bigl(\mu^*[\calU_\subspace] , \sum_{\ell\in\leaves(\mu^*[\calU_\subspace])} \Pr[\calA]{\ell \mid \mu^*[\calU_\subspace]}\cdot \nu_\ell\Bigr) \ge \frac{3}{8} - \sum_{\ell\in\leaves^*: \text{hits} \ \subspace} \Pr[\calA]{\ell\mid \mu^*[\calU]}\,.
    \end{equation}

    We say that $\subspace$ is \emph{$\eta$-good} if it satisfies $\sum_{\ell\in \leaves^*: \text{hits} \subspace} \Pr[\calA]{\ell\mid \mu^*[\calU]} \le \eta$ for some $\eta > 0$. Observe that 
    \begin{align}
        \MoveEqLeft\frac{1}{n-1}\sum^{n-1}_{k=1} \Bigl\{\sum_{\ell\in\leaves^*}\Pr[\calA]{\ell\mid \mu^*[\calU]} \cdot \Pr[\subspace\sim\calD_k]{\ell \ \text{hits} \ \subspace}\Bigr\} \\
        &= \sum_{\ell\in\leaves^*} \Pr[\calA]{\ell\mid \mu^*[\calU]} \cdot \frac{1}{n-1}\sum^{n-1}_{k=1} \Pr[\subspace\sim\calD_k]{\ell \ \text{hits} \ \subspace} \\
        &\le \sum_{\ell\in\leaves^*} \Pr[\calA]{\ell \mid \mu^*[\calU]} \cdot \frac{\calT(\mu^*[\calU]) + (n - 1 - \calT(\mu^*[\calU])) \calT(\mu^*[\calU])/q}{n-1} \\
        &= \frac{\calT(\mu^*[\calU]) + (n - 1 - \calT(\mu^*[\calU])) \calT(\mu^*[\calU])/q}{n-1} \le \frac{2\calT(\mu^*[\calU])}{n - 1}
    \end{align}
    where in the second step we used that for any leaf $\ell$ at distance $2T$ from the root, there are at most $T$ dimensions $0 < k < n$ that are equal to the size of some partial assignment along the root-to-left path to $\ell$, and for all other dimensions $k$, $\Pr[\subspace\sim \calD_k]{\ell \ \text{hits} \ \subspace} \le T/q$ by Eq.~\eqref{eq:avoid}. By Markov's inequality, we conclude that for $\eta \coloneqq \frac{4\calT(\mu^*[\calU])}{n - 1} \ll 1$ (here we used the hypothesis from Eq.~\eqref{eq:assumesmall}),
    \begin{equation}
        \Pr[0<k<n, \subspace\sim\calD_k]{\subspace \ \text{is} \ \eta\text{-good}} \ge 1/2 \,.
    \end{equation}
    We conclude that
    \begin{align}
        \E[\mu\sim\calD]{\costTV_\calT(\calA;\mu)} &\ge \frac{1}{2}\cdot \Pr[0<k<n, \subspace\sim\calD_k]{\subspace \ \text{is} \ \eta\text{-good}}\cdot \Bigl(\frac{3}{8} - \eta\Bigr) \ge \frac{1}{16}\,.\qedhere
    \end{align}
\end{proof}

\end{document}